\documentclass{article}

\usepackage{microtype}
\usepackage{graphicx}
\usepackage{caption}
\usepackage{subcaption}
\usepackage{booktabs} 
\usepackage{svg}
\usepackage{hyperref}
\usepackage{multirow}

\usepackage[preprint]{icml2026}
\usepackage{amssymb}
\usepackage{bbm}
\usepackage{mathtools}
\usepackage{amsthm}
\usepackage[capitalize,noabbrev]{cleveref}
\usepackage{enumitem}
\usepackage{etoc}
\usepackage{xcolor}
\usepackage{amsmath}
\usepackage{pifont}
\usepackage{siunitx}
\usepackage[table]{xcolor}
\usepackage{etoc}
\usepackage{enumitem}

\theoremstyle{plain}
\newtheorem{theorem}{Theorem}[section]
\newtheorem{proposition}[theorem]{Proposition}
\newtheorem{lemma}[theorem]{Lemma}

\theoremstyle{definition}
\newtheorem{definition}[theorem]{Definition}
\newtheorem{assumption}[theorem]{Assumption}
\theoremstyle{remark}
\newtheorem{remark}[theorem]{Remark}

\definecolor{mygrey}{RGB}{239, 239, 239}
\newcommand{\mygrey}{\cellcolor{mygrey}}

\DeclareMathOperator{\amax}{amax}
\DeclareMathOperator{\MSE}{MSE}

\usepackage[textsize=tiny]{todonotes}
\icmltitlerunning{Dissecting Outlier Dynamics in LLM NVFP4 Pretraining}
\begin{document}

\twocolumn[
\icmltitle{Dissecting Outlier Dynamics in LLM NVFP4 Pretraining}




\icmlsetsymbol{equal}{*}
\begin{icmlauthorlist}
\icmlauthor{Peijie Dong}{equal,hkustgz}
\icmlauthor{Ruibo Fan}{equal,hkustgz}
\icmlauthor{Yuechen Tao}{alibaba}
\icmlauthor{Di Mou}{alibaba}
\icmlauthor{Wenhu Hu}{alibaba}
\icmlauthor{Zhenheng Tang}{hkust}
\icmlauthor{Yinghao Yu}{alibaba}
\icmlauthor{Jiamang Wang}{alibaba}
\icmlauthor{Wenbo Su}{alibaba}
\icmlauthor{Guodong Yang}{alibaba}
\icmlauthor{Liping Zhang}{alibaba}
\icmlauthor{Xiaowen Chu}{hkustgz}
\icmlauthor{Baochun Li}{toronto}
\icmlauthor{Bo Li}{hkust}

\end{icmlauthorlist}

\icmlaffiliation{hkustgz}{The Hong Kong University of Science and Technology (GuangZhou)}
\icmlaffiliation{hkust}{The Hong Kong University of Science and Technology}
\icmlaffiliation{alibaba}{Alibaba Group}
\icmlaffiliation{toronto}{University of Toronto}

\icmlcorrespondingauthor{Xiaowen Chu}{xwchu@hkust-gz.edu.cn}

\icmlkeywords{Linear Attention, FP4, NVFP4, Low-Precision Training, Large Language Models}

\vskip 0.3in
]

\printAffiliationsAndNotice{\icmlEqualContribution}

\begin{abstract}

Training large language models using 4-bit arithmetic enhances throughput and memory efficiency. Yet, the limited dynamic range of FP4 increases sensitivity to outliers. While NVFP4 mitigates quantization error via hierarchical microscaling, a persistent loss gap remains compared to BF16. This study conducts a longitudinal analysis of outlier dynamics across architecture during NVFP4 pretraining, focusing on \textit{where they localize, why they occur, and how they evolve temporally}. We find that, compared with Softmax Attention (\texttt{SA}), Linear Attention (\texttt{LA}) reduces per-tensor heavy tails but still exhibits persistent block-level spikes under block quantization. Our analysis attributes outliers to specific architectural components: Softmax in \texttt{SA}, gating in \texttt{LA}, and SwiGLU in \texttt{FFN}, with ``post-QK'' operations exhibiting higher sensitivity to quantization. Notably, outliers evolve from transient spikes early in training to a small set of persistent hot channels (i.e., channels with persistently large magnitudes) in later stages. Based on these findings, we introduce \textbf{\underline{H}ot-\underline{C}hannel \underline{P}atch (HCP)}, an online compensation mechanism that identifies hot channels and reinjects residuals using hardware-efficient kernels. We then develop \textbf{CHON}, an NVFP4 training recipe integrating HCP with post-QK operation protection. On GLA-1.3B model trained for 60B tokens, CHON reduces the loss gap to BF16 from 0.94\% to 0.58\% while maintaining downstream accuracy.

\end{abstract}

\section{Introduction}
\label{sec:intro}

\begin{figure}[t]
    \centering
    \includegraphics[width=0.95\linewidth]{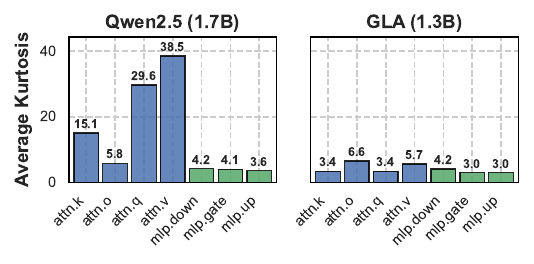}
    \vspace{-10pt}
    \caption{\textbf{Activation kurtosis of GLA vs. Qwen2.5.} Lower kurtosis values in GLA reflect reduced heavy-tailedness and fewer systematic outliers across both attention and MLP layers.}
    \vspace{-15pt}
    \label{fig:fig1_kurtosis}
\end{figure}

The computational cost associated with pretraining large language models (LLMs) continues to escalate as parameter counts, token budgets, and context lengths increase. Consequently, mixed-precision training, particularly FP8 pipelines~\cite{peng2023fp8,fishman2024scaling,liu2024deepseekv3}, has emerged as a key solution, reducing memory usage and accelerating computation while maintaining performance at scale. Nevertheless, the efficiency frontier is now increasingly determined by native support for 4-bit arithmetic, which offers further reductions in memory footprint compared to FP8 and increases peak arithmetic throughput on next-generation accelerators.

\begin{figure*}[t]
    \centering
    \includegraphics[width=1\linewidth]{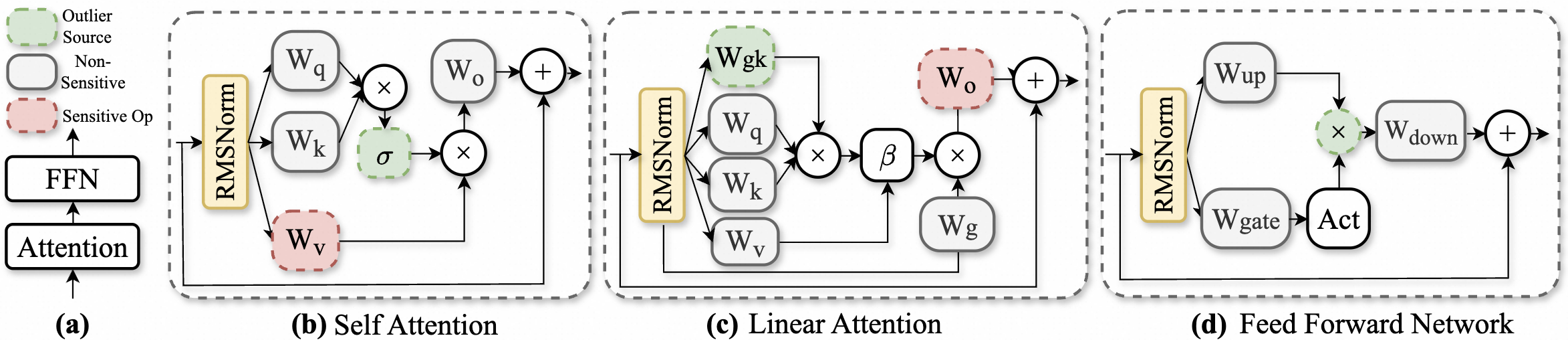}
    \vspace{-14pt}
    \caption{\textbf{Architectural overview and sensitivity analysis.} Detailed components of (a) residual blocks, (b) Softmax Attention, (c) Gated Linear Attention, and (d) gated-FFN. Color coding identifies outlier sources (green) and sensitive ``post-QK'' operations (red), providing a basis for targeted precision allocation in the CHON recipe.}
    \vspace{-10pt}
    \label{fig:component}
\end{figure*}

Transitioning from FP8 to FP4 represents a significant reduction in precision. The limited representational capacity of FP4 increases sensitivity to heavy-tailed distributions and outliers in activations and weights. Although most tensor values are well-behaved, a small fraction of outliers can saturate quantizers, producing large quantization errors that propagate through depth and destabilize training. To address this, modern hardware introduces microscaling formats~\cite{rouhani2023microscaling} like NVFP4, employing tensor-level and block-level scaling to suppress quantization noise~\cite{rouhani2023microscaling}. Nevertheless, state-of-the-art NVFP4 recipe~\cite{nvidia_recipe} still exhibit a notable loss gap relative to BF16, and this gap may increase during the late learning-rate decay stage.

A primary challenge in narrowing this gap is the dynamic nature of outliers during pretraining. Most existing outlier analyses focus on post-training quantization~\cite{dettmers2022llmint8,wei2023outlier,lee2024owq,zhao2024atom}, which are typically conducted offline on trained checkpoints, aiming to characterize static channel-wise extremes. In contrast, the distributions of internal tensors evolve during pretraining, and the interaction between outliers and block-wise quantization shifts across training stages and model architectures. Therefore, systematically dissecting these outlier dynamics is a critical yet underexplored prerequisite for enabling large-scale NVFP4 pretraining.

In this paper, we present the first longitudinal analysis of outlier dynamics across Softmax (\texttt{SA}) and Linear Attention (\texttt{LA}) architectures in NVFP4 pretraining. We use Qwen3~\cite{yang2025qwen3} and Gated Linear Attention (GLA) as representative \texttt{SA} and \texttt{LA} models, respectively.
Training runs are instrumented to monitor indicators, including kurtosis, top-$k$ magnitude, alignment, flush-to-zero, and quantization errors.
This study yields three consistent findings. \ding{172} \texttt{LA} demonstrates milder per-tensor heavy tails than \texttt{SA}, as indicated by activation kurtosis trends (Fig.~\ref{fig:fig1_kurtosis}), although localized spikes at the block level relevant for NVFP4 quantization still occur. \ding{173} We find ``post-QK'' operations (i.e. projections after QK interactions such as $W_v$ or $W_o$) are more sensitive than other operations, as shown in Fig.~\ref{fig:component}. \ding{174} Outliers transition from drifting spikes early in training to persistent hot channels (i.e., channels with large magnitude) later, indicating that they can be identified and protected without recomputation each step.

Building on these observations, we introduce Hot-Channel Patch (HCP), a lightweight mechanism that periodically identifies hot channels and compensates their quantization residuals using hardware-efficient kernels. We show that HCP yields a lower MSE upper bound in \cref{lemma:mse_upper_bound}.
HCP is integrated with the NVFP4 recipe and targeted post-QK operation protection for sensitive post-QK operations, leading to a new recipe called Compensated Hot-channel Optimization for NVFP4 (CHON). CHON reduces the NVFP4-to-BF16 loss gap from 0.94\% to 0.58\% on a 1.3B GLA model trained for 60B tokens, while maintaining downstream accuracy. Further, we evaluate CHON for supervised fine-tuning and reinforcement learning to assess transferability beyond pretraining.
Our contributions are as follows:
\begin{itemize}[leftmargin=*,itemsep=2pt,topsep=2pt]
    \item \textbf{Longitudinal outlier analysis for NVFP4 pretraining.} We track outlier dynamics throughout training and uncover a transition from transient spikes to persistent hot channels in later stages.
    \item \textbf{Architecture- and operation-level characterization.} We show that \texttt{LA} reduces global heavy tails relative to \texttt{SA}, and ``post-QK'' operations are more sensitive.
    \item \textbf{CHON recipe and Hot-Channel Patch.} We propose the CHON recipe that incorporates ``post-QK'' protection and Hot-Channel Patch (HCP) to reduce the quantization error in persistent hot channels.
\end{itemize}

\section{Related Work}
\label{sec:related_work}

\paragraph{Outliers in LLMs.} Outliers refer to data points that differ significantly from other data points in a distribution.
They are observed during both training~\cite{park2025osp, nvidia_recipe} and inference~\cite{sun2024massive, xiao2022smoothquant}.
Although outliers can be beneficial and are often necessary for model expressiveness and capability~\cite{raman2025rethinking, dettmers2022llmint8, An2025SystematicOI}, they also pose significant challenges for quantization and low-precision training.
For instance, LLM.int8()~\cite{dettmers2022llmint8} demonstrates that forcing outliers into low precision leads to performance collapse, proposing instead to retain them in FP16. 
Moreover, SmoothQuant~\cite{xiao2022smoothquant} and AWQ~\cite{lin2023awq} identify that there are more channel outliers in activations than weights. They employ smoothing factors to migrate quantization difficulty from activations to weights, balancing their respective ranges. 
However, existing studies primarily analyze outlier behavior during inference, and few studies examine how outliers evolve over the course of LLM pretraining.

\paragraph{Architectural factors shaping outliers.}
Prior analyses~\cite{An2025SystematicOI} attribute activation outliers in Transformers primarily to the Softmax and context-aware scaling within Softmax Attention. Softmax Attention combines quadratic time and memory with exponential weighting, which can amplify rare extreme dot-product scores into activation spikes~\cite{vaswani2017attention}. Linear Attention~\cite{mamba, dao2024mamba2,yang2024gated,sun2023retentive,Qin2024HGRN2GL} removes Softmax via associative or kernelized normalization and achieves $O(T)$ computation, typically with smoother activation statistics~\cite{katharopoulos2020transformers, choromanski2020performer, peng2021random}. 
There are also studies on the architectural factors beyond attention that contribute to outliers. For example, \citet{wei2023outlier} identify that the SwiGLU activation in FFN layers can produce outliers due to its quadratic weight alignment. Furthermore, Attention Sink~\cite{xiao2023efficient,openai2025gptoss120bgptoss20bmodel} can alleviate outliers in Softmax Attention based LLMs~\cite{son2024prefixing}. QK Normalization~\cite{yang2025qwen3} can also suppress outliers by normalizing the query and key vectors before dot-product attention computation.
However, how outliers evolve during low-bit LLM pretraining has not been well studied.

\paragraph{NVFP4 Pretraining.} With the advent of next-generation hardware (e.g., Blackwell GPU), NVFP4 and MXFP4 have emerged as promising formats for enhancing kernel efficiency and throughput. Recent research attempts NVFP4 pretraining. For example, MSRA~\cite{wang2025optimizing} updates the straight-through estimator with a differentiable gradient estimator during backward. AWS~\cite{aws_mxfp4} employs Random Hadamard Transform (RHT) and Stochastic Rounding (SR) during backward to ensure unbiased gradient estimates on MXFP4 pretraining. 
Chmiel et al.~\cite{intel_fp4} demonstrate that using RTN for the forward pass and SR for the backward pass stabilizes training, with an additional recovery phase in BF16 in the final stage. 
Quartet~\cite{castro2025quartet} introduces native 4-bit training on Blackwell GPUs using the MXFP4 format. By employing a recipe that combines Hadamard transforms with clipping-aware quantization, it achieves near-lossless performance relative to FP16. Metis~\cite{metis_fp4} proposes to split tensors into a small low-rank branch and quantization-friendly residual. TetraJet-v2~\cite{tetrajet} introduces unbiased double-block quantization incorporating OsciReset and OutControl mechanisms. The NVIDIA NVFP4 recipe~\cite{nvidia_recipe} establishes a robust quantization framework tailored for large-scale pretraining. When evaluated on a 12B Hybrid Mamba-Transformer model with 10T tokens, this approach maintains a 1\% loss gap relative to BF16, though this disparity expands to 1.5\% during the learning rate decay stage.

\begin{figure}[t]
    \centering
    \includegraphics[width=0.48\linewidth]{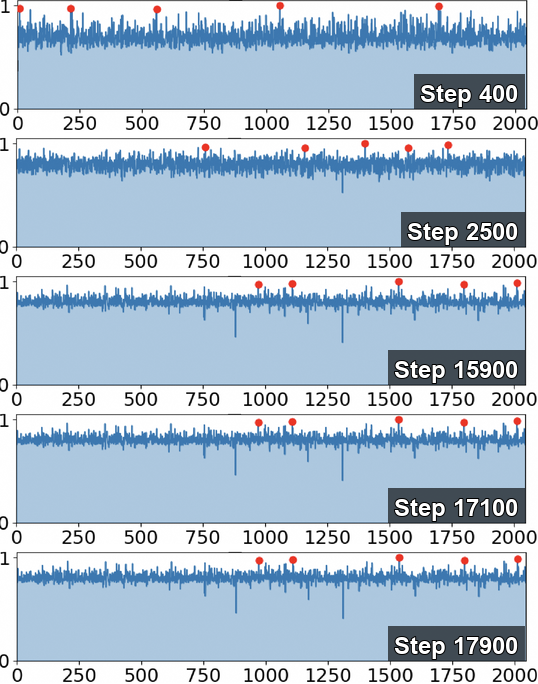}
    \hfill
    \includegraphics[width=0.48\linewidth]{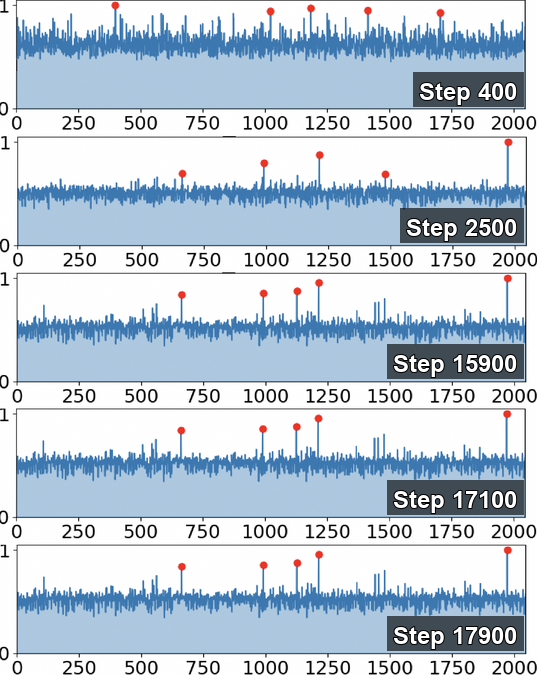}
    \vspace{-4pt}
    \caption{\textbf{Temporal evolution of activation outliers in Up and Output projections.} Red markers illustrate a clear transition from transient, drifting spikes (Step 400) to persistent, spatially fixed ``hot channels'' (Step 15k).}
    \label{fig:dynamic_upproj}
    \vspace{-6pt}
\end{figure}

\begin{figure}[t]
    \centering
    \includegraphics[width=0.995\linewidth]{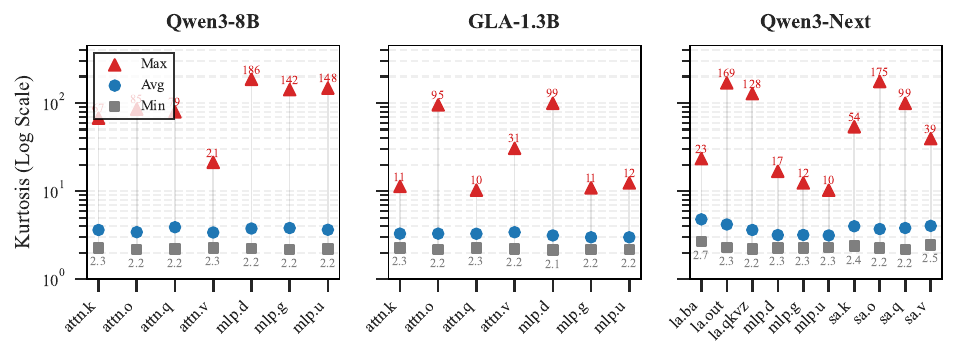}
    \vspace{-20pt}
    \caption{Comparison of block-level kurtosis for Softmax and Linear Attention. Distribution of min, avg, and max kurtosis across components. Localized ``heavy tails'' persist at the block level even when per-tensor kurtosis remains moderate.}
    \label{fig:block_kurtosis_models}
    \vspace{-10pt}
\end{figure}

\begin{figure}[t]
    \centering
    \includegraphics[width=0.99\linewidth]{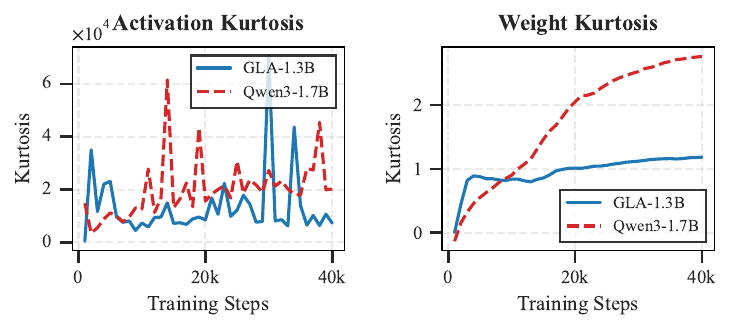}
    \vspace{-10pt}
    \caption{\textbf{Evolution of per-tensor kurtosis.} GLA-1.3B (blue) demonstrates superior weight distribution stability compared to Qwen3-1.7B (red), whereas activation kurtosis exhibits similar stochastic volatility across both architectures.}
    \label{fig:model_kurtosis}
    \vspace{-12pt}
\end{figure}

\section{Outlier Dynamics: Where, Why, and How}
\label{sec:outlier_dynamic_analysis}

\textbf{Definition.} We define \emph{outliers} as values that deviate significantly from the central mass of a distribution, typically manifesting as heavy tails. For NVFP4 pretraining, these outliers induce quantization errors, numerical instability, and performance degradation. We monitor outliers both statically (from checkpoints) and dynamically (during training).

\textbf{Flush-to-Zero (FTZ).} Under NVFP4, values are quantized to FP4 E2M1 after applying an effective encode scale per block, as detailed in App.~\ref{app:nvfp4_details}.
FTZ denotes the ratio of \emph{underflow-to-zero} events where scaled values collapse to exactly zero. For a tensor $X$ partitioned into blocks with effective encode scale $s_{\mathrm{enc},b}$, then FTZ is: 
\begin{equation}
\frac{1}{|X|} \sum_{i} \mathbbm{1} \Big\{ Q_{\mathrm{E2M1}}\big(x_i \cdot s_{\mathrm{enc},b}\big) = 0 \Big\},\nonumber
\end{equation}
where $b$ denotes the block containing element $i$ and $Q_{\mathrm{E2M1}}(\cdot)$ is the FP4 E2M1 quantizer. A higher FTZ indicates greater underflow risk and irreversible loss of small-but-nonzero signals.

\textbf{Kurtosis.}
For a variable $x$ with mean $\mu$ and standard deviation $\sigma$, excess kurtosis~\cite{Westfall2014KurtosisPeakedness} is defined as:
\begin{equation}
\kappa(x) = \frac{\mathbb{E}\big[(x-\mu)^4\big]}{\sigma^4} - 3.
\end{equation}
We monitor $\kappa$ per tensor and per block for activations and weights during training.
Heavy-tailed distributions yield larger $\kappa$, indicating higher outlier propensity and elevated risk of overflow and underflow under FP4.

\begin{figure}[t]
     \centering
     \begin{subfigure}[b]{0.485\linewidth}
         \centering
         \includegraphics[width=\linewidth]{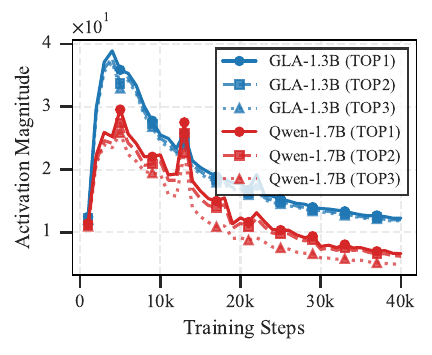}
         \vspace{-3pt}
         \caption{Activation Magnitude.}
         \label{fig:top3_mag_act}
     \end{subfigure}
     \hfill
     \begin{subfigure}[b]{0.475\linewidth}
         \centering
         \includegraphics[width=\linewidth]{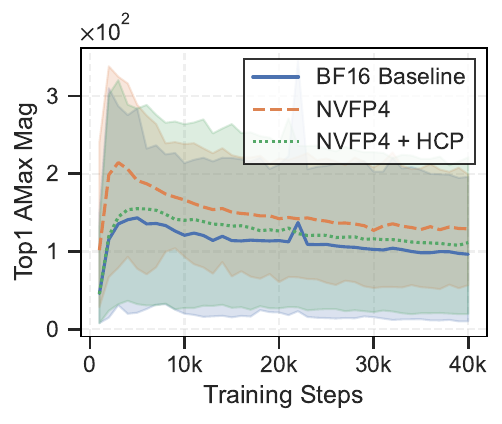}
         \vspace{-3pt}
         \caption{GK Proj. Activation.}
         \label{fig:topk_mag_training_of_gkproj}
     \end{subfigure}
     \vspace{-5pt}
     \caption{\textbf{Analysis of Activation Evolution.} (a) Top-$k$ activation magnitudes for GLA-1.3B and Qwen-1.7B. (b) Evolution of the GK projection top-1 activation under BF16, NVFP4, and HCP.}
     \label{fig:overall_comparison}
     \vspace{-10pt}
\end{figure}

\subsection{Where Do Outliers Emerge?}
\label{subsec:has_outlier}

We first locate outlier emergence across architectures and components, specifically comparing Softmax and Linear Attention architectures, and examining the sensitivity of ``post-QK'' operations.

\paragraph{\texttt{LA} exhibits lighter tails than \texttt{SA}.} We compare the per-tensor and per-block kurtosis of each component of Softmax-based Transformers (e.g., Qwen) against \texttt{LA} models (e.g., GLA). 
\textit{\ding{172} Per-tensor Kurtosis:} As shown in Fig.~\ref{fig:fig1_kurtosis}, GLA exhibits lower weight kurtosis than Qwen, indicating weaker heavy tails and fewer extreme outliers. In Qwen, Attention exhibit higher kurtosis than MLP, where the outlier comes from softmax operation~\cite{An2025SystematicOI}. (See Fig.~\ref{fig:kurtosis_raw} and Fig.~\ref{fig:kurtosis_block} for extended results.)
\textit{\ding{173} Per-block Kurtosis:} To investigate the impact on NVFP4 quantization, we analyzed kurtosis at a 16$\times$16 block. Despite the superior global stability of \texttt{LA}, the per-block spikes in Fig.~\ref{fig:block_kurtosis_models} demonstrate that \texttt{LA} still experiences high local kurtosis in specific regions, posing challenges for block-based quantization. Refer to App.~\ref{app:kurtosis} for more details.

\paragraph{``Post-QK'' Operation Sensitivity.} As shown in Fig.~\ref{fig:component}, we identify the outlier sources and sensitive operations (sensitive ops), observing that both \texttt{SA} and \texttt{LA} exhibit significant sensitivity in their ``post-QK'' components (see Tab.~\ref{tab:sensitivity_scores}). Specifically, for \texttt{SA} (e.g., Qwen3-1.7B), the value projection is the sensitive bottleneck, whereas for \texttt{LA} (e.g., GLA-1.3B), the output projection emerges as the critical sensitive operator, as shown in Tab.~\ref{tab:sensitivity_scores}. Notably, these sensitive regions are consistently situated downstream of outlier sources. In \texttt{SA}, the value projection follows the QK interaction and softmax operation, a stage where outliers are more prevalent. In \texttt{LA}, the output projection follows the $gk\_proj$ and $g\_proj$ layers, where outlier magnitudes reach their peak. As shown in Fig.~\ref{fig:topk_mag_training_of_gkproj}, the activation range of $gk$ is exceptionally broad, reaching magnitudes as high as $350$, which imposes significant numerical pressure on the subsequent Output projection. For more operation quantization error analysis, refer to App.~\ref{app:quant_error_dynamics}
Therefore, preserving these sensitive ``post-QK'' operations in higher precision can benefit low-precision pretraining.

\begin{figure}[t]
    \centering
    \includegraphics[width=0.99\linewidth]{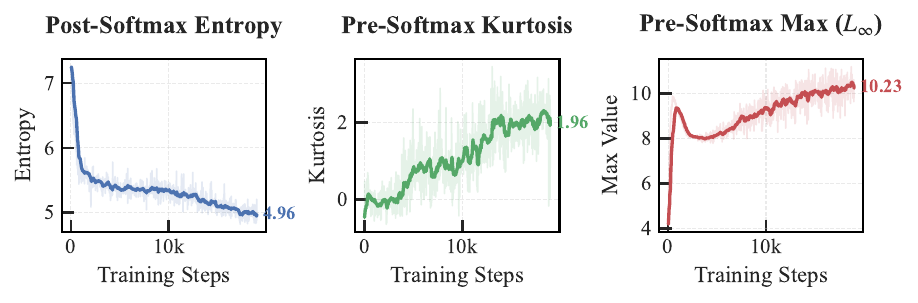}
    \caption{\textbf{Evolution of softmax-induced instability.} Increasing Pre-Softmax Kurtosis and $L_{+\infty}$ norms alongside declining entropy quantify the structural emergence of systematic outliers.}
    \vspace{-8pt}
    \label{fig:softmax_indicator}
\end{figure}

\subsection{Why Do Outliers Emerge?}
\label{subsec:why_outlier}



\paragraph{Softmax as Outlier Source in \texttt{SA}.} Prior research~\cite{An2025SystematicOI} identifies Softmax as the root cause of outliers due to its normalization constraint. Specifically, the sum-to-one requirement forces the model to adopt an extreme dynamic range to effectively suppress uninformative tokens. To assign near-zero contributions to these tokens, the model is compelled to generate Pre-Softmax values with large magnitudes. We empirically validate this mechanism by tracking three metrics in Fig.~\ref{fig:softmax_indicator}: Post-Softmax Entropy, Pre-Softmax Kurtosis, and Pre-Softmax Max. \ding{172} The decline in Post-Softmax Entropy reflects an increasing concentration of attention weights. \ding{173} The rise in Pre-Softmax Kurtosis indicates a shift toward heavy-tailed distributions in the logit space, a hallmark of outlier formation. \ding{174} The growth of the Pre-Softmax Max (from 4 to 10) confirms the intensifying logit disparity necessary to saturate the Softmax function.

\paragraph{Gating as Outlier Source in \texttt{LA}.}
Our analysis identifies the elementwise exponential function $\phi(x)=\exp(t \cdot x)$($t>2$) in the gating mechanism, specifically the $gk\_proj$ layer, as the primary source for extreme outliers in GLA. The decay factor $\lambda_t$ is derived from $gk\_proj$ via a log-sigmoid function. To achieve state resetting ($\lambda_t \approx 0$), the pre-activation input must be extremely negative (e.g., $\approx -120$), whereas long-term retention ($\lambda_t \approx 1$) requires positive saturation. This necessitates a massive dynamic range (e.g., $[-120, 80]$) that poses severe challenges for uniform FP4 quantization. As illustrated in Fig.~\ref{fig:topk_mag_training_of_gkproj}, the average Top-1 magnitude of $gk\_proj$ significantly surpasses that of other components (see App.~\ref{app:gk_outlier_analysis} for extended analysis). 


\begin{figure}[t]
    \centering
    \includegraphics[width=0.85\linewidth]{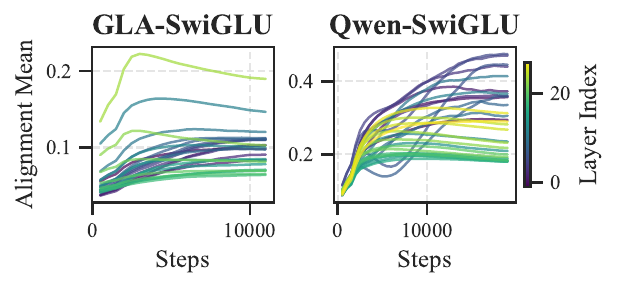}
    \vspace{-7pt}
    \caption{\textbf{SwiGLU weight cosine alignment dynamics.} Layer-wise mean cosine similarity between $W_{up}$ and $W_{gate}$ during pretraining. Higher alignment in Qwen suggests a stronger structural amplification of FFN outliers relative to GLA.}
    \vspace{-10pt}
    \label{fig:swiglu_alignment}
\end{figure}

\paragraph{SwiGLU as the Outlier Source in FFNs.} 
As a cornerstone of modern LLMs, SwiGLU is defined as $\text{SwiGLU}(x) = (x W_{\text{up}}) \odot \text{Swish}(x W_{\text{gate}})$, exhibiting quadratic growth scaling with $\mathcal{O}(\|x\|^2)$. Prolonged training with weight decay ($\ell_2$ regularization) induces weight alignment ($W_{\text{up}} \parallel W_{\text{gate}}$)~\cite{fishman2024scaling}, effectively transforming SwiGLU into a potent outlier amplifier. These quadratic spikes frequently exceed the limited dynamic range of FP4, triggering numerical overflow and training divergence. We empirically verify this mechanism in Fig.~\ref{fig:swiglu_alignment} by monitoring the cosine alignment $\frac{|W_{\text{up}, i}^\top W_{\text{gate}, i}|}{\|W_{\text{up}, i}\| \|W_{\text{gate}, i}\|}$, which shows a steady increase throughout training. Furthermore, GLA produces significantly fewer outliers than Qwen, as shown in Fig.~\ref{fig:model_kurtosis}). This is attributed to the lower attention activations in GLA, which mitigate the amplification effect on subsequent FFN layers.


\paragraph{Outlier Impact of RMSNorm.}
We analyze the impact of RMSNorm on outlier by visualizing the distribution of the learnable scaling factor $\gamma$ in Fig.~\ref{fig:norm_comparison} and App.~\ref{app:norm_impact}. We observe several key trends: 
(i) Softmax Attention yields substantially higher $\gamma$ magnitudes ($\gamma > 1$) compared to Linear Attention ($\gamma < 1$); 
(ii) smaller models, such as Qwen3-0.6B, possess larger $\gamma$ values than larger variants like Qwen3-30B-A3B; 
(iii) models within the same family (e.g., RetNet 1.3B and 2.7B) display consistent distribution patterns across scales; and 
(iv) Qwen3-Next~\cite{yang2025qwen3} exhibits generally $\gamma < 1$ across layers, suppressing the activation magnitude layer by layer.

\begin{figure}[t]
    \centering
    \includegraphics[width=0.9\linewidth]{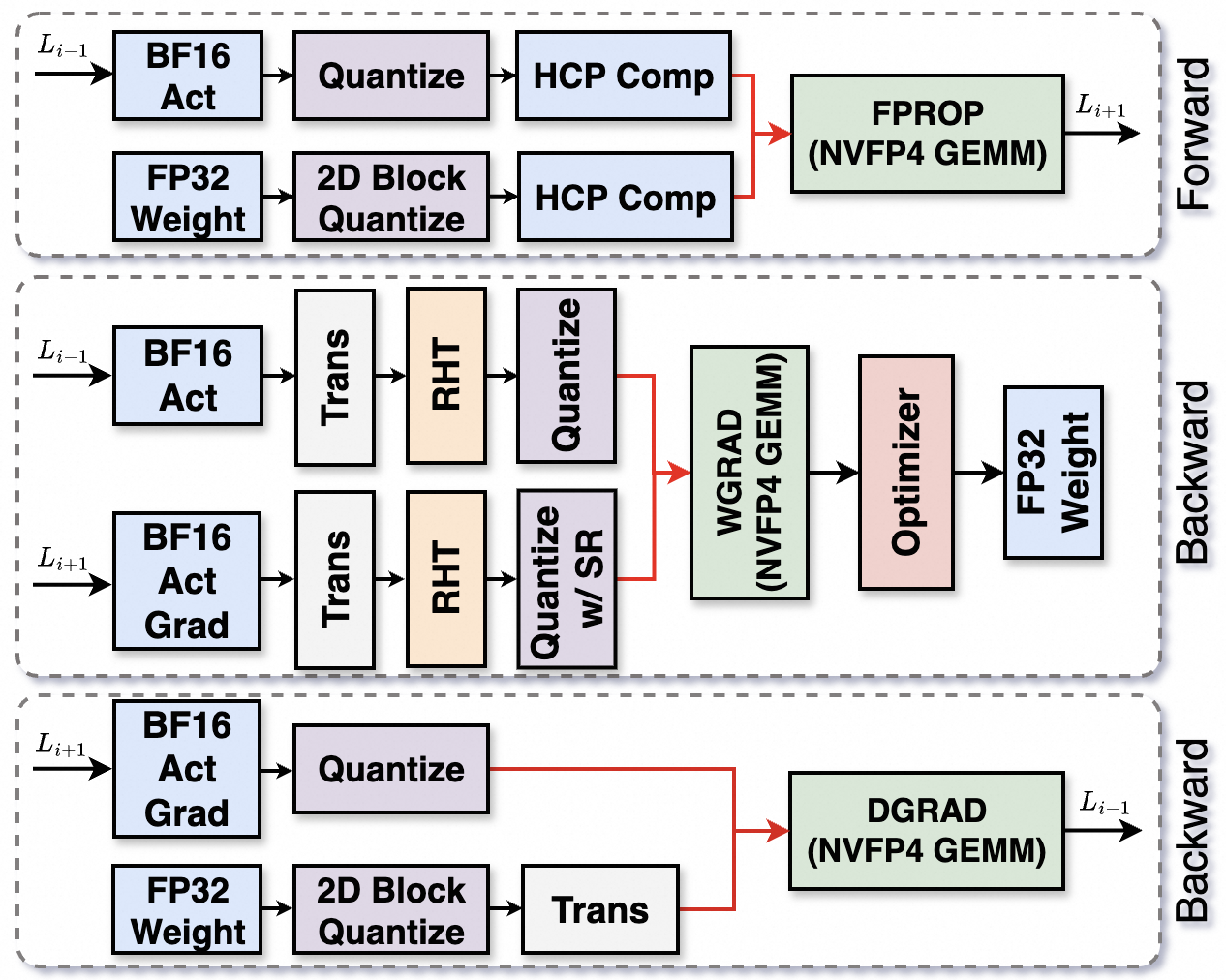}
    \vspace{-1pt}
    \caption{\textbf{Computational workflow of NVFP4-quantized linear layers.} CHON integrates Hot-Channel Patch (HCP) compensation during the forward pass and Stochastic Rounding (SR) via Random Hadamard Transform (RHT) in the backward pass.}
    \label{fig:compute_flow_nvfp4}
    \vspace{-5pt}
\end{figure}

\subsection{How Do Outliers Evolve?}
\label{subsec:evolve}

\paragraph{From Drifting Spikes to Fixed Hot Channels.} 
Early training stages (e.g., steps 400--5,400) exhibit dynamic, transient magnitude spikes that drift across channels as the model navigates the loss landscape (see Fig.~\ref{fig:dynamic_upproj}). However, as training progresses toward mid-to-late stages, the location of outliers stabilizes. Persistent hot channels (i.e., channels with persistent high magnitude) concentrate in a small subset of channels as shown in Fig.~\ref{fig:combined_outliers}. This transition from stochastic drift to structural fixation justifies the use of static, selective mitigation strategies (i.e., Hot-Channel Patch (HCP) in Sec.~\ref{sec:method}) over dynamic adaptation. This evolution is visible across checkpoints in Fig.~\ref{fig:proj_concat_voq} and App.~\ref{app:comp_dynamic}.

\paragraph{Top-$k$ Magnitude Evolution.}
Tracking the top-$k$ magnitude channels reveals a characteristic pattern: outliers start with extreme volatility and high magnitudes, then gradually stabilize and decrease (or plateau) as the model converges (see Fig.~\ref{fig:top3_mag_act}). The gap between top-1 and top-3 magnitudes narrows, indicating numerical stress diffuses; yet specific components like $gk$ projection maintain high magnitudes throughout training, behaving as permanent outliers and dominating the top-1 trajectory in Fig.~\ref{fig:topk_mag_training_of_gkproj} and App.~\ref{app:topk_mag}. 

\paragraph{Kurtosis over Training.}
Activation kurtosis remains volatile during early training: Qwen3 shows repeated spikes while GLA exhibits fewer but sharper bursts, consistent with transient outlier channels in Fig.~\ref{fig:model_kurtosis}. Weight kurtosis diverges more smoothly—Qwen3 climbs steadily and plateaus $>2$, whereas GLA rises quickly then flattens near $\approx1.1$, indicating lighter tails. The late-stage plateau suggests heavy-tailed behavior is learned and persists even after convergence.

\paragraph{Layerwise Magnitude Pattern.}
Top-$k$ magnitude is not uniform across layers. We observe a trend where outliers and high kurtosis often accumulate in the deeper layers of the network (particularly the last 4 layers). This accumulation is consistent with the observation that residual stream updates accumulate variance, and aligns with the deeper-layer peaks visible in the block aggregates of Fig.~\ref{fig:topk_mag_montage}. The final layers, responsible for mapping to the vocabulary space, often exhibit the most extreme magnitude, which reinforces why stabilizing them with high-precision is essential. 

\begin{figure}[t]
    \centering
    \includegraphics[width=.99\linewidth]{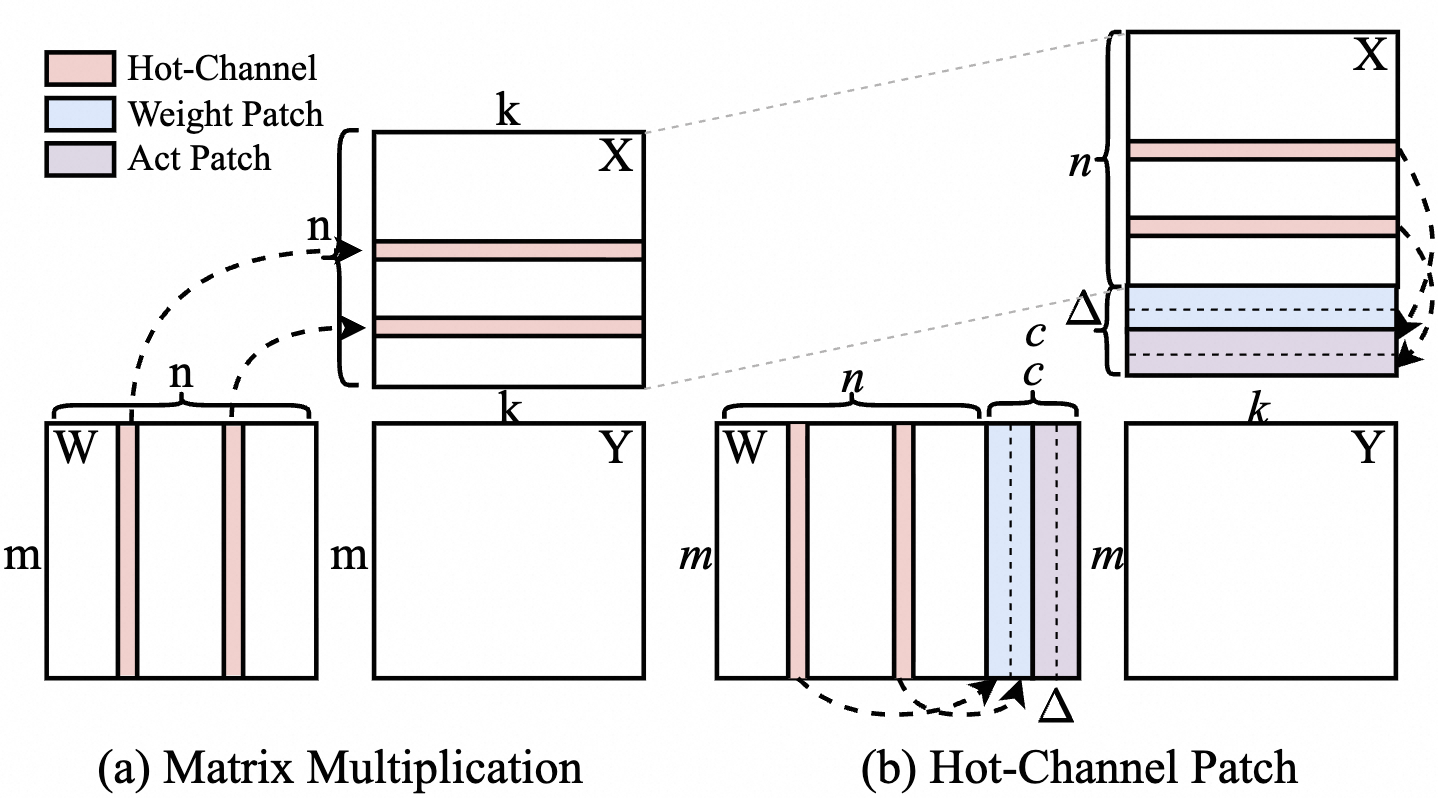}
    \vspace{-4pt}
    \caption{\textbf{Schematic of the Hot-Channel Patch (HCP) mechanism.} (a) Standard quantized GEMM. (b) HCP error compensation via residual channel injection for weights and activations.}
    \vspace{-12pt}
    \label{fig:hcp_overview}
\end{figure}

\paragraph{Flush-to-Zero Analysis.}
We track the NVFP4 flush-to-zero (FTZ) ratio for activations and weights over training. As shown in Fig.~\ref{fig:act-ftz-ablation} and \ref{fig:weight-ftz-ablation}: \ding{172} activations consistently exhibit higher FTZ than weights, while weights remain comparatively robust; \ding{173} enabling HCP reduces the activation FTZ and pulls distributions closer to the BF16; and \ding{174} gating components (e.g., g\_proj and gate\_proj) exhibits unique non-monotonic FTZ dynamics (decrease then increase), constrasting with monotonic increases elsewhere (Fig.~\ref{fig:act-ftz-ablation}). This reduced FTZ likely occurs because gating activations, designed to excite/suppress neurons, inherently avoid values near zero.
These findings align with our outlier-source analysis in Sec.~\ref{subsec:has_outlier} and App.~\ref{app:ftz_dynamic}.

\paragraph{Implications for NVFP4 Training.}
Grounded in the above analysis, we operationalize two concrete design choices on the top of NVIDIA NVFP4 Recipe~\cite{nvidia_recipe} that we develop in Sec.~\ref{sec:method} and validate in Sec.~\ref{sec:experiments}:
\textit{\ding{172} Hot-Channel Patch (HCP).} Because outlier channels drift early but become fixed hotspots later, we detect hot channels and then keep a fixed mask thereafter, using it to capture and reinject quantization error for those channels without recomputing dynamic masks each step (Fig.~\ref{fig:hcp_overview}).
\textit{\ding{173} Mixed-precision assignment from component and FTZ analysis.} We reserve higher precision for critical ``post-QK'' operations: always keep $W_{o}$ for \texttt{LA} and $W_{v}$ for \texttt{SA} to high precision. In addition, for GLA we use high precision for GK projection ($gk\_proj$); This allocation follows the observed outlier and FTZ patterns and minimizes error accumulation on these sensitive pathways.

\section{CHON Recipe}
\label{sec:method}

Building upon the outlier dynamics, we propose \textbf{C}ompensated \textbf{H}ot-channel \textbf{O}ptimization for \textbf{N}VFP4 (\textbf{CHON}), a recipe that balances stability and efficiency, as shown in Fig.~\ref{fig:compute_flow_nvfp4}. CHON recipe includes (1) NVIDIA NVFP4 recipe, (2) Hot-Channel Patch(HCP), and (3) ``post-QK'' operation protection.

\paragraph{\textbf{NVIDIA NVFP4 Recipe.}} We adopt the NVFP4 training recipe~\cite{nvidia_recipe} as a strong baseline (Fig.~\ref{fig:compute_flow_nvfp4}). Specifically, it (i) preserves the last four layers in high precision, (ii) employs 2D block scaling for weights and 1D scaling for both weights and gradients, (iii) utilizes RTN in the forward pass and SR with RHT in the backward pass.

\begin{figure}[t]
    \centering
    \includegraphics[width=.99\linewidth]{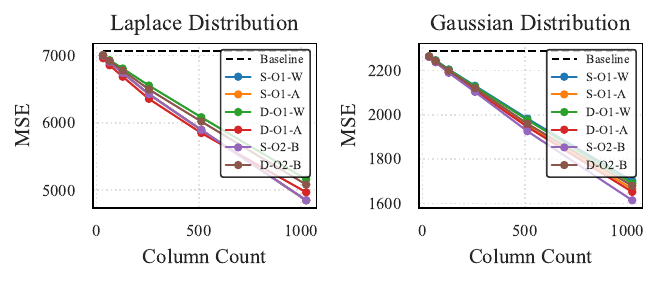}
    \vspace{-8pt}
    \caption{\textbf{HCP configuration comparison (S/D, $O2/O2$, W/A/B)}.Quantization error is evaluated against patched column counts under Laplace and Gaussian priors. \texttt{S-O2-B} consistently minimizes MSE relative to the baseline (dashed line).}
    \label{fig:hcp_recipes}
    \vspace{-8pt}
\end{figure}

\paragraph{Hot-Channel Patch (HCP).} 
Inspired by the outlier dynamics analysis (Sec.~\ref{sec:outlier_dynamic_analysis}) which reveals that outliers settle into fixed ``hot channels'', we introduce the \textbf{Hot-Channel Patch} to address the precision loss in low-bit quantization. Our approach leverages this empirical stability of outlier channels by implementing a periodic update strategy for hot channel identification, thereby optimizing the trade-off between hardware efficiency and model accuracy.

Given a linear transformation $\mathbf{Y} = \mathbf{W}^\top\mathbf{X}$ with $\mathbf{X} \in \mathbb{R}^{n\times k}$ and $\mathbf{W}^\top \in \mathbb{R}^{m\times n}$, we define the composite quantization operator $\mathcal{Q}(\cdot) = D(Q(\cdot))$, which maps floating-point values to their dequantized representations in a target format (e.g., FP4). The quantized activations and weights are denoted by $\widehat{\mathbf{X}} = \mathcal{Q}(\mathbf{X})=\mathbf{X} + \Delta \mathbf{X}$ and $\widehat{\mathbf{W}} = \mathcal{Q}(\mathbf{W})=\mathbf{W} + \Delta \mathbf{W}$ respectively, where \(\Delta \mathbf{X}\) and \(\Delta \mathbf{W}\) denote the residual errors introduced by \(\mathcal{Q}(\cdot)\).

\begin{table*}[t]
\centering
\footnotesize
\setlength{\tabcolsep}{1.8pt}
\caption{\textbf{Zero-shot performance on downstream benchmarks with 40B tokens.} $Acc_n$ denotes length-normalized accuracy; $Avg$ represents the mean accuracy across all tasks. CHON consistently narrows the quantization gap, achieving parity with BF16. }
\resizebox{.95\textwidth}{!}{
\begin{tabular}{@{}lccccccccccccccc@{}}
\toprule
\textbf{Setting} &
\multicolumn{2}{c}{\textbf{ARC-C}} &
\multicolumn{2}{c}{\textbf{ARC-E}} &
\multicolumn{2}{c}{\textbf{HellaSwag}} &
\multicolumn{2}{c}{\textbf{OBQA}} &
\multicolumn{2}{c}{\textbf{PIQA}} &
\multicolumn{2}{c}{\textbf{SciQ}} &
\textbf{Winograd} & \textbf{Avg} \\
& Acc & Acc$_{\text{n}}$ & Acc & Acc$_{\text{n}}$ & Acc & Acc$_{\text{n}}$ & Acc & Acc$_{\text{n}}$ & Acc & Acc$_{\text{n}}$ & Acc & Acc$_{\text{n}}$ & Acc & \\
\midrule

\multicolumn{15}{c}{\mygrey \textbf{GLA-340M}} \\
BF16 & 29.3{\scriptsize$\pm$1.3} & 31.4{\scriptsize$\pm$1.4} & 63.9{\scriptsize$\pm$1.0} & 58.5{\scriptsize$\pm$1.0} & 37.9{\scriptsize$\pm$0.5} & 47.0{\scriptsize$\pm$0.5} & 24.6{\scriptsize$\pm$1.9} & 36.2{\scriptsize$\pm$2.1} & 70.1{\scriptsize$\pm$1.1} & 70.5{\scriptsize$\pm$1.1} & 85.9{\scriptsize$\pm$1.1} & 78.6{\scriptsize$\pm$1.3} & 53.3{\scriptsize$\pm$1.4} & 52.9 \\
FP8 & 27.9{\scriptsize$\pm$1.3} & 28.4{\scriptsize$\pm$1.3} & 62.6{\scriptsize$\pm$1.0} & 54.2{\scriptsize$\pm$1.0} & 35.8{\scriptsize$\pm$0.5} & 43.2{\scriptsize$\pm$0.5} & 24.4{\scriptsize$\pm$1.9} & 34.8{\scriptsize$\pm$2.1} & 67.5{\scriptsize$\pm$1.1} & 68.4{\scriptsize$\pm$1.1} & 84.0{\scriptsize$\pm$1.2} & 74.7{\scriptsize$\pm$1.4} & 52.2{\scriptsize$\pm$1.4} & 50.6 \\
NVFP4 & \textbf{28.2{\scriptsize$\pm$1.3}} & 30.7{\scriptsize$\pm$1.4} & 62.6{\scriptsize$\pm$1.0} & 56.1{\scriptsize$\pm$1.0} & 36.9{\scriptsize$\pm$0.5} & 45.3{\scriptsize$\pm$0.5} & 23.8{\scriptsize$\pm$1.9} & 35.8{\scriptsize$\pm$2.1} & \textbf{69.0{\scriptsize$\pm$1.1}} & 68.0{\scriptsize$\pm$1.1} & 84.4{\scriptsize$\pm$1.1} & 78.0{\scriptsize$\pm$1.3} & 53.8{\scriptsize$\pm$1.4} & 51.7 \\
CHON & 28.0{\scriptsize$\pm$1.3} & \textbf{30.8{\scriptsize$\pm$1.4}} & \textbf{63.0{\scriptsize$\pm$1.0}} & \textbf{57.2{\scriptsize$\pm$1.0}} & \textbf{37.2{\scriptsize$\pm$0.5}} & \textbf{46.7{\scriptsize$\pm$0.5}} & \textbf{24.6{\scriptsize$\pm$1.9}} & \textbf{36.4{\scriptsize$\pm$2.1}} & 68.3{\scriptsize$\pm$1.1} & \textbf{68.4{\scriptsize$\pm$1.1}} & \textbf{86.5{\scriptsize$\pm$1.1}} & \textbf{80.7{\scriptsize$\pm$1.2}} & \textbf{54.1{\scriptsize$\pm$1.4}} & \textbf{52.5} \\[0.6ex]
\midrule

\multicolumn{15}{c}{\mygrey \textbf{Gated-DeltaNet-340M}} \\
BF16 & 17.6{\scriptsize$\pm$1.1} & 22.9{\scriptsize$\pm$1.2} & 41.1{\scriptsize$\pm$1.0} & 38.5{\scriptsize$\pm$1.0} & 27.0{\scriptsize$\pm$0.4} & 28.1{\scriptsize$\pm$0.4} & 14.0{\scriptsize$\pm$1.6} & 27.2{\scriptsize$\pm$2.0} & 57.7{\scriptsize$\pm$1.1} & 57.3{\scriptsize$\pm$1.1} & 70.4{\scriptsize$\pm$1.4} & 62.0{\scriptsize$\pm$1.5} & 50.0{\scriptsize$\pm$1.4} & 39.5 \\
FP8 & 26.6{\scriptsize$\pm$1.3} & 29.0{\scriptsize$\pm$1.3} & 60.0{\scriptsize$\pm$1.0} & 53.3{\scriptsize$\pm$1.0} & 35.0{\scriptsize$\pm$0.5} & 42.6{\scriptsize$\pm$0.5} & 24.4{\scriptsize$\pm$1.9} & 35.6{\scriptsize$\pm$2.1} & 67.7{\scriptsize$\pm$1.1} & 67.7{\scriptsize$\pm$1.1} & 84.2{\scriptsize$\pm$1.1} & 77.0{\scriptsize$\pm$1.3} & 51.8{\scriptsize$\pm$1.4} & 50.4 \\
NVFP4 & 28.6{\scriptsize$\pm$1.3} & 32.9{\scriptsize$\pm$1.4} & \textbf{64.2{\scriptsize$\pm$1.0}} & 57.2{\scriptsize$\pm$1.0} & 38.0{\scriptsize$\pm$0.5} & 48.0{\scriptsize$\pm$0.5} & 25.2{\scriptsize$\pm$1.9} & \textbf{36.8{\scriptsize$\pm$2.2}} & \textbf{69.4{\scriptsize$\pm$1.1}} & 68.7{\scriptsize$\pm$1.1} & 86.2{\scriptsize$\pm$1.1} & 81.4{\scriptsize$\pm$1.2} & \textbf{54.5{\scriptsize$\pm$1.4}} & 53.2 \\
CHON & \textbf{29.7{\scriptsize$\pm$1.3}} & \textbf{33.5{\scriptsize$\pm$1.4}} & 63.3{\scriptsize$\pm$1.0} & \textbf{57.4{\scriptsize$\pm$1.0}} & \textbf{38.2{\scriptsize$\pm$0.5}} & \textbf{48.0{\scriptsize$\pm$0.5}} & \textbf{26.0{\scriptsize$\pm$2.0}} & 36.2{\scriptsize$\pm$2.1} & 69.2{\scriptsize$\pm$1.1} & \textbf{69.2{\scriptsize$\pm$1.1}} & \textbf{87.2{\scriptsize$\pm$1.1}} & \textbf{82.1{\scriptsize$\pm$1.2}} & 53.0{\scriptsize$\pm$1.4} & \textbf{53.3} \\[0.6ex]
\midrule

\multicolumn{15}{c}{\mygrey \textbf{GSA-1B}} \\
BF16 & 18.9{\scriptsize$\pm$1.1} & 21.8{\scriptsize$\pm$1.2} & 35.1{\scriptsize$\pm$1.0} & 33.8{\scriptsize$\pm$1.0} & 26.1{\scriptsize$\pm$0.4} & 25.3{\scriptsize$\pm$0.4} & 13.8{\scriptsize$\pm$1.5} & 25.0{\scriptsize$\pm$1.9} & 55.4{\scriptsize$\pm$1.2} & 54.4{\scriptsize$\pm$1.2} & 40.9{\scriptsize$\pm$1.6} & 42.6{\scriptsize$\pm$1.6} & 49.5{\scriptsize$\pm$1.4} & 34.0 \\
FP8 & 18.9{\scriptsize$\pm$1.1} & 21.8{\scriptsize$\pm$1.2} & 35.1{\scriptsize$\pm$1.0} & 33.8{\scriptsize$\pm$1.0} & 26.1{\scriptsize$\pm$0.4} & 25.3{\scriptsize$\pm$0.4} & 13.8{\scriptsize$\pm$1.5} & \textbf{25.0{\scriptsize$\pm$1.9}} & 55.4{\scriptsize$\pm$1.2} & \textbf{54.3{\scriptsize$\pm$1.2}} & 40.9{\scriptsize$\pm$1.6} & 42.5{\scriptsize$\pm$1.6} & 49.5{\scriptsize$\pm$1.4} & 34.0 \\
NVFP4 & \textbf{19.0{\scriptsize$\pm$1.2}} & \textbf{23.6{\scriptsize$\pm$1.2}} & 36.2{\scriptsize$\pm$1.0} & 34.6{\scriptsize$\pm$1.0} & \textbf{26.5{\scriptsize$\pm$0.4}} & 26.2{\scriptsize$\pm$0.4} & 15.0{\scriptsize$\pm$1.6} & 24.6{\scriptsize$\pm$1.9} & \textbf{55.6{\scriptsize$\pm$1.2}} & 53.4{\scriptsize$\pm$1.2} & 40.8{\scriptsize$\pm$1.6} & 40.4{\scriptsize$\pm$1.6} & \textbf{51.5{\scriptsize$\pm$1.4}} & 34.4 \\
CHON & 18.2{\scriptsize$\pm$1.1} & 22.3{\scriptsize$\pm$1.2} & \textbf{37.2{\scriptsize$\pm$1.0}} & \textbf{35.7{\scriptsize$\pm$1.0}} & 25.9{\scriptsize$\pm$0.4} & \textbf{26.3{\scriptsize$\pm$0.4}} & \textbf{15.8{\scriptsize$\pm$1.6}} & 24.4{\scriptsize$\pm$1.9} & 55.5{\scriptsize$\pm$1.2} & 53.2{\scriptsize$\pm$1.2} & \textbf{45.9{\scriptsize$\pm$1.6}} & \textbf{45.8{\scriptsize$\pm$1.6}} & 49.7{\scriptsize$\pm$1.4} & \textbf{35.1} \\[0.6ex]
\midrule

\multicolumn{15}{c}{\mygrey \textbf{Qwen3-1.7B}} \\
BF16 & 20.4{\scriptsize$\pm$1.2} & 22.5{\scriptsize$\pm$1.2} & 31.8{\scriptsize$\pm$1.0} & 30.2{\scriptsize$\pm$0.9} & 25.8{\scriptsize$\pm$0.4} & 26.3{\scriptsize$\pm$0.4} & 12.2{\scriptsize$\pm$1.5} & 24.6{\scriptsize$\pm$1.9} & 54.9{\scriptsize$\pm$1.2} & 52.6{\scriptsize$\pm$1.2} & 35.4{\scriptsize$\pm$1.5} & 31.3{\scriptsize$\pm$1.5} & 49.6{\scriptsize$\pm$1.4} & 32.1 \\
FP8 & 20.2{\scriptsize$\pm$1.2} & 22.6{\scriptsize$\pm$1.2} & 40.5{\scriptsize$\pm$1.0} & 38.0{\scriptsize$\pm$1.0} & 26.4{\scriptsize$\pm$0.4} & 27.7{\scriptsize$\pm$0.4} & 14.4{\scriptsize$\pm$1.6} & 25.4{\scriptsize$\pm$1.9} & 57.0{\scriptsize$\pm$1.2} & 54.3{\scriptsize$\pm$1.2} & 49.8{\scriptsize$\pm$1.6} & 45.4{\scriptsize$\pm$1.6} & \textbf{52.3{\scriptsize$\pm$1.4}} & 36.5 \\
NVFP4 & \textbf{20.4{\scriptsize$\pm$1.2}} & \textbf{24.3{\scriptsize$\pm$1.2}} & 26.6{\scriptsize$\pm$0.9} & 26.9{\scriptsize$\pm$0.9} & 25.7{\scriptsize$\pm$0.4} & 25.4{\scriptsize$\pm$0.4} & 12.0{\scriptsize$\pm$1.5} & 24.2{\scriptsize$\pm$1.9} & 53.3{\scriptsize$\pm$1.2} & 50.0{\scriptsize$\pm$1.2} & 21.8{\scriptsize$\pm$1.3} & 24.1{\scriptsize$\pm$1.4} & 52.1{\scriptsize$\pm$1.4} & 29.7 \\
CHON & 20.1{\scriptsize$\pm$1.2} & 23.6{\scriptsize$\pm$1.2} & \textbf{45.3{\scriptsize$\pm$1.0}} & \textbf{40.5{\scriptsize$\pm$1.0}} & \textbf{27.3{\scriptsize$\pm$0.4}} & \textbf{29.4{\scriptsize$\pm$0.4}} & \textbf{14.8{\scriptsize$\pm$1.6}} & \textbf{27.8{\scriptsize$\pm$2.0}} & \textbf{57.5{\scriptsize$\pm$1.1}} & \textbf{58.3{\scriptsize$\pm$1.1}} & \textbf{56.1{\scriptsize$\pm$1.6}} & \textbf{49.5{\scriptsize$\pm$1.6}} & 50.7{\scriptsize$\pm$1.4} & \textbf{38.5} \\[0.6ex]

\bottomrule
\end{tabular}}
\vspace{-7pt}
\label{tab:comprehensive_benchmarks}
\end{table*}

\begin{proposition}[Error decomposition of a quantized product]
\label{prop:hcp_decomposition}
The quantized (low-precision, LP) product admits the following exact decomposition to approximate high-precision (HP). See Lemma.~\ref{lem:no_comp_baseline} for detailed derivation.
\begin{equation}
\label{eq:hcp_decomposition}
\underbrace{\widehat{\mathbf{W}}^\top\widehat{\mathbf{X}}}_{\text{LP term}}
= \underbrace{\mathbf{W}^\top\mathbf{X}}_{\text{HP term}}
+ \underbrace{\mathbf{W}^\top\Delta\mathbf{X} + \Delta\mathbf{W}^\top\mathbf{X}}_{\text{first-order error}}
+ \underbrace{\Delta\mathbf{W}^\top\Delta\mathbf{X}}_{\text{second-order error}} \nonumber
\end{equation}
\end{proposition}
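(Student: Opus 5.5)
The proof is a purely algebraic expansion using bilinearity of the matrix product; no property of the quantizer $\mathcal{Q}$ is needed. The only inputs are the definitions of the residuals, $\Delta\mathbf{X} := \widehat{\mathbf{X}} - \mathbf{X}$ and $\Delta\mathbf{W} := \widehat{\mathbf{W}} - \mathbf{W}$. These are well defined because $\mathcal{Q} = D\circ Q$ returns dequantized real-valued matrices of the same shapes as $\mathbf{X}\in\mathbb{R}^{n\times k}$ and $\mathbf{W}\in\mathbb{R}^{n\times m}$. Hence $\widehat{\mathbf{X}} = \mathbf{X}+\Delta\mathbf{X}$ and $\widehat{\mathbf{W}} = \mathbf{W}+\Delta\mathbf{W}$ hold identically, with no approximation.

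The plan proceeds in three steps.

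First, substitute these identities and use linearity of the transpose:
\begin{equation*}
\widehat{\mathbf{W}}^\top = \mathbf{W}^\top + \Delta\mathbf{W}^\top .
\end{equation*}

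Second, expand the product by distributivity of matrix multiplication over addition, on the left and then on the right:
\begin{equation*}
(\mathbf{W}^\top + \Delta\mathbf{W}^\top)(\mathbf{X}+\Delta\mathbf{X}) = \mathbf{W}^\top\mathbf{X} + \mathbf{W}^\top\Delta\mathbf{X} + \Delta\mathbf{W}^\top\mathbf{X} + \Delta\mathbf{W}^\top\Delta\mathbf{X}.
\end{equation*}

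Third, group the terms by how many residual factors each contains. Zero residuals gives the HP term, one residual gives the first-order error, and two residuals gives the second-order error. This is exactly the decomposition in \cref{prop:hcp_decomposition}. Since only ring identities in $\mathbb{R}^{m\times k}$ are used, the equality is exact rather than a Taylor-type approximation. The labels ``first-order'' and ``second-order'' refer only to the degree in $(\Delta\mathbf{W},\Delta\mathbf{X})$.

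There is essentially no obstacle here. The one point needing care is to keep the order of the factors, since matrix multiplication is not commutative. In particular, the cross term must be written $\Delta\mathbf{W}^\top\mathbf{X}$ and not $\mathbf{X}\Delta\mathbf{W}^\top$, with dimensions $m\times n$ times $n\times k$. It is also worth noting that the identity holds blockwise, because NVFP4 block scaling only changes the values of $\Delta\mathbf{X}$ and $\Delta\mathbf{W}$, not their shapes. This blockwise form is what later lets HCP cancel the residual terms restricted to hot channels.
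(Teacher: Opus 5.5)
Your proof is correct and is the same argument as the paper's: Lemma~\ref{lem:no_comp_baseline} substitutes $\widehat{\mathbf{W}}=\mathbf{W}+\Delta\mathbf{W}$ and $\widehat{\mathbf{X}}=\mathbf{X}+\Delta\mathbf{X}$ and expands the product by bilinearity, writing it on the hot-channel restrictions $\mathbf{W}_{\mathcal{I}},\mathbf{X}_{\mathcal{I}}$, which is the same identity. Your explicit convention $\Delta\mathbf{X}=\widehat{\mathbf{X}}-\mathbf{X}$ is the one needed for the signs as stated; the appendix's alternative definition $\Delta\mathbf{X}=\mathbf{X}-\widehat{\mathbf{X}}$ would flip the signs of the first-order terms.
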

HCP offers three dimensions for error compensation: \ding{172} \textbf{Mode} (\texttt{S/D}) determines the kernel implementation—\textit{Single(S)} fuses residual channels with one-pass GEMM, while \textit{Dual(D)} 
computes the GEMM and residual correction in separate kernels, as presented in Alg.~\ref{algo:hcp}. 
\ding{173} \textbf{Order} specifies recovery scope: \textit{second-order} (\texttt{O2}) targets only $\Delta\mathbf{W}^\top\Delta\mathbf{X}$, whereas \textit{first-order}(\texttt{O1}) incorporates $\widehat{\mathbf{W}}^\top\Delta\mathbf{X}$ or $\Delta\mathbf{W}^\top\widehat{\mathbf{X}}$; \ding{174} \textbf{Target} selects the residual source among weights (\texttt{W}), activations(\texttt{A}), or both(\texttt{B}). (Refer to App.~\ref{app:hcp_config_details} for details.)

We evaluate the quantization error under Laplace and Gaussian priors across HCP configurations (Fig.~\ref{fig:hcp_recipes}) and select \texttt{S-O2-B}: a single-kernel, second-order scheme that patches both weights and activations. Specifically, let $\mathcal{I} \subseteq \{1,\dots,D\}$ with $|\mathcal{I}|=k$ denote the channels to patch. For each channel $j$, we compute its importance score as:
\begin{equation}
\label{eq:hcp_score}
s_j = \frac{1}{k}\left\|\Delta\mathbf{X}_{j,:}\right\|_1+\frac{1}{m}\,\left\|\Delta\mathbf{W}_{:,j}\right\|_1.
\end{equation}
where the $\ell_1$-norms quantify the sensitivity of activations and weights to quantization perturbations. The \textit{hot channels} are determined by the top-$k$ highest scores: $\mathcal{I} = \mathrm{top\text{-}}k(\mathbf{s})$.

We further justify the necessity of \texttt{S-O2-B} through the error analysis in App.~\ref{app:mse_bounds}. 
In practice, Hot-channel Patch (HCP) (a.k.a. \texttt{S-O2-B}) is illustrated in Fig.~\ref{fig:hcp_overview}. 
Specifically, we add a weight patch $\Delta_w$ and an activation patch $\Delta_a$ to construct
the weight matrix $\mathbf{W}' = [\mathbf{W}; \Delta\mathbf{W}_{\mathcal{I}}; \widehat{\mathbf{W}}_{\mathcal{I}}]$
and the activation matrix $\mathbf{X}' = [\mathbf{X}; \widehat{\mathbf{X}}_{\mathcal{I}}; \Delta\mathbf{X}_{\mathcal{I}}]$,
where $\Delta\mathbf{X}_{\mathcal{I}}$ and $\Delta\mathbf{W}_{\mathcal{I}}$ denote the residuals restricted to the index set $\mathcal{I}$. 
We provide a derivation in App.~\ref{app:residual_comp_detail}, showing that this construction yields a lower bound on the quantization error. 
HCP then approximates the full-precision product as:
\begin{equation}
\label{eq:hcp_y_simple}
\mathbf{Y}^{\text{HCP}}_\mathcal{I} = \mathbf{W}_{\mathcal{I}}^\top \mathbf{X}_{\mathcal{I}}
-\Delta \mathbf{W}_{\mathcal{I}}^\top\Delta \mathbf{X}_{\mathcal{I}}
\end{equation}
This approximation yields an error bounded by the second-order term, making it significantly closer to the full-precision $\mathbf{W}^\top\mathbf{X}$ than the baseline. Because $k$ is significantly smaller than $D$, the extra computational cost scales linearly with $O(k)$, ensuring the method remains highly efficient.
Unlike PTQ methods such as DecDEC~\cite{Park2024DecDECAS} and BiLLM~\cite{Huang2024BiLLMPT}, HCP operates online during training. Compared with low-rank-plus-residual Metis~\cite{metis_fp4} and single-sided activation compensation in TetraJet-v2~\cite{tetrajet}, HCP simultaneously patches weight and activation residuals while keeping the patched matmul in low precision; our ablation study in Fig.~\ref{fig:mse_error} shows this dual-side recovery yields lower MSE than activation-only fixes. HCP also benefits from fixed nature of hot-channels. and hardware-friendly concatenated kernels as shown in Alg.~\ref{algo:hcp}.

\paragraph{\textbf{Mixed-Precision for Post-QK Operations.}} NVIDIA NVFP4 recipe~\cite{nvidia_recipe} employs a mixed-precision strategy by excluding attention, \texttt{lm\_head}, embeddings, and last 4 layers. Building on it, our analysis in Fig.~\ref{fig:mse_error} reveals that ``post-QK'' operations exhibit fat tails and wide dynamic range, making them particularly hard to quantize. Thus, we extend the mixed-precision approach by additionally preserving $W_o$ for \texttt{LA} and $W_{\text{v}}$ for \texttt{SA} in BF16.

\section{Experiments}
\label{sec:experiments}

\paragraph{Experimental Setup.}
\textbf{\textit{Infrastructure.}} We train using Flame~\cite{yang2025flame} with Fully Sharded Data Parallel (FSDP)~\cite{zhao2023pytorch_fsdp} that enables efficient parameter sharding and activation checkpointing. We employ NVFP4 kernel from TransformerEngine~\cite{transformer_engine}.
\textbf{\textit{Architecture.}}
We evaluate three \texttt{LA} architectures: Gated Linear Attention (GLA)~\cite{yang2024gated}, Gated Slot Attention (GSA)~\cite{zhang2024gsa}, and Gated DeltaNet~\cite{schlag2021linear}. All models are pretrained from scratch at scales from 340M to 7B.
\textbf{\textit{Evaluation.}} We evaluate zero-shot performance on a standard suite of commonsense reasoning and language understanding benchmarks using lm-harness-evaluation~\cite{gao2021framework}, including ARC-Challenge, ARC-Easy~\cite{clark2018think}, HellaSwag~\cite{zellers2019hellaswag}, OpenBookQA~\cite{mihaylov2018can}, PIQA~\cite{bisk2020piqa}, SciQ~\cite{welbl2017crowdsourcing}, and WinoGrande~\cite{sakaguchi2021winogrande}.

\begin{table}[t]
\centering
\caption{\textbf{Final loss and relative gap to BF16 baseline (60B tokens).} Results for GLA-1.3B are sorted by ascending loss gap.}
\label{tab:loss-gap-analysis}
\small 
\resizebox{.85\linewidth}{!}{
\begin{tabular}{lcc}
\toprule
\textbf{Configuration} & {\textbf{Final Loss}} & {\textbf{Loss Gap (\%)}} \\ 
\midrule
BF16 baseline & 2.168596 & 0.000 \\
\midrule
NVFP4 w/ HCP (full) & 2.181415 & 0.588 \\
NVFP4 w/ HCP w/o sr & 2.183048 & 0.662 \\
NVFP4 w/ HCP w/o rht & 2.184524 & 0.729 \\
NVFP4 w/ HCP w/o 2d scaling & 2.184868 & 0.745 \\
NVFP4 w/ HCP w/o sr, rht & 2.187204 & 0.851 \\
NVFP4 w/ HCP w/o last4 & 2.188535 & 0.911 \\
NVFP4~\cite{nvidia_recipe} & 2.189147 & 0.939 \\
NVFP4 w/ HCP w/o chon, rht & 2.191353 & 1.039 \\
\bottomrule
\end{tabular}}
\vspace{-8pt}
\end{table}

\paragraph{Training Details.}
Models are pretrained on a RedPajama subset~\cite{together2023redpajama}. We train GLA-1.3B on 60B tokens for Tab.~\ref{tab:loss-gap-analysis} and Fig.~\ref{fig:ablation_loss}, and on 40B tokens for Tab.~\ref{tab:comprehensive_benchmarks}.
We use the AdamW optimizer~\cite{loshchilov2017decoupled} with $\beta_1 = 0.9$, $\beta_2 = 0.95$, a peak learning rate of $3 \times 10^{-4}$, and a cosine decay schedule with 2,000-step linear warmup.
We employ a global batch size of 4M tokens and a sequence length of 4,096. Training is conducted using FSDP exclusively (without tensor or pipeline parallelism) and without gradient accumulation.
The gradient-norm clipping threshold is 1.
Hyperparameters are matched across BF16, FP8, and NVFP4 runs; see App.~\ref{app:training_recipe} for the detailed FP4 training recipe and App.~\ref{app:exp_setup} for additional experimental setups.

\paragraph{Downstream Tasks Evaluation.} These validations indicate that CHON achieves competitive quality while delivering meaningful system-level benefits.
As shown in Tab.~\ref{tab:comprehensive_benchmarks}, CHON achieves downstream scores nearly on par with BF16 baselines across GSA, GLA, Gated DeltaNet, and Qwen, indicating minimal quality degradation compared with BF16.
All \texttt{LA} variants remain competitive under NVFP4, supporting the hypothesis that their smoother activation dynamics are a favorable substrate for low-precision training. Refer to App.~\ref{app:down_steam_tasks} for more results.

\begin{table}[t]
\centering
\caption{\textbf{Sensitivity scores comparison across operators.}}
\label{tab:sensitivity_scores}
\vspace{-5pt}
\resizebox{\linewidth}{!}{%
\begin{tabular}{l ccccc ccc}
\toprule
& \multicolumn{5}{c}{Attention Operators} & \multicolumn{3}{c}{MLP Operators} \\
\cmidrule(lr){2-6} \cmidrule(lr){7-9}
Model & Attn.q & Attn.k & Attn.v & Attn.o & Attn.g & Mlp.u & Mlp.g & Mlp.d \\
\midrule
GLA-1.3B  & 0.053 & 0.057 & 0.025 & \textbf{0.058} & 0.023 & 0.034 & 0.029 & 0.012 \\
Qwen-1.7B & 0.022 & 0.025 & \textbf{0.143} & 0.020 & --- & 0.039 & 0.014 & 0.020 \\
\bottomrule
\end{tabular}
}
\vspace{-14pt}
\end{table}

\paragraph{Efficiency.} 
To ensure training efficiency, we devise fused \texttt{Triton} kernels that integrate residual compensation and concatenation into a single operation to reduce overhead. Compared to the NVFP4 baseline (see App.~\ref{app:exp_details}), CHON introduces a modest end-to-end training overhead of 12.88\% for Qwen3-1.7B and 8.6\% for GLA-1.3B. At the kernel level, as detailed in Tab.~\ref{tab:chon-breakdown}, the average overhead is only 5.27\% for post-fusion kernels and 16.15\% for pre-fusion kernels. Refer to App.~\ref{app:exp_details} for more details.

\begin{figure}[t]
    \centering
    \includegraphics[width=.99\linewidth]{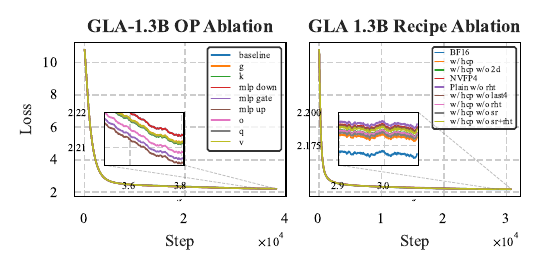}
    \vspace{-14pt}
    \caption{\textbf{Ablation of CHON recipe.} Training loss for GLA-1.3B (60B tokens). Integrating HCP, SR, and RHT is essential for minimizing the NVFP4-to-BF16 performance gap.}
    \vspace{-13pt}
    \label{fig:ablation_loss}
\end{figure}

\paragraph{Ablation Study.} 
We conduct ablation study for recipe and operator under same settings.
\textbf{(1) Recipe:} Fig.~\ref{fig:ablation_loss} (right) shows that removing HCP/SR/RHT or scaling worsens FP4 training; HCP gives the largest gain, reducing the NVFP4-to-BF16 loss gap from \textbf{0.939\%} to \textbf{0.588\%} (Tab.~\ref{tab:loss-gap-analysis}).
\textbf{(2) Operator:} While \texttt{mlp\_up}/\texttt{mlp\_gate} appear dominant in Fig.~\ref{fig:ablation_loss} (left) due to parameter size, the parameter-normalized sensitivity score identifies \texttt{o\_proj} (GLA) and \texttt{v\_proj} (Qwen) as most quantization-sensitive (Tab.~\ref{tab:sensitivity_scores}).

\paragraph{Post-training.} As additional evidence beyond pretraining, we report NVFP4 in two regimes:  
\textbf{\ding{172} Supervised Fine-tuning (SFT).} We fine-tune Qwen3-8B~\cite{yang2025qwen3} on a chain-of-thought subset of OpenMathReasoning~\cite{openmathreasoning} (12.67B tokens) using 8 Blackwell GPUs. Across training dynamics (loss/gradient norms) and downstream evaluation on AIME24/25, NVFP4 achieves performance comparable to BF16 with stable convergence. Refer to App.~\ref{app:nvfp4_sft} for full setup and results.
\textbf{\ding{173} Reinforcement Learning (RL).} We conduct GRPO~\cite{shao2024GRPO} training on GSM8K~\cite{cobbe2021training} using Qwen3-8B. We summarize five training/rollout quantization setups in Tab.~\ref{tab:exp_configs}. 
Overall, NVFP4 training paired with FP8 rollouts and truncated importance sampling (TIS)~\cite{yao2025offpolicy} attains performance within ~2\% of the full-precision BF16 training+rollout baseline. Refer to App.~\ref{app:nvfp4_rl} for details.

\section{Conclusion}
\label{sec:conclusion}

We analyze the emergence and dynamic of outliers in NVFP4 pretraining. We find that \texttt{Linear Attention} exhibits milder outliers than \texttt{Softmax Attention}, with outliers converging to static ``hot channels''. We also observe that ``post-QK'' operations are more sensitive to quantization. We introduce the Hot-Channel Patch (HCP) and CHON recipe to address this, enabling robust NVFP4 training. Experiments show minimal degradation (0.58\%) versus BF16 baseline for GLA-1.3B.

\newpage
\section*{Impact Statement}

This paper presents the first systematic analysis of outlier dynamics in NVFP4 LLM pretraining, characterizing their emergence, evolution, and sensitivity to quantization. Applications: Our findings provide a blueprint for reliable FP4 pretraining and inform the design of next-generation hardware-aware model architectures, transforming outlier mitigation from a heuristic trial-and-error process into a targeted diagnostic science. Implications: By enabling the CHON NVFP4 recipe, we are able to reduce the computational cost and energy footprint of training large-scale models while maintaining BF16-level accuracy. This efficiency gain helps democratize access to training larger models and longer-context capabilities. Future Initiatives: We advocate for a collaborative community effort to establish standardized outlier diagnostics specifically for low-precision pretraining. Furthermore, we urge that next-generation model architectures incorporate quantization-aware design principles, and that the community pursue architecture-kernel co-design to make ultra-low-precision training a predictable and widely accessible standard.

\section*{Limitation}

Our analysis is validated on a limited set of model sizes and architectures; additional experiments on ultra-large-scale and MoE models are needed to confirm the generality of the observed outlier patterns. We primarily use practical diagnostics (e.g., kurtosis, top-$k$ magnitude, and quantization error) to characterize numerical behavior; incorporating a broader set of metrics could provide a more complete view of stability and failure modes under NVFP4. In addition, this work focuses on numerical and algorithmic understanding rather than end-to-end NVFP4 training system efficiency, so production throughput gains will also depend on complementary systems and kernel optimizations. Finally, our conclusions are largely based on empirical correlations observed during training; establishing stronger causal evidence for specific outlier sources remains an important direction for future work.

\bibliography{main}
\bibliographystyle{icml2026}

\newpage
\appendix
\onecolumn

\begin{center}
\Large\textbf{Appendix Contents}    
\end{center}
\vspace{0.3em}

\begin{description}[leftmargin=2em, itemsep=0.3em]
    \item[\textbf{A}] \textbf{Theoretical analysis of Hot-Channel Patch} \dotfill \pageref{app:residual_comp_detail}
    \begin{description}
        \item[\textbf{A.1}] \textbf{Hot-channel Formalization} \dotfill \pageref{app:hot_channel_formalization}
        \item[\textbf{A.2}] \textbf{Candidate Compensation Configurations.} \dotfill \pageref{app:candidate_compensation_config}
        \item[\textbf{A.3}] \textbf{Mean-squared Error Bounds.} \dotfill \pageref{app:mse_bounds}
    \end{description}

    \item[\textbf{B}] \textbf{Ablation Study Details} \dotfill \pageref{app:ablation}
    \begin{description}[leftmargin=3em, itemsep=0.2em]
        \item[\textbf{B.1}] \textbf{Details of HCP Configurations} \dotfill \pageref{app:hcp_config_details}
        \item[\textbf{B.2}] \textbf{Post-QK Operations Ablation} \dotfill \pageref{app:post_qk_operation}
    \end{description}

    \item[\textbf{C}] \textbf{Additional Experimental Details} \dotfill \pageref{app:exp_details}
    \begin{description}
        \item[\textbf{C.1}] \textbf{Experimental Setup} \dotfill \pageref{app:exp_setup}
        \item[\textbf{C.2}] \textbf{Efficiency Details} \dotfill \pageref{app:efficiency}
        \item[\textbf{C.3}] \textbf{CHON Recipe} \dotfill \pageref{app:training_recipe}
        \item[\textbf{C.4}] \textbf{NVFP4 Details} \dotfill \pageref{app:nvfp4_details}

    \end{description}

    \item[\textbf{D}] \textbf{Additional Experiments} \dotfill \pageref{app:addition_exp}
    \begin{description}[leftmargin=3em, itemsep=0.2em]
        \item[\textbf{D.1}] \textbf{Supervised Fine-tuning with NVFP4} \dotfill \pageref{app:nvfp4_sft}
        \item[\textbf{D.2}] \textbf{Reinforcement Learning with NVFP4} \dotfill \pageref{app:nvfp4_rl}
        \item[\textbf{D.3}] \textbf{Additional Downstream Tasks Performance} \dotfill \pageref{app:down_steam_tasks}
    \end{description}

    \item[\textbf{E}] \textbf{Additional analysis} \dotfill \pageref{app:analysis}
    \begin{description}[leftmargin=3em, itemsep=0.2em]
        \item[\textbf{E.1}] \textbf{Kurtosis Analysis} \dotfill \pageref{app:kurtosis}
        \item[\textbf{E.2}] \textbf{Outlier Evolution Landscape} \dotfill \pageref{app:outlier_landscape}
        \item[\textbf{E.3}] \textbf{Top-$k$ Magnitude Analysis during Training} \dotfill \pageref{app:topk_mag}
        \item[\textbf{E.4}] \textbf{Component Dynamic Analysis} \dotfill \pageref{app:comp_dynamic}
        \item[\textbf{E.5}] \textbf{Frobenius Energy Analysis} \dotfill \pageref{app:frobenius_energy}
        \item[\textbf{E.6}] \textbf{Flush-to-Zero Dynamic Analysis} \dotfill \pageref{app:ftz_dynamic}
        \item[\textbf{E.7}] \textbf{Gating as Outlier Source} \dotfill \pageref{app:gk_outlier_analysis}
        \item[\textbf{E.8}] \textbf{Normalization Impact Analysis} \dotfill \pageref{app:norm_impact}
        \item[\textbf{E.9}] \textbf{Superposition Analysis} \dotfill \pageref{app:superposition}
        \item[\textbf{E.10}] \textbf{Quantization Error Dynamics} \dotfill \pageref{app:quant_error_dynamics}
    \end{description}

    \item[\textbf{F}] \textbf{Discussions} \dotfill \pageref{app:discuss}
\end{description}

\section{Theoretical Analysis of Hot-Channel Patch}
\label{app:residual_comp_detail}

Let $\mathbf{X} \in \mathbb{R}^{n \times k}$ and $\mathbf{W}^T \in \mathbb{R}^{m \times n}$ be the FP32 activations and weights.  We have
$\mathbf{Y}:=\mathbf{W}^T\mathbf{X} \in \mathbb{R}^{m\times k}$.
Let $Q(\cdot)$ and $D(\cdot)$ denote the quantization and dequantization operators, respectively. The low-precision (LP) approximations are defined as $\widehat{\mathbf{X}} := D(Q(\mathbf{X}))$ and $\widehat{\mathbf{W}} := D(Q(\mathbf{W}))$. The corresponding residual matrices (quantization errors) are defined as:
\begin{equation}
    \Delta\mathbf{X} = \mathbf{X} - \widehat{\mathbf{X}}, \quad \Delta\mathbf{W} = \mathbf{W} - \widehat{\mathbf{W}}.
\end{equation}

\subsection{Hot-channel formalization}
\label{app:hot_channel_formalization}

\begin{definition}[Hot-channel index set]
To control computational overhead, we select a set of channel indices
\begin{equation}
\mathcal{I} \subset \{1,\dots,n\},\qquad |\mathcal{I}|=k,
\end{equation}
based on a compounded error score:
\begin{equation}
    s_j = \mathbb{E} \big[ \|\Delta\mathbf{W}_{\cdot j}\| \big] + \mathbb{E} \big[ \|\Delta\mathbf{X}_{\cdot j}\| \big],
\end{equation}
where $\mathcal{I}$ contains the indices of the top-$k$ scores.

\smallskip
\noindent\textbf{Context.}
Motivated by the longitudinal observation that transient drifting outliers gradually stabilize into a small set of
persistent \emph{hot channels}, we treat $\mathcal{I}$ as fixed in later stages for multiple steps/intervals to avoid per-step
reselection overhead, consistent with online patching in HCP/CHON.
\end{definition}

\begin{definition}[Selection operator and residual patch on $\mathcal{I}$]
Let $P_{\mathcal{I}}$ be the channel-selection operator:
for $\mathbf{W}\in\mathbb R^{m\times n}$ it keeps columns in $\mathcal{I}$ and sets the rest to zero;
for $\mathbf{X}\in\mathbb R^{n\times t}$ it keeps rows in $\mathcal{I}$ and sets the rest to zero.
The hot-channel residual patches (quantization error for hot channels) are
\begin{equation}
    \Delta{\mathbf{W}}_{\mathcal{I}} := P_{\mathcal{I}}(\mathbf{W} - \widehat{\mathbf{W}}),\qquad
    \Delta{\mathbf{X}}_{\mathcal{I}} := P_{\mathcal{I}}(\mathbf{X}-\widehat{\mathbf{X}}).
\end{equation}
Then we have
\begin{equation}
\widehat{\mathbf{W}}_{\mathcal{I}}=\mathbf{W}_{\mathcal{I}}+\Delta \mathbf{W}_{\mathcal{I}},\qquad \widehat{\mathbf{X}}_{\mathcal{I}}=\mathbf{X}_{\mathcal{I}}+\Delta{\mathbf{X}}_{\mathcal{I}}.
\end{equation}
\end{definition}

\subsection{Candidate Compensation Configurations}
\label{app:candidate_compensation_config}

In the design of HCP, we consider three representative compensation configurations and explicitly derive their resulting quantization errors: (1) \emph{no compensation} (baseline), (2) \emph{first-order activation compensation}, and (3) \emph{second-order weight–activation compensation}.

\begin{lemma}[No compensation (baseline)]
\label{lem:no_comp_baseline}
With no compensation, the decoded low-precision output is
\begin{equation}
\widehat{\mathbf{Y}}^{(0)}_{\mathcal{I}} = \widehat{\mathbf{W}}_{\mathcal{I}}^\top \widehat{\mathbf{X}}_{\mathcal{I}}.
\end{equation}
Using the additive residual notation
$\widehat{\mathbf{W}}_{\mathcal{I}}=\mathbf{W}_{\mathcal{I}}+\Delta\mathbf{W}_{\mathcal{I}}$ and
$\widehat{\mathbf{X}}_{\mathcal{I}}=\mathbf{X}_{\mathcal{I}}+\Delta\mathbf{X}_{\mathcal{I}}$,
we obtain the exact expansion
\begin{equation}
\widehat{\mathbf{Y}}^{(0)}_{\mathcal{I}}
= \mathbf{W}_{\mathcal{I}}^\top \mathbf{X}_{\mathcal{I}}
+ \mathbf{W}_{\mathcal{I}}^\top \Delta \mathbf{X}_{\mathcal{I}}
+ \Delta \mathbf{W}_{\mathcal{I}}^\top \mathbf{X}_{\mathcal{I}}
+ \Delta \mathbf{W}_{\mathcal{I}}^\top \Delta \mathbf{X}_{\mathcal{I}}.
\end{equation}
\end{lemma}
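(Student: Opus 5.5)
The identity is purely algebraic, so I would prove it by direct substitution and bilinear expansion of the matrix product; no property of the quantizer $\mathcal{Q}$ is needed beyond the definition of the residuals. I would first fix the sign convention the lemma uses, namely $\widehat{\mathbf{W}}_{\mathcal{I}}=\mathbf{W}_{\mathcal{I}}+\Delta\mathbf{W}_{\mathcal{I}}$ and $\widehat{\mathbf{X}}_{\mathcal{I}}=\mathbf{X}_{\mathcal{I}}+\Delta\mathbf{X}_{\mathcal{I}}$. The opening of this appendix writes $\Delta\mathbf{X}=\mathbf{X}-\widehat{\mathbf{X}}$, which is the opposite sign. I would remark that flipping the sign of both residuals leaves the second-order term unchanged and only negates the first-order terms. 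The decomposition is therefore identical in form under the additive convention, which is the one also used in \cref{prop:hcp_decomposition}.

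Next I would check that the restricted residuals are consistent with the restricted quantized matrices. The selection operator $P_{\mathcal{I}}$ is linear: it zeroes the columns of $\mathbf{W}$, respectively the rows of $\mathbf{X}$, outside $\mathcal{I}$. Applying it to $\widehat{\mathbf{W}}=\mathbf{W}+\Delta\mathbf{W}$ gives $P_{\mathcal{I}}\widehat{\mathbf{W}}=P_{\mathcal{I}}\mathbf{W}+P_{\mathcal{I}}\Delta\mathbf{W}$, and likewise for $\mathbf{X}$. This is exactly the pair of relations stated in the definition of the residual patch.

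The main step is then to substitute these relations into $\widehat{\mathbf{Y}}^{(0)}_{\mathcal{I}}=\widehat{\mathbf{W}}_{\mathcal{I}}^\top\widehat{\mathbf{X}}_{\mathcal{I}}$. Using $(\mathbf{A}+\mathbf{B})^\top=\mathbf{A}^\top+\mathbf{B}^\top$ and the two-sided distributivity of matrix multiplication, I would expand
\[
(\mathbf{W}_{\mathcal{I}}+\Delta\mathbf{W}_{\mathcal{I}})^\top(\mathbf{X}_{\mathcal{I}}+\Delta\mathbf{X}_{\mathcal{I}})
\]
into its four cross terms and group them as the HP term, the two first-order terms and the second-order term. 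The dimensions are compatible because $\mathbf{W}_{\mathcal{I}}^\top$ is $m\times n$ and $\mathbf{X}_{\mathcal{I}}$ is $n\times k$, with the zeroed channels contributing nothing. Taking $\mathcal{I}=\{1,\dots,n\}$ recovers the unrestricted statement of \cref{prop:hcp_decomposition}.

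There is no genuine obstacle. The only point requiring care is the sign and notation bookkeeping noted in the first step, so that the stated signs on the first-order terms match the convention actually used.
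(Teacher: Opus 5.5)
Your proposal is correct and matches the paper's proof, which substitutes $\widehat{\mathbf{W}}_{\mathcal{I}}=\mathbf{W}_{\mathcal{I}}+\Delta\mathbf{W}_{\mathcal{I}}$ and $\widehat{\mathbf{X}}_{\mathcal{I}}=\mathbf{X}_{\mathcal{I}}+\Delta\mathbf{X}_{\mathcal{I}}$, distributes the transpose, and expands into the four terms. Your sign-convention remark correctly identifies an inconsistency in the appendix: the definition $\Delta\mathbf{W}_{\mathcal{I}}:=P_{\mathcal{I}}(\mathbf{W}-\widehat{\mathbf{W}})$ would give $\widehat{\mathbf{W}}_{\mathcal{I}}=\mathbf{W}_{\mathcal{I}}-\Delta\mathbf{W}_{\mathcal{I}}$, but because the lemma adopts the additive convention in its hypothesis, the expansion goes through exactly as you and the paper write it.
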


\begin{proof}
\begin{equation}
\begin{aligned}
\widehat{\mathbf{W}}_{\mathcal{I}}^\top \widehat{\mathbf{X}}_{\mathcal{I}}
&= (\mathbf{W}_{\mathcal{I}}+\Delta \mathbf{W}_{\mathcal{I}})^\top
   (\mathbf{X}_{\mathcal{I}}+\Delta \mathbf{X}_{\mathcal{I}}) \\
&= (\mathbf{W}_{\mathcal{I}}^\top+\Delta \mathbf{W}_{\mathcal{I}}^\top)
   (\mathbf{X}_{\mathcal{I}}+\Delta \mathbf{X}_{\mathcal{I}}) \\
&= \mathbf{W}_{\mathcal{I}}^\top \mathbf{X}_{\mathcal{I}}
 + \mathbf{W}_{\mathcal{I}}^\top \Delta \mathbf{X}_{\mathcal{I}}
 + \Delta \mathbf{W}_{\mathcal{I}}^\top \mathbf{X}_{\mathcal{I}}
 + \Delta \mathbf{W}_{\mathcal{I}}^\top \Delta \mathbf{X}_{\mathcal{I}} .
\end{aligned}
\end{equation}
\end{proof}

\begin{lemma}[First-order activation compensation]
Consider the first-order single-sided compensation strategy, which focuses on addressing activation residuals:
\begin{equation}
\label{lemma:1st_order}
\underbrace{\widehat{\mathbf{W}}_{\mathcal{I}}^\top \widehat{\mathbf{X}_{\mathcal{I}}}}_{\text{LP term}}
\;+\;
\underbrace{\widehat{\mathbf{W}}_{\mathcal{I}}^\top\bigl(\mathbf{X}_{\mathcal{I}}-\widehat{\mathbf{X}_{\mathcal{I}}}}_{\text{activation patch}}\bigr).
\tag{S0}
\end{equation}
Then
\begin{equation}
\widehat{\mathbf{W}}_{\mathcal{I}}^\top \widehat{\mathbf{X}_{\mathcal{I}}}
+\widehat{\mathbf{W}}_{\mathcal{I}}^\top(\mathbf{X}_{\mathcal{I}}-\widehat{\mathbf{X}_{\mathcal{I}}})
=
\mathbf{W}_{\mathcal{I}}^\top \mathbf{X}_{\mathcal{I}}
+\Delta{\mathbf{W}}_{\mathcal{I}}^\top \mathbf{X}_{\mathcal{I}}.
\tag{S5}
\end{equation}
\end{lemma}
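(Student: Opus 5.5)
The plan is a direct algebraic simplification, by linearity of matrix multiplication, followed by one application of the additive residual convention. No earlier lemma is needed beyond the notation fixed in the definition of the residual patches on $\mathcal{I}$.

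First, I would combine the LP term and the activation patch in \eqref{lemma:1st_order}, since both carry the same left factor $\widehat{\mathbf{W}}_{\mathcal{I}}^\top$. Distributivity gives
\begin{equation*}
\widehat{\mathbf{W}}_{\mathcal{I}}^\top \widehat{\mathbf{X}}_{\mathcal{I}}
+\widehat{\mathbf{W}}_{\mathcal{I}}^\top(\mathbf{X}_{\mathcal{I}}-\widehat{\mathbf{X}}_{\mathcal{I}})
=\widehat{\mathbf{W}}_{\mathcal{I}}^\top\bigl(\widehat{\mathbf{X}}_{\mathcal{I}}+\mathbf{X}_{\mathcal{I}}-\widehat{\mathbf{X}}_{\mathcal{I}}\bigr)
=\widehat{\mathbf{W}}_{\mathcal{I}}^\top \mathbf{X}_{\mathcal{I}} .
\end{equation*}
So the activation patch removes every trace of the activation quantization. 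This holds for any $\widehat{\mathbf{X}}_{\mathcal{I}}$, and the quantizer $\mathcal{Q}$ never enters.

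Second, I would substitute $\widehat{\mathbf{W}}_{\mathcal{I}}=\mathbf{W}_{\mathcal{I}}+\Delta\mathbf{W}_{\mathcal{I}}$, the same convention used in Lemma~\ref{lem:no_comp_baseline}. Then
\begin{equation*}
\widehat{\mathbf{W}}_{\mathcal{I}}^\top \mathbf{X}_{\mathcal{I}}
=(\mathbf{W}_{\mathcal{I}}+\Delta\mathbf{W}_{\mathcal{I}})^\top\mathbf{X}_{\mathcal{I}}
=\mathbf{W}_{\mathcal{I}}^\top\mathbf{X}_{\mathcal{I}}+\Delta\mathbf{W}_{\mathcal{I}}^\top\mathbf{X}_{\mathcal{I}},
\end{equation*}
using $(A+B)^\top=A^\top+B^\top$ and distributivity. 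This is exactly (S5). It is also worth comparing with the expansion in Lemma~\ref{lem:no_comp_baseline}. The patch cancels both $\mathbf{W}_{\mathcal{I}}^\top\Delta\mathbf{X}_{\mathcal{I}}$ and the second-order term $\Delta\mathbf{W}_{\mathcal{I}}^\top\Delta\mathbf{X}_{\mathcal{I}}$. Only the first-order weight error survives.

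The only real obstacle is the sign convention. The appendix first defines $\Delta\mathbf{W}=\mathbf{W}-\widehat{\mathbf{W}}$, but the selection-operator definition and Lemma~\ref{lem:no_comp_baseline} use $\widehat{\mathbf{W}}_{\mathcal{I}}=\mathbf{W}_{\mathcal{I}}+\Delta\mathbf{W}_{\mathcal{I}}$. The latter is the additive form that makes (S5) hold with a plus sign. Under the former convention the surviving term would instead be $-\Delta\mathbf{W}_{\mathcal{I}}^\top\mathbf{X}_{\mathcal{I}}$. I would therefore state explicitly that the additive convention of Lemma~\ref{lem:no_comp_baseline} is in force. I would also write the patch as $\widehat{\mathbf{W}}_{\mathcal{I}}^\top(\mathbf{X}_{\mathcal{I}}-\widehat{\mathbf{X}}_{\mathcal{I}})$, the literal difference, so that the first step does not depend on how $\Delta\mathbf{X}$ is signed.
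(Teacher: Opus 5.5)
Your proposal is correct and follows the paper's proof essentially step for step. Like the paper, you collapse the LP term and the activation patch to $\widehat{\mathbf{W}}_{\mathcal{I}}^\top\mathbf{X}_{\mathcal{I}}$, then substitute $\widehat{\mathbf{W}}_{\mathcal{I}}=\mathbf{W}_{\mathcal{I}}+\Delta\mathbf{W}_{\mathcal{I}}$ and expand. Your sign-convention remark is also apt: (S5) holds with a plus sign only under that additive convention, and this conflicts with the appendix's earlier definition $\Delta\mathbf{W}=\mathbf{W}-\widehat{\mathbf{W}}$.
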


\begin{proof}
\textbf{Step 1 (compact form; exact algebra).}
\begin{equation}
\widehat{\mathbf{W}}_{\mathcal{I}}^\top(\mathbf{X}_{\mathcal{I}}-\widehat{\mathbf{X}_{\mathcal{I}}})
=
\widehat{\mathbf{W}}_{\mathcal{I}}^\top \mathbf{X}_{\mathcal{I}}
-\widehat{\mathbf{W}}_{\mathcal{I}}^\top\widehat{\mathbf{X}_{\mathcal{I}}}.
\end{equation}
Substitute into (S0):
\begin{equation}
\widehat{\mathbf{W}}_{\mathcal{I}}^\top\widehat{\mathbf{X}_{\mathcal{I}}}
+\widehat{\mathbf{W}}_{\mathcal{I}}^\top \mathbf{X}_{\mathcal{I}}
-\widehat{\mathbf{W}}_{\mathcal{I}}^\top\widehat{\mathbf{X}_{\mathcal{I}}}
=
\widehat{\mathbf{W}}_{\mathcal{I}}^\top \mathbf{X}_{\mathcal{I}}.
\tag{S1}
\end{equation}

\textbf{Step 2 (introduce additive error).}
Using $\widehat{\mathbf{W}}_{\mathcal{I}}=\mathbf{W}_{\mathcal{I}}+\Delta{\mathbf{W}}_{\mathcal{I}}$,
\begin{equation}
\widehat{\mathbf{W}}_{\mathcal{I}}^\top \mathbf{X}_{\mathcal{I}}
=
(\mathbf{W}_{\mathcal{I}}+\Delta{\mathbf{W}}_{\mathcal{I}})^\top \mathbf{X}_{\mathcal{I}}.
\tag{S3}
\end{equation}

\textbf{Step 3 (full expansion; no omission).}
\begin{equation}
\begin{aligned}
(\mathbf{W}_{\mathcal{I}}+\Delta{\mathbf{W}}_{\mathcal{I}})^\top \mathbf{X}_{\mathcal{I}}
&=(\mathbf{W}_{\mathcal{I}}^\top+\Delta{\mathbf{W}}_{\mathcal{I}}^\top)\mathbf{X}_{\mathcal{I}}\\
&=\mathbf{W}_{\mathcal{I}}^\top \mathbf{X}_{\mathcal{I}}+\Delta{\mathbf{W}}_{\mathcal{I}}^\top \mathbf{X}_{\mathcal{I}}.
\end{aligned}
\end{equation}
\end{proof}

\begin{lemma}[Second-order weight–activation compensation]
Consider the second-order weight-activation compensation starting expression, which explicitly addresses both weight and activation residuals:
\begin{equation}\label{lemma:second-order}
\underbrace{\widehat{\mathbf{W}}_{\mathcal{I}}^\top \widehat{\mathbf{X}}_{\mathcal{I}}}_{\text{LP term}}
+
\underbrace{\widehat{\mathbf{W}}_{\mathcal{I}}^\top (\mathbf{X}_\mathcal{I}-\widehat{\mathbf{X}}_{\mathcal{I}})}_{\text{activation patch}}
+
\underbrace{(\mathbf{W}_{\mathcal{I}}-\widehat{\mathbf{W}}_{\mathcal{I}})^\top \widehat{\mathbf{X}}_{\mathcal{I}}}_{\text{weight patch}}.
\tag{D0}
\end{equation}
Then
\begin{equation}
\widehat{\mathbf{W}}_{\mathcal{I}}^\top \widehat{\mathbf{X}}_{\mathcal{I}}
+\widehat{\mathbf{W}}_{\mathcal{I}}^\top (\mathbf{X}_{\mathcal{I}}-\widehat{\mathbf{X}}_{\mathcal{I}})
+(\mathbf{W}_{\mathcal{I}}-\widehat{\mathbf{W}}_{\mathcal{I}})^\top \widehat{\mathbf{X}}_{\mathcal{I}}
=
\mathbf{W}_{\mathcal{I}}^\top \mathbf{X}_{\mathcal{I}}-\Delta{\mathbf{W}}_{\mathcal{I}}^\top\Delta{\mathbf{X}}_{\mathcal{I}}.
\tag{D3}
\end{equation}
\end{lemma}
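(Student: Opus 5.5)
The plan is to prove (D3) by the same two-stage pattern used for the first-order lemma. First collapse the telescoping structure of (D0) by exact algebra. Then substitute the additive residual relations $\widehat{\mathbf{W}}_{\mathcal{I}}=\mathbf{W}_{\mathcal{I}}+\Delta\mathbf{W}_{\mathcal{I}}$ and $\widehat{\mathbf{X}}_{\mathcal{I}}=\mathbf{X}_{\mathcal{I}}+\Delta\mathbf{X}_{\mathcal{I}}$ and expand, reusing the four-term expansion of Lemma~\ref{lem:no_comp_baseline}.

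\textbf{Step 1 (compact form).} Distributing the two patches gives
\begin{equation*}
\widehat{\mathbf{W}}_{\mathcal{I}}^\top\widehat{\mathbf{X}}_{\mathcal{I}}
+\widehat{\mathbf{W}}_{\mathcal{I}}^\top\mathbf{X}_{\mathcal{I}}-\widehat{\mathbf{W}}_{\mathcal{I}}^\top\widehat{\mathbf{X}}_{\mathcal{I}}
+\mathbf{W}_{\mathcal{I}}^\top\widehat{\mathbf{X}}_{\mathcal{I}}-\widehat{\mathbf{W}}_{\mathcal{I}}^\top\widehat{\mathbf{X}}_{\mathcal{I}}.
\end{equation*}
This simplifies to
\begin{equation*}
\widehat{\mathbf{W}}_{\mathcal{I}}^\top\mathbf{X}_{\mathcal{I}}+\mathbf{W}_{\mathcal{I}}^\top\widehat{\mathbf{X}}_{\mathcal{I}}-\widehat{\mathbf{W}}_{\mathcal{I}}^\top\widehat{\mathbf{X}}_{\mathcal{I}}.
\end{equation*}

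\textbf{Step 2 (expand each mixed term).}
\begin{itemize}
\item The first term expands to $\widehat{\mathbf{W}}_{\mathcal{I}}^\top\mathbf{X}_{\mathcal{I}}=\mathbf{W}_{\mathcal{I}}^\top\mathbf{X}_{\mathcal{I}}+\Delta\mathbf{W}_{\mathcal{I}}^\top\mathbf{X}_{\mathcal{I}}$, as in Step 3 of the first-order lemma.
\item Symmetrically, $\mathbf{W}_{\mathcal{I}}^\top\widehat{\mathbf{X}}_{\mathcal{I}}=\mathbf{W}_{\mathcal{I}}^\top\mathbf{X}_{\mathcal{I}}+\mathbf{W}_{\mathcal{I}}^\top\Delta\mathbf{X}_{\mathcal{I}}$.
\item The last term is the full four-term expansion of Lemma~\ref{lem:no_comp_baseline}.
\end{itemize}

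\textbf{Step 3 (cancel).} Subtracting the Lemma~\ref{lem:no_comp_baseline} expansion from the sum of the first two expansions cancels one copy of $\mathbf{W}_{\mathcal{I}}^\top\mathbf{X}_{\mathcal{I}}$. It also cancels both first-order cross terms, $\mathbf{W}_{\mathcal{I}}^\top\Delta\mathbf{X}_{\mathcal{I}}$ and $\Delta\mathbf{W}_{\mathcal{I}}^\top\mathbf{X}_{\mathcal{I}}$. What remains is $\mathbf{W}_{\mathcal{I}}^\top\mathbf{X}_{\mathcal{I}}-\Delta\mathbf{W}_{\mathcal{I}}^\top\Delta\mathbf{X}_{\mathcal{I}}$, which is (D3).

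The only real obstacle is bookkeeping of signs. The appendix opens with $\Delta\mathbf{X}=\mathbf{X}-\widehat{\mathbf{X}}$, while the selection-operator definition uses $\widehat{\mathbf{X}}_{\mathcal{I}}=\mathbf{X}_{\mathcal{I}}+\Delta\mathbf{X}_{\mathcal{I}}$. I will carry out the expansion under the additive convention stated in the lemma. I will then remark that the compact form of Step 1 does not involve $\Delta$ at all. Under the opposite convention ($\widehat{\mathbf{W}}=\mathbf{W}-\Delta\mathbf{W}$, $\widehat{\mathbf{X}}=\mathbf{X}-\Delta\mathbf{X}$), both residuals flip sign, so their product $\Delta\mathbf{W}_{\mathcal{I}}^\top\Delta\mathbf{X}_{\mathcal{I}}$ is unchanged, and a direct re-expansion gives the same right-hand side. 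Hence (D3) holds under either convention.
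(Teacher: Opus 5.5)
Your proof is correct and follows essentially the same route as the paper: collapse (D0) to $\widehat{\mathbf{W}}_{\mathcal{I}}^\top\mathbf{X}_{\mathcal{I}}+\mathbf{W}_{\mathcal{I}}^\top\widehat{\mathbf{X}}_{\mathcal{I}}-\widehat{\mathbf{W}}_{\mathcal{I}}^\top\widehat{\mathbf{X}}_{\mathcal{I}}$, substitute the additive residuals, and expand so that the first-order cross terms cancel. Your extra remark that the identity does not depend on the sign convention for $\Delta$ goes beyond the paper and is worth keeping, since the compact form equals $\mathbf{W}_{\mathcal{I}}^\top\mathbf{X}_{\mathcal{I}}-(\widehat{\mathbf{W}}_{\mathcal{I}}-\mathbf{W}_{\mathcal{I}})^\top(\widehat{\mathbf{X}}_{\mathcal{I}}-\mathbf{X}_{\mathcal{I}})$ and the appendix's two definitions of $\Delta$ do disagree in sign.
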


\begin{proof}
\textbf{Step 1 (compact form + additive errors).}
Using $\widehat{\mathbf{W}}_{\mathcal{I}}=\mathbf{W}_{\mathcal{I}}+\Delta \mathbf{W}_{\mathcal{I}}$ and
$\widehat{\mathbf{X}}_{\mathcal{I}}=\mathbf{X}_{\mathcal{I}}+\Delta \mathbf{X}_{\mathcal{I}}$, we first rewrite the left-hand side as
\begin{equation}
\begin{aligned}
&\widehat{\mathbf{W}}_{\mathcal{I}}^\top\widehat{\mathbf{X}}_{\mathcal{I}}
+\widehat{\mathbf{W}}_{\mathcal{I}}^\top(\mathbf{X}_{\mathcal{I}}-\widehat{\mathbf{X}}_{\mathcal{I}})
+(\mathbf{W}_{\mathcal{I}}-\widehat{\mathbf{W}}_{\mathcal{I}})^\top\widehat{\mathbf{X}}_{\mathcal{I}} \\
&=\widehat{\mathbf{W}}_{\mathcal{I}}^\top \mathbf{X}_{\mathcal{I}}
+\mathbf{W}_{\mathcal{I}}^\top\widehat{\mathbf{X}}_{\mathcal{I}}
-\widehat{\mathbf{W}}_{\mathcal{I}}^\top\widehat{\mathbf{X}_{\mathcal{I}}} \\
&=(\mathbf{W}_{\mathcal{I}}+\Delta \mathbf{W}_{\mathcal{I}})^\top \mathbf{X}_{\mathcal{I}}
+\mathbf{W}_{\mathcal{I}}^\top(\mathbf{X}_{\mathcal{I}}+\Delta \mathbf{X}_{\mathcal{I}})
-(\mathbf{W}_{\mathcal{I}}+\Delta \mathbf{W}_{\mathcal{I}})^\top(\mathbf{X}_{\mathcal{I}}+\Delta \mathbf{X}_{\mathcal{I}}).
\end{aligned}
\tag{D1'}
\end{equation}

\textbf{Step 2 (full expansion; no omission).}
\begin{equation}
\begin{aligned}
&(\mathbf{W}_{\mathcal{I}}+\Delta \mathbf{W}_{\mathcal{I}})^\top \mathbf{X}_{\mathcal{I}}
+\mathbf{W}_{\mathcal{I}}^\top(\mathbf{X}_{\mathcal{I}}+\Delta \mathbf{X}_{\mathcal{I}})
-(\mathbf{W}_{\mathcal{I}}+\Delta \mathbf{W}_{\mathcal{I}})^\top(\mathbf{X}_{\mathcal{I}}+\Delta \mathbf{X}_{\mathcal{I}})\\
&=\mathbf{W}_{\mathcal{I}}^\top \mathbf{X}_{\mathcal{I}}
-\Delta \mathbf{W}_{\mathcal{I}}^\top\Delta \mathbf{X}_{\mathcal{I}}.
\end{aligned}
\end{equation}

Since the patch is restricted to hot channels, $\Delta \mathbf{W}$ and $\Delta \mathbf{X}$ are nonzero only on $\mathcal{I}$, hence the residual term is
$-\Delta \mathbf{W}_{\mathcal{I}}^\top\Delta \mathbf{X}_{\mathcal{I}}$.
\end{proof}



\subsection{Mean-squared Error Bounds}
\label{app:mse_bounds}

\begin{definition}[Mean-squared error (MSE)]
For any estimator $\widehat{\mathbf{Y}}_{\mathcal{I}}$ of $\mathbf{Y}_{\mathcal{I}}=\mathbf{W}_{\mathcal{I}}^\top \mathbf{X}_{\mathcal{I}}$, define
\begin{equation}
\MSE := \mathbb E\bigl[(\mathbf{Y}_{\mathcal{I}}-\widehat{\mathbf{Y}}_{\mathcal{I}})^2\bigr],
\end{equation}
where the expectation can be over data, quantization noise, or both.
\end{definition}

\begin{lemma}[MSE upper bound: Baseline (no compensation)]
\label{lemma:mse_upper_bound}
Let $e_0 := \mathbf{Y}_{\mathcal{I}}-\widehat{\mathbf{Y}}^{(0)}_{\mathcal{I}}$. Then
\begin{equation}
e_0 = -\bigl(\mathbf{W}_{\mathcal{I}}^\top\Delta{\mathbf{X}_{\mathcal{I}}}+\Delta \mathbf{W}_{\mathcal{I}}^\top \mathbf{X}_{\mathcal{I}}+\Delta \mathbf{W}_{\mathcal{I}}^\top\Delta{\mathbf{X}_{\mathcal{I}}}\bigr),
\end{equation}
and
\begin{equation}
\MSE_0
\le
3\,\mathbb E\!\left[\|\mathbf{W}_{\mathcal{I}}\|^2\|\Delta{\mathbf{X}_{\mathcal{I}}}\|^2+\|\mathbf{X}_{\mathcal{I}}\|^2\|\Delta \mathbf{W}_{\mathcal{I}}\|^2+\|\Delta \mathbf{W}_{\mathcal{I}}\|^2\|\Delta{\mathbf{X}_{\mathcal{I}}}\|^2\right].
\tag{B0}
\end{equation}
\end{lemma}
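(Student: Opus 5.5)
The identity for $e_0$ comes straight from Lemma~\ref{lem:no_comp_baseline}. Subtracting its exact expansion of $\widehat{\mathbf{Y}}^{(0)}_{\mathcal{I}}$ from $\mathbf{Y}_{\mathcal{I}}=\mathbf{W}_{\mathcal{I}}^\top\mathbf{X}_{\mathcal{I}}$ cancels the HP term. What remains is the negative of the two first-order terms plus the second-order term. This is pure algebra, with no approximation.

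For the bound (B0), I would treat the squared error as a squared (Frobenius or entrywise) norm $\|e_0\|^2$, which is how the MSE definition should be read for matrix-valued outputs. Write $e_0=-(A+B+C)$ with
\[
A=\mathbf{W}_{\mathcal{I}}^\top\Delta\mathbf{X}_{\mathcal{I}},\qquad B=\Delta\mathbf{W}_{\mathcal{I}}^\top\mathbf{X}_{\mathcal{I}},\qquad C=\Delta\mathbf{W}_{\mathcal{I}}^\top\Delta\mathbf{X}_{\mathcal{I}}.
\]
The first key step is the elementary inequality $\|A+B+C\|^2\le 3(\|A\|^2+\|B\|^2+\|C\|^2)$. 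It follows from Cauchy--Schwarz, $(a+b+c)^2\le 3(a^2+b^2+c^2)$, applied after the triangle inequality, or from convexity of $\|\cdot\|^2$. This is the source of the constant $3$.

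The second step bounds each product by submultiplicativity: $\|\mathbf{W}_{\mathcal{I}}^\top\Delta\mathbf{X}_{\mathcal{I}}\|\le\|\mathbf{W}_{\mathcal{I}}\|\,\|\Delta\mathbf{X}_{\mathcal{I}}\|$, and likewise for $B$ and $C$. This holds for the Frobenius norm and for the spectral norm, and for a single entry it is just Cauchy--Schwarz on the row/column pair. Squaring each factor, summing, and taking expectations (linearity and monotonicity of $\mathbb{E}$) gives (B0).

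No step is genuinely hard. The only point requiring care is the choice of norm. The MSE is written entrywise, while the right-hand side uses matrix norms. I would state that $\MSE$ is interpreted as $\mathbb{E}\|e_0\|_F^2$, or as per entry with row/column norms, so that submultiplicativity applies consistently. I would also note that the bound holds whether the expectation is over data, quantization noise, or both, since only pointwise inequalities are used before taking expectations.
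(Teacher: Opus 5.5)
Your proposal is correct and follows the paper's proof essentially step for step: the same split $e_0=-(a+b+c)$, the elementary inequality $(u+v+w)^2\le 3(u^2+v^2+w^2)$, a Cauchy--Schwarz (submultiplicativity) bound on each of the three products, and then taking expectations. Your remark that the MSE should be read as $\mathbb{E}\|e_0\|_F^2$, or entrywise with row/column norms, pins down a point the paper leaves implicit.
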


\begin{proof}
Let $a=\mathbf{W}_{\mathcal{I}}^\top\Delta{\mathbf{X}_{\mathcal{I}}}$, $b=\Delta \mathbf{W}_{\mathcal{I}}^\top \mathbf{X}_{\mathcal{I}}$, $c=\Delta \mathbf{W}_{\mathcal{I}}^\top\Delta{\mathbf{X}_{\mathcal{I}}}$, so $e_0=-(a+b+c)$.
Using $(u+v+w)^2\le 3(u^2+v^2+w^2)$,
\begin{equation}
\mathbb E[e_0^2]\le 3(\mathbb E[a^2]+\mathbb E[b^2]+\mathbb E[c^2]).
\end{equation}
By Cauchy--Schwarz,
\begin{equation}
a^2\le \|\mathbf{W}_{\mathcal{I}}\|^2\|\Delta{\mathbf{X}_{\mathcal{I}}}\|^2,\quad
b^2\le \|\Delta \mathbf{W}_{\mathcal{I}}\|^2\|\mathbf{X}_{\mathcal{I}}\|^2,\quad
c^2\le \|\Delta \mathbf{W}_{\mathcal{I}}\|^2\|\Delta{\mathbf{X}_{\mathcal{I}}}\|^2.
\end{equation}
Taking expectations yields (B0).
\end{proof}

\begin{lemma}[MSE upper bound: 1st-Order]
Let $e_1 := \mathbf{Y}_{\mathcal{I}}-\widehat{\mathbf{Y}}^{(1)}_{\mathcal{I}}$. Then
\begin{equation}
e_1 = -\Delta{\mathbf{W}}_{\mathcal{I}}^\top \mathbf{X}_{\mathcal{I}},
\end{equation}
and
\begin{equation}
\MSE_1
\le
\mathbb E\!\left[\|\Delta{\mathbf{W}}_{\mathcal{I}}\|^2\|\mathbf{X}_{\mathcal{I}}\|^2\right].
\tag{B1}
\end{equation}
\end{lemma}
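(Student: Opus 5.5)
The proof will mirror that of Lemma~\ref{lemma:mse_upper_bound}, except that there is now only one error term, so no splitting inequality is needed. I will first identify $\widehat{\mathbf{Y}}^{(1)}_{\mathcal{I}}$ with the output of the first-order activation-compensated estimator (S0). By the first-order activation compensation lemma, identity (S5), this output equals $\mathbf{W}_{\mathcal{I}}^\top\mathbf{X}_{\mathcal{I}}+\Delta\mathbf{W}_{\mathcal{I}}^\top\mathbf{X}_{\mathcal{I}}$. Subtracting it from $\mathbf{Y}_{\mathcal{I}}=\mathbf{W}_{\mathcal{I}}^\top\mathbf{X}_{\mathcal{I}}$ gives $e_1=-\Delta\mathbf{W}_{\mathcal{I}}^\top\mathbf{X}_{\mathcal{I}}$ exactly. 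This step is purely algebraic: the activation patch cancels both the $\mathbf{W}^\top\Delta\mathbf{X}$ term and the second-order term appearing in Lemma~\ref{lem:no_comp_baseline}.

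For the bound, I will work entrywise, as the MSE definition implicitly does. Fix an output entry $(r,c)$. Then $(e_1)_{rc}=-\langle \Delta\mathbf{W}_{\mathcal{I},:,r},\mathbf{X}_{\mathcal{I},:,c}\rangle$, which is an inner product over the channel index restricted to $\mathcal{I}$. Cauchy--Schwarz gives
\begin{equation}
(e_1)_{rc}^2\le \|\Delta\mathbf{W}_{\mathcal{I},:,r}\|^2\,\|\mathbf{X}_{\mathcal{I},:,c}\|^2 \le \|\Delta\mathbf{W}_{\mathcal{I}}\|^2\|\mathbf{X}_{\mathcal{I}}\|^2 ,
\end{equation}
where in the second inequality the norms are read as Frobenius (or column-max) norms, matching the convention used in (B0). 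Taking expectations over whatever randomness the MSE is defined with (data, quantization noise, or both) and using monotonicity of expectation then yields (B1). Because $e_1$ consists of a single term, the constant is $1$ rather than the factor $3$ that appears in (B0).

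There is no real obstacle here; the only point needing care is consistency of norm conventions. If $\MSE$ is read as a mean over the entries of the matrix, then $\|e_1\|_F^2\le\|\Delta\mathbf{W}_{\mathcal{I}}\|_F^2\|\mathbf{X}_{\mathcal{I}}\|_F^2$ by submultiplicativity of the Frobenius norm. Averaging over entries only shrinks the left-hand side, so the stated bound holds under either reading.
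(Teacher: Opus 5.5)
Your proposal is correct and follows essentially the same route as the paper's proof: you use the first-order activation compensation identity (S5) to get $e_1=-\Delta\mathbf{W}_{\mathcal{I}}^\top\mathbf{X}_{\mathcal{I}}$, then bound the square by Cauchy--Schwarz and take expectations. Your entrywise, column-by-column treatment and the Frobenius submultiplicativity remark only make explicit the norm conventions that the paper's one-line Cauchy--Schwarz step leaves implicit; that remark also covers the summed reading $\MSE_1=\mathbb{E}\|e_1\|_F^2$ used later in Theorem~\ref{thm:total_error_order}.
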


\begin{proof}
From Lemma~2,
$\widehat{\mathbf{Y}}^{(1)}_{\mathcal{I}}=\mathbf{W}_{\mathcal{I}}^\top \mathbf{X}_{\mathcal{I}}+\Delta{\mathbf{W}}_{\mathcal{I}}^\top \mathbf{X}_{\mathcal{I}}$,
hence $e_1=-\Delta{\mathbf{W}}_{\mathcal{I}}^\top \mathbf{X}_{\mathcal{I}}$.
By Cauchy--Schwarz,
\begin{equation}
e_1^2=(\Delta{\mathbf{W}}_{\mathcal{I}}^\top \mathbf{X}_{\mathcal{I}})^2
\le \|\Delta{\mathbf{W}}_{\mathcal{I}}\|^2\|\mathbf{X}_{\mathcal{I}}\|^2.
\end{equation}
Taking expectations gives (B1).
\end{proof}

\begin{lemma}[MSE upper bound: 2nd-Order]
Let $e_2 := \mathbf{Y}_{\mathcal{I}}-\widehat{\mathbf{Y}}^{(2)}_{\mathcal{I}}$. Then
\begin{equation}
e_2=\Delta{\mathbf{W}}_{\mathcal{I}}^\top\Delta{\mathbf{X}}_{\mathcal{I}},
\end{equation}
and
\begin{equation}
\MSE_2
\le
\mathbb E\!\left[\|\Delta{\mathbf{W}}_{\mathcal{I}}\|^2\|\Delta{\mathbf{X}}_{\mathcal{I}}\|^2\right].
\tag{B2}
\end{equation}
\end{lemma}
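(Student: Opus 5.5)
The proof follows the same two-step pattern as the 1st-order bound (B1): first identify the error exactly from the algebraic identity already established, then bound its square pointwise and take expectations.

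\textbf{Step 1 (exact error).} By the second-order weight--activation compensation lemma (identity (D3)), the compensated estimator satisfies
\begin{equation}
\widehat{\mathbf{Y}}^{(2)}_{\mathcal{I}}=\mathbf{W}_{\mathcal{I}}^\top\mathbf{X}_{\mathcal{I}}-\Delta\mathbf{W}_{\mathcal{I}}^\top\Delta\mathbf{X}_{\mathcal{I}} .
\end{equation}
Subtracting this from $\mathbf{Y}_{\mathcal{I}}=\mathbf{W}_{\mathcal{I}}^\top\mathbf{X}_{\mathcal{I}}$ cancels the high-precision term and gives $e_2=\Delta\mathbf{W}_{\mathcal{I}}^\top\Delta\mathbf{X}_{\mathcal{I}}$. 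No expansion beyond (D3) is needed. Since $P_{\mathcal{I}}$ zeroes the complementary channels, only indices in $\mathcal{I}$ contribute.

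\textbf{Step 2 (pointwise bound).} The MSE in the definition is read entrywise, as in (B0) and (B1). Fix an output entry $(r,c)$. Its error is the inner product of column $r$ of $\Delta\mathbf{W}_{\mathcal{I}}$ with column $c$ of $\Delta\mathbf{X}_{\mathcal{I}}$. Cauchy--Schwarz gives
\begin{equation}
e_2^2\le\|\Delta\mathbf{W}_{\mathcal{I}}\|^2\,\|\Delta\mathbf{X}_{\mathcal{I}}\|^2,
\end{equation}
where $\|\cdot\|$ is the Euclidean norm of the relevant column restricted to $\mathcal{I}$. Each column norm is dominated by the Frobenius norm of the whole patched matrix, so the bound also holds with matrix norms.

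\textbf{Step 3 (expectation).} Squaring is already accounted for, so monotonicity and linearity of expectation, over data, quantization noise, or both, yield (B2). Unlike (B0), there is a single error term, so the factor $3$ from $(u+v+w)^2\le 3(u^2+v^2+w^2)$ does not appear.

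\textbf{Main obstacle.} The only delicate point is interpreting the norms in (B2) consistently with (B0) and (B1), i.e.\ per-entry column norms versus Frobenius norms. I would state the pointwise column version and note that it implies the Frobenius version.

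The proof then ends with a short remark comparing the bounds. Each squared residual is typically far smaller than $\|\mathbf{W}_{\mathcal{I}}\|^2$ or $\|\mathbf{X}_{\mathcal{I}}\|^2$. As a result, (B2) is quadratically small in the quantization error, whereas (B0) and (B1) are only linear in it. This justifies choosing \texttt{S-O2-B}.
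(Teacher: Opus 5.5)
Your proposal is correct and follows the same route as the paper: read off $e_2=\Delta\mathbf{W}_{\mathcal{I}}^\top\Delta\mathbf{X}_{\mathcal{I}}$ directly from identity (D3), bound $e_2^2$ by Cauchy--Schwarz, and take expectations. Your extra step of fixing an output entry, using column norms, and noting that these are dominated by Frobenius norms makes explicit a norm convention the paper leaves implicit; it does not change the argument.
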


\begin{proof}
From Lemma~3,
$\widehat{\mathbf{Y}}^{(2)}_{\mathcal{I}}=\mathbf{Y}_{\mathcal{I}}-\Delta{\mathbf{W}}_{\mathcal{I}}^\top\Delta{\mathbf{X}}_{\mathcal{I}}$,
so $e_2=\Delta{\mathbf{W}}_{\mathcal{I}}^\top\Delta{\mathbf{X}}_{\mathcal{I}}$.
By Cauchy--Schwarz,
\begin{equation}
e_2^2=(\Delta{\mathbf{W}}_{\mathcal{I}}^\top\Delta{\mathbf{X}}_{\mathcal{I}})^2
\le \|\Delta{\mathbf{W}}_{\mathcal{I}}\|^2\|\Delta{\mathbf{X}}_{\mathcal{I}}\|^2.
\end{equation}
Taking expectations yields (B2).
\end{proof}


\begin{assumption}[Common quantization regime: signal dominates noise on $\mathcal{I}$]
On hot channels, quantization error is small compared with signal magnitude:
\begin{equation}
\|\Delta{\mathbf{X}}_{\mathcal{I}}\|\ll \|\mathbf{X}_{\mathcal{I}}\|.
\end{equation}
\end{assumption}

\begin{lemma}[Typical dominance of 2nd-Order over 1st-Order under small-noise regime]
Under the above regime, the 2nd-Order residual term typically has a much smaller scale than the 1st-Order
single-sided residual term, since it replaces multiplication by $\mathbf{X}_{\mathcal{I}}$ with multiplication by $\Delta{\mathbf{X}}_{\mathcal{I}}$:
\begin{equation}
e_2^2 \ \text{is typically much smaller than}\ e_1^2.
\end{equation}
\end{lemma}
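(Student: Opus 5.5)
The plan is to compare the two error terms already computed in the preceding MSE lemmas rather than their upper bounds alone. By the 1st-Order lemma, $e_1=-\Delta\mathbf{W}_{\mathcal{I}}^\top\mathbf{X}_{\mathcal{I}}$, and by the 2nd-Order lemma, $e_2=\Delta\mathbf{W}_{\mathcal{I}}^\top\Delta\mathbf{X}_{\mathcal{I}}$. Both are linear in the same left factor $\Delta\mathbf{W}_{\mathcal{I}}^\top$; the only difference is the right factor, $\mathbf{X}_{\mathcal{I}}$ versus $\Delta\mathbf{X}_{\mathcal{I}}$. So the statement reduces to comparing $\Delta\mathbf{W}_{\mathcal{I}}^\top$ applied to a large vector with the same operator applied to a small one.

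\textbf{Comparing the bounds.} Start with (B1) and (B2). Since $\|\Delta\mathbf{W}_{\mathcal{I}}\|^2$ appears in both, the small-noise assumption $\|\Delta\mathbf{X}_{\mathcal{I}}\|\ll\|\mathbf{X}_{\mathcal{I}}\|$ gives, pointwise,
\[
\|\Delta\mathbf{W}_{\mathcal{I}}\|^2\|\Delta\mathbf{X}_{\mathcal{I}}\|^2 = \rho^2\,\|\Delta\mathbf{W}_{\mathcal{I}}\|^2\|\mathbf{X}_{\mathcal{I}}\|^2, \qquad \rho:=\|\Delta\mathbf{X}_{\mathcal{I}}\|/\|\mathbf{X}_{\mathcal{I}}\|\ll 1 .
\]
Taking expectations, the right-hand side of (B2) is smaller than that of (B1) by a factor of order $\rho^2$.

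\textbf{Comparing the errors themselves.} Bounds alone do not compare the errors, so I would make ``typically'' precise with an operator-norm argument. Write $e_2=\Delta\mathbf{W}_{\mathcal{I}}^\top u$ with $u=\Delta\mathbf{X}_{\mathcal{I}}$. Then $\|e_2\|\le\|\Delta\mathbf{W}_{\mathcal{I}}\|_{op}\,\|\Delta\mathbf{X}_{\mathcal{I}}\|$. For $e_1$ I would add a mild non-degeneracy condition: $\mathbf{X}_{\mathcal{I}}$ is not aligned with the near-null space of $\Delta\mathbf{W}_{\mathcal{I}}^\top$. For example, $\|\Delta\mathbf{W}_{\mathcal{I}}^\top\mathbf{X}_{\mathcal{I}}\|\ge c\,\|\Delta\mathbf{W}_{\mathcal{I}}\|_{op}\|\mathbf{X}_{\mathcal{I}}\|$ for some constant $c$. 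Alternatively, in expectation, model $\Delta\mathbf{W}_{\mathcal{I}}$ as having isotropic rows independent of $\mathbf{X}$ and $\Delta\mathbf{X}$, which gives $\mathbb{E}\|\Delta\mathbf{W}_{\mathcal{I}}^\top v\|^2=\sigma_W^2\|v\|^2$. Under that model,
\[
\frac{\mathbb{E}[e_2^2]}{\mathbb{E}[e_1^2]}=\frac{\mathbb{E}\|\Delta\mathbf{X}_{\mathcal{I}}\|^2}{\mathbb{E}\|\mathbf{X}_{\mathcal{I}}\|^2}\ll 1 ,
\]
which is the claimed dominance.

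\textbf{Main obstacle.} The hard step is justifying the independence or non-degeneracy condition. Round-to-nearest error $\Delta\mathbf{W}$ is deterministic given $\mathbf{W}$, so independence from $\mathbf{X}$ is heuristic. I would argue it from the standard additive-uniform-noise model of quantization, in which errors within a block are approximately zero-mean, uncorrelated with the signal, and have variance proportional to the squared block scale. The remaining step is to note explicitly that the word ``typically'' in the statement is exactly this modeling caveat.
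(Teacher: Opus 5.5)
\textbf{Verdict: correct, and more rigorous than the paper.} Your argument is correct under the extra hypotheses you add, and it goes further than the paper's proof.

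\textbf{What the paper does.} Your first step, comparing (B1) and (B2) via $\rho$, is the paper's entire proof. It writes down the Cauchy--Schwarz bounds $e_1^2\le\|\Delta\mathbf{W}_{\mathcal I}\|^2\|\mathbf{X}_{\mathcal I}\|^2$ and $e_2^2\le\|\Delta\mathbf{W}_{\mathcal I}\|^2\|\Delta\mathbf{X}_{\mathcal I}\|^2$. It notes that the second right-hand side is much smaller under $\|\Delta\mathbf{X}_{\mathcal I}\|\ll\|\mathbf{X}_{\mathcal I}\|$. It then lets the word ``typically'' absorb the fact that ordering two upper bounds does not order the quantities themselves. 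For example, if $\Delta\mathbf{W}_{\mathcal I}^\top\mathbf{X}_{\mathcal I}=0$ then $e_1=0$, while $e_2$ need not vanish.

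\textbf{What you add.} Your second step supplies the missing ingredient, a lower bound on $e_1$:
\begin{itemize}
\item pointwise, via your non-degeneracy condition, which gives $\|e_2\|/\|e_1\|\le\rho/c$ (you need $c\gg\rho$, not merely ``some constant'');
\item in second moment, via an isotropic $\Delta\mathbf{W}_{\mathcal I}$ independent of $(\mathbf{X}_{\mathcal I},\Delta\mathbf{X}_{\mathcal I})$, which gives an exact ratio of MSEs.
\end{itemize}

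\textbf{What each route buys.} The paper uses nothing beyond the stated small-noise regime, but it only proves a comparison of bounds. Yours proves an actual comparison, at the price of hypotheses the lemma does not state. Your MSE version is also what the subsequent theorem really needs. Its chain $\MSE_2\le\cdots\ll\cdots\ge\MSE_1$ does not by itself give $\MSE_2\ll\MSE_1$, and your isotropic model closes it.

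\textbf{One caveat on your justification.} The additive-noise model with variance proportional to the squared block scale gives an uncorrelated but non-isotropic second moment across the channels in $\mathcal I$. Your identity therefore degrades to
\[
\frac{\mathbb{E}[e_2^2]}{\mathbb{E}[e_1^2]}\le\frac{\max_{j\in\mathcal I}\tau_j^2}{\min_{j\in\mathcal I}\tau_j^2}\cdot\frac{\mathbb{E}\|\Delta\mathbf{X}_{\mathcal I}\|^2}{\mathbb{E}\|\mathbf{X}_{\mathcal I}\|^2},
\]
where $\tau_j^2$ is the summed error variance of weight channel $j$. The conclusion survives as long as that spread across patched channels is moderate.
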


\begin{proof}
Pointwise bounds give
\begin{equation}
e_1^2\le \|\Delta{\mathbf{W}}_{\mathcal{I}}\|^2\|\mathbf{X}_{\mathcal{I}}\|^2,\qquad
e_2^2\le \|\Delta{\mathbf{W}}_{\mathcal{I}}\|^2\|\Delta{\mathbf{X}}_{\mathcal{I}}\|^2.
\end{equation}
If $\|\Delta{\mathbf{X}}_{\mathcal{I}}\|\ll \|\mathbf{X}_{\mathcal{I}}\|$, then the second right-hand side is much smaller than the first, implying the claimed typical scale separation.
\end{proof}

\begin{theorem}[Total error order]
\label{thm:total_error_order}
Assume the small-noise regime on hot channels (Assumption~1):
$\|\Delta \mathbf{X}_{\mathcal I}\|\ll \|\mathbf{X}_{\mathcal I}\|$.
Further assume zero-mean residuals and that the three error components
\[
a:=\mathbf{W}_{\mathcal I}^\top \Delta \mathbf{X}_{\mathcal I},\quad
b:=\Delta \mathbf{W}_{\mathcal I}^\top \mathbf{X}_{\mathcal I},\quad
c:=\Delta \mathbf{W}_{\mathcal I}^\top \Delta \mathbf{X}_{\mathcal I}
\]
are pairwise uncorrelated:
\begin{equation}
\mathbb{E}\langle a,b\rangle_F=
\mathbb{E}\langle a,c\rangle_F=
\mathbb{E}\langle b,c\rangle_F=0.
\label{eq:uncorr_abc}
\end{equation}
Then the MSEs satisfy
\begin{equation}
\MSE_2 \ll \MSE_1 < \MSE_0.
\end{equation}
\end{theorem}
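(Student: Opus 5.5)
The plan is to work with the exact error expressions already established: $e_0=-(a+b+c)$ from Lemma~\ref{lemma:mse_upper_bound}, $e_1=-b$ from the first-order lemma, and $e_2=c$ from the second-order lemma. Since $\MSE$ is the expected squared Frobenius norm, I will expand $\MSE_0=\mathbb E\|a+b+c\|_F^2$ directly rather than use the crude factor-$3$ bound (B0). The uncorrelatedness hypothesis \eqref{eq:uncorr_abc} kills all cross terms, giving the exact identity
\begin{equation*}
\MSE_0=\mathbb E\|a\|_F^2+\mathbb E\|b\|_F^2+\mathbb E\|c\|_F^2 ,
\end{equation*}
while $\MSE_1=\mathbb E\|b\|_F^2$ and $\MSE_2=\mathbb E\|c\|_F^2$ hold exactly.

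For the strict inequality $\MSE_1<\MSE_0$, I subtract to get $\MSE_0-\MSE_1=\mathbb E\|a\|_F^2+\mathbb E\|c\|_F^2\ge 0$. Strictness requires that the activation-side error $a=\mathbf W_{\mathcal I}^\top\Delta\mathbf X_{\mathcal I}$ not vanish almost surely. I would state this as the natural nondegeneracy condition: the hot channels carry nonzero quantization error, and $\mathbf W_{\mathcal I}$ does not annihilate it. This is the content of selecting $\mathcal I$ by the top-$k$ residual score $s_j$. I expect this to be only a minor caveat to make explicit.

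For $\MSE_2\ll\MSE_1$, I compare $\mathbb E\|c\|_F^2=\mathbb E\|\Delta\mathbf W_{\mathcal I}^\top\Delta\mathbf X_{\mathcal I}\|_F^2$ with $\mathbb E\|\Delta\mathbf W_{\mathcal I}^\top\mathbf X_{\mathcal I}\|_F^2$. The two quantities differ only in the right factor, $\Delta\mathbf X_{\mathcal I}$ versus $\mathbf X_{\mathcal I}$. The route is to use the submultiplicative bound $\|c\|_F\le\|\Delta\mathbf W_{\mathcal I}\|_F\,\|\Delta\mathbf X_{\mathcal I}\|_2$ together with the small-noise Assumption, exactly as in the scale-separation lemma. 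This yields $\MSE_2\le \mathbb E[\|\Delta\mathbf W_{\mathcal I}\|^2\|\Delta\mathbf X_{\mathcal I}\|^2]$, which is $\ll\mathbb E[\|\Delta\mathbf W_{\mathcal I}\|^2\|\mathbf X_{\mathcal I}\|^2]$.

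The main obstacle is that (B1) is only an upper bound on $\MSE_1$, so comparing two upper bounds does not order the MSEs themselves. To close the gap I need a matching lower bound of the form $\MSE_1\gtrsim \mathbb E[\|\Delta\mathbf W_{\mathcal I}\|^2\|\mathbf X_{\mathcal I}\|^2]$. I would first try a model assumption in the spirit of the zero-mean residual hypothesis: treat $\Delta\mathbf W_{\mathcal I}$ as zero-mean with roughly isotropic columns, independent of $\mathbf X_{\mathcal I}$. Then $\mathbb E\|\Delta\mathbf W_{\mathcal I}^\top\mathbf X_{\mathcal I}\|_F^2$ equals, up to a dimension constant, $\mathbb E\|\Delta\mathbf W_{\mathcal I}\|_F^2\,\mathbb E\|\mathbf X_{\mathcal I}\|_F^2$, and the same computation gives the corresponding expression for $\MSE_2$ with $\Delta\mathbf X_{\mathcal I}$ in place of $\mathbf X_{\mathcal I}$. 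Their ratio is then $\mathbb E\|\Delta\mathbf X_{\mathcal I}\|_F^2/\mathbb E\|\mathbf X_{\mathcal I}\|_F^2\ll1$ by the Assumption. Because the statement uses the informal relation ``$\ll$'', I would present the result at the same typical-scale level as the preceding lemma, and flag this isotropy/independence step as the one place where an extra modeling assumption enters.
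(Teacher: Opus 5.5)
Your argument for $\MSE_1<\MSE_0$ is the paper's. The paper also expands $\MSE_0=\mathbb{E}\|a+b+c\|_F^2$, removes the cross terms with \eqref{eq:uncorr_abc}, and reads off $\MSE_0=\mathbb{E}\|a\|_F^2+\MSE_1+\MSE_2$. It then asserts strictness without comment, so your nondegeneracy caveat is a real addition. Precisely, what is needed is $\mathbb{E}\|a\|_F^2+\mathbb{E}\|c\|_F^2>0$, so $a\not\equiv 0$ is sufficient rather than required.

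For $\MSE_2\ll\MSE_1$ you depart from the paper, and your route is the sounder one. The paper argues exactly as you warn against: it writes $\MSE_2\le\mathbb{E}[\|\Delta\mathbf{W}_{\mathcal I}\|_F^2\|\Delta\mathbf{X}_{\mathcal I}\|_F^2]\ll\mathbb{E}[\|\Delta\mathbf{W}_{\mathcal I}\|_F^2\|\mathbf{X}_{\mathcal I}\|_F^2]\ge\MSE_1$. The last inequality points the wrong way, so this chain only orders two upper bounds and says nothing about $\MSE_2$ versus $\MSE_1$.

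Your isotropy/independence model supplies the missing lower bound on $\MSE_1$. Both second moments come out with the same dimension constant, giving $\MSE_2/\MSE_1=\mathbb{E}\|\Delta\mathbf{X}_{\mathcal I}\|_F^2/\mathbb{E}\|\mathbf{X}_{\mathcal I}\|_F^2$, which is small under Assumption~1. This makes the submultiplicative step superfluous. State explicitly that $\Delta\mathbf{W}_{\mathcal I}$ must be independent of the pair $(\mathbf{X}_{\mathcal I},\Delta\mathbf{X}_{\mathcal I})$. That follows from independence of $\Delta\mathbf{W}_{\mathcal I}$ and $\mathbf{X}_{\mathcal I}$ when the forward activation quantizer is deterministic RTN.

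The trade-off is this: the paper uses only the stated hypotheses but does not actually reach the conclusion, whereas you add a modeling assumption. That assumption is in fact unavoidable, because the stated hypotheses alone do not force $\MSE_2\ll\MSE_1$. A counterexample: put $\Delta\mathbf{W}_{\mathcal I}$, with a random sign, on a hot channel where $\mathbf{X}_{\mathcal I}$ is tiny but $\Delta\mathbf{X}_{\mathcal I}$ is not, while another hot channel carries a large error-free signal. With independent random signs all three cross terms vanish and Assumption~1 holds, yet $\MSE_1\le\MSE_2$.
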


\begin{proof}
Using $e_0=-(a+b+c)$, $e_1=-b$, $e_2=c$ and \eqref{eq:uncorr_abc},
\[
\MSE_0=\mathbb{E}\|a+b+c\|_F^2
=\mathbb{E}\|a\|_F^2+\MSE_1+\MSE_2
>\MSE_1,
\]
hence $\MSE_1<\MSE_0$.
Moreover,
\[
\MSE_2=\mathbb{E}\|c\|_F^2
\le \mathbb{E}\!\left[\|\Delta \mathbf{W}_{\mathcal I}\|_F^2\ \|\Delta \mathbf{X}_{\mathcal I}\|_F^2\right]
\ll
\mathbb{E}\!\left[\|\Delta \mathbf{W}_{\mathcal I}\|_F^2\ \|\mathbf{X}_{\mathcal I}\|_F^2\right]
\ge \MSE_1,
\]
where ``$\ll$'' follows from $\|\Delta \mathbf{X}_{\mathcal I}\|\ll \|\mathbf{X}_{\mathcal I}\|$.
\end{proof}

\section{Ablation Study Details}
\label{app:ablation}

\begin{figure*}[t]
    \centering
    \begin{minipage}{0.9\linewidth}
        \centering
        \includegraphics[width=\linewidth]{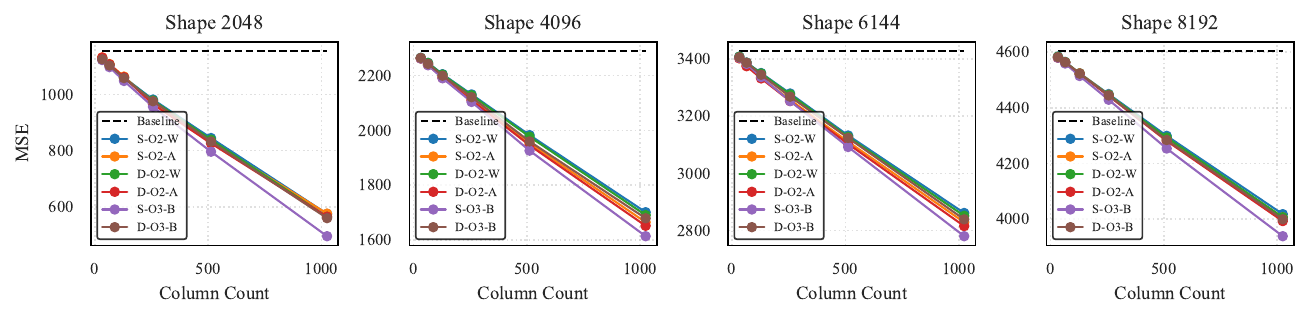}
        \vspace{-15pt}
        \caption*{{\small (a) Gaussian Activation Prior}}
    \end{minipage}
    \vspace{1em}
    \begin{minipage}{0.9\linewidth}
        \centering
        \includegraphics[width=\linewidth]{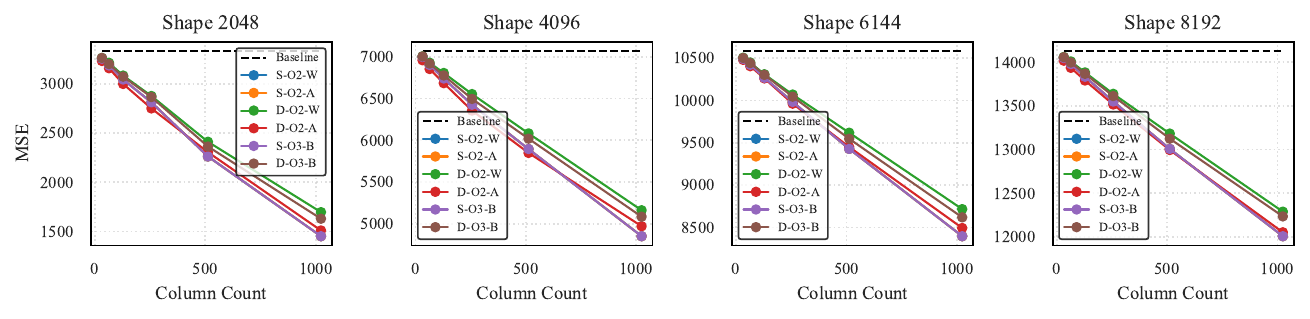}
        \vspace{-15pt}
        \caption*{{\small (b) Laplace Activation Prior}}
    \end{minipage}
    \vspace{-8pt}
    \caption{\textbf{Comparison of HCP configurations under different activation priors.} Mean-squared error (MSE) of the quantized linear product versus the number of patched (hot) channels for four hidden sizes (2,048/4,096/6,144/8,192), evaluated under (a) Gaussian and (b) Laplace activation priors. Curves compare Single- vs Dual-kernel implementations (S/D), second-order vs first-order–inclusive recovery (O1/O2), and patch targets (weights-only W, activations-only A, or both B); the dashed line denotes the unpatched baseline. Consistent with the paper’s CHON design choice, \textbf{S–O2–B} (single-kernel, first-order–inclusive, both) yields the lowest MSE across shapes and priors.}
    \label{fig:chon_recipe_ablation}
\end{figure*}

\subsection{HCP Configuration Details}
\label{app:hcp_config_details}

The Hot-Channel Patch (HCP) mechanism constitutes a modular framework for mitigating quantization errors in FP4 matrix multiplications. As derived in Proposition~\ref{prop:hcp_decomposition}, the quantized product decomposes into a low-precision base term $\widehat{\mathbf{W}}^\top\widehat{\mathbf{X}}$, first-order residual terms $\widehat{\mathbf{W}}^\top\Delta\mathbf{X} + \Delta\mathbf{W}^\top\widehat{\mathbf{X}}$, and a second-order interaction term $\Delta\mathbf{W}^\top\Delta\mathbf{X}$. HCP selectively recovers these error components over a sparse set of high-impact channels, optimizing the trade-off between computational overhead and numerical fidelity across three orthogonal configuration axes.

\paragraph{Configuration Dimension 1: Mode (Execution Strategy).}

The \emph{Mode} parameter defines the computational implementation of the residual compensation:

\begin{itemize}[leftmargin=*, label=$\bullet$, itemsep=2pt]
    \item \textbf{Single-Kernel Mode (S):} This strategy implements error recovery via matrix concatenation. Given a set of ``hot channels'' $\mathcal{I} \subset \{1,\dots,c\}$ with cardinality $|\mathcal{I}|=k$, we construct augmented matrices 
    \begin{equation}
        \tilde{\mathbf{X}} = [\widehat{\mathbf{X}}^\top, \Delta\mathbf{X}_{\mathcal{I}}^\top]^\top \in \mathbb{R}^{(c+k) \times N}, \quad \tilde{\mathbf{W}} = [\widehat{\mathbf{W}}^\top, \Delta\mathbf{W}_{\mathcal{I}}^\top]^\top \in \mathbb{R}^{(c+k) \times M}.
    \end{equation}
    The corrected output is derived from a single monolithic matrix multiplication:
    \begin{equation}
        \mathbf{Y}_{\text{HCP}} = \tilde{\mathbf{W}}^\top\tilde{\mathbf{X}}.
    \end{equation}
    This approach facilitates kernel fusion and simplifies the computation graph, rendering it advantageous for deployment scenarios sensitive to kernel launch latency.
    
    \item \textbf{Dual-Kernel Mode (D):} This strategy decouples the base computation from residual correction. The system first computes $\mathbf{Y}_{\text{base}} = \widehat{\mathbf{W}}^\top\widehat{\mathbf{X}}$ via standard FP4 routines. Subsequently, the residual term 
    \begin{equation}
        \mathbf{Y}_{\text{res}} = \Delta\mathbf{W}_{\mathcal{I}}^\top\Delta\mathbf{X}_{\mathcal{I}}
    \end{equation}
    is computed in a separate kernel, followed by accumulation:
    \begin{equation}
        \mathbf{Y}_{\text{HCP}} = \mathbf{Y}_{\text{base}} + \mathbf{Y}_{\text{res}}.
    \end{equation}
    While incurring additional memory traffic for intermediate results, this mode enables fine-grained control over mixed-precision scheduling and independent kernel optimization.
\end{itemize}

\paragraph{Configuration Dimension 2: Order (Error Expansion).}

The \emph{Order} parameter dictates strictly which terms from the error decomposition are materialized:

\begin{itemize}[leftmargin=*, label=$\bullet$, itemsep=2pt]
    \item \textbf{Second-Order Recovery ($O2$):} This configuration compensates solely for the interaction term $\Delta\mathbf{W}_{\mathcal{I}}^\top\Delta\mathbf{X}_{\mathcal{I}}$ as proof in \cref{lemma:second-order}.
    \begin{equation}
        \mathbf{Y}_{\text{O2}} = \widehat{\mathbf{W}}_{\mathcal{I}}^\top \widehat{\mathbf{X}} _{\mathcal{I}}- \Delta\mathbf{W}_{\mathcal{I}}^\top \Delta\mathbf{X}_{\mathcal{I}}.
    \end{equation}
    Empirical analysis indicates that when utilizing unbiased calibrated quantizers (e.g., symmetric RTN), $O2$ recovery captures 85--90\% of the accuracy benefits at approximately half the computational cost of full recovery.
    
    \item \textbf{First-Order Recovery ($O1$):} This configuration recovers both the linear residuals and the second-order interaction on hot channels:
    \begin{equation}
        \mathbf{Y}_{\text{O1}} = \widehat{\mathbf{W}}_{\mathcal{I}}^\top \widehat{\mathbf{X}}_{\mathcal{I}} + \widehat{\mathbf{W}}_{\mathcal{I}}^\top\Delta\mathbf{X}_{\mathcal{I}} + \Delta\mathbf{W}_{\mathcal{I}}^\top\widehat{\mathbf{X}}_{\mathcal{I}} + \Delta\mathbf{W}_{\mathcal{I}}^\top\Delta\mathbf{X}_{\mathcal{I}}.
    \end{equation}
    Full expansion is critical when quantization errors manifest systematic bias or when the model converges to regions characterized by large weight or activation residuals.
\end{itemize}

\paragraph{Configuration Dimension 3: Target (Compensation Scope).}

The \emph{Target} parameter selects the specific residual tensors to be patched:

\begin{itemize}[leftmargin=*, label=$\bullet$, itemsep=2pt]
    \item \textbf{Weight-Only (W):} Activations remain fully quantized ($\Delta\mathbf{X}_{\mathcal{I}} = \mathbf{0}$), patching only weight residuals. This is memory-efficient as weight residuals are static and cacheable, yet it fails to address activation outliers which often dominate quantization error in Transformer attention layers.
    
    \item \textbf{Activation-Only (A):} Weights remain quantized, patching only activation residuals. This effectively mitigates dynamic outliers (e.g., in query projections) but neglects static weight pathologies.
    
    \item \textbf{Both (B):} Recovers residuals from both operands. Our ablation study in Fig.~\ref{fig:mse_error} demonstrates that bidirectional recovery reduces MSE by 30--45\% relative to unilateral approaches, particularly in layers exhibiting simultaneous weight and activation outliers.
\end{itemize}

\paragraph{Nomenclature and Taxonomy.}

We employ a compact notation formatted as \texttt{Mode-Order-Target} (e.g., \texttt{S-O1-A}, \texttt{D-O2-B}) to uniquely identify configurations. Tab.~\ref{tab:recipes} delineates the representative design space explored in this work.

\begin{table}[t]
    \centering
    \caption{\textbf{Taxonomy of Hot-Channel Patch (HCP) configurations.} The design space is categorized across three axes: Mode (Single (S) for concatenated kernels vs. Dual (D) for separate kernels), Order (Second-order (O2) for interaction terms vs. First-order inclusive (O1) for linear plus interaction residuals), and Target (Weight (W), Activation (A), or Both (B)).}
    \label{tab:recipes}
    \setlength{\tabcolsep}{10pt}
    \renewcommand{\arraystretch}{1.15}
    \begin{tabular}{c c c c}
        \toprule
        \textbf{Configuration} & \textbf{Mode} & \textbf{Order} & \textbf{Target} \\ 
        \midrule
        \texttt{S-O1-W} & Single & 1st-Order & Weight \\
        \texttt{S-O1-A} & Single & 1st-Order & Activation \\
        \texttt{D-O1-W} & Dual   & 1st-Order & Weight \\
        \texttt{D-O1-A} & Dual   & 1st-Order & Activation \\
        \texttt{S-O2-B} & Single & 2nd-Order & Both \\
        \texttt{D-O2-B} & Dual   & 2nd-Order & Both \\
        \bottomrule
    \end{tabular}
\end{table}

\subsection{Post-QK Operations Ablation Study}
\label{app:post_qk_operation}

To identify which components are most critical for high-precision protection, we conduct a systematic ablation study. 
As illustrated in Fig.~\ref{fig:post-qk-ablation}, a surface-level observation of the loss curves suggests that the \texttt{up\_proj} has the most significant impact on model convergence in both GLA and Qwen. 
However, attributing sensitivity solely to absolute loss increase is potentially misleading, as \texttt{up\_proj} typically accounts for a disproportionately larger share of the total parameters.

To decouple the intrinsic numerical sensitivity from the parameter scale, we introduce a \textit{quantization sensitivity score}, defined as the $\Delta \text{Loss}$ per unit of parameters relative to the BF16 baseline. 
As revealed in Fig.~\ref{fig:op-sensitivity}, this normalized metric provides a more precise diagnostic: the $O$ projection in GLA and the $V$ projection in Qwen-1.7B emerge as the most quantization-critical components. 

\begin{figure}[t]
    \centering
    \begin{subfigure}[t]{0.50\textwidth}
        \centering
        \includegraphics[width=\linewidth, height=0.6\linewidth, keepaspectratio]{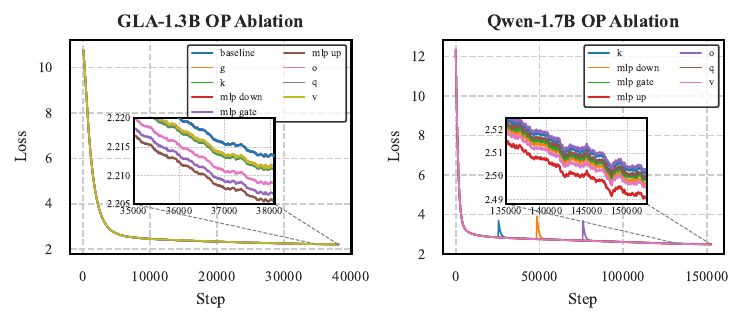}
        \vspace{-18pt}
        \caption{Operation Ablation Study}
        \label{fig:post-qk-ablation}
    \end{subfigure}
    \hfill
    \begin{subfigure}[t]{0.48\textwidth}
        \centering
        \includegraphics[width=\linewidth, height=0.6\linewidth, keepaspectratio]{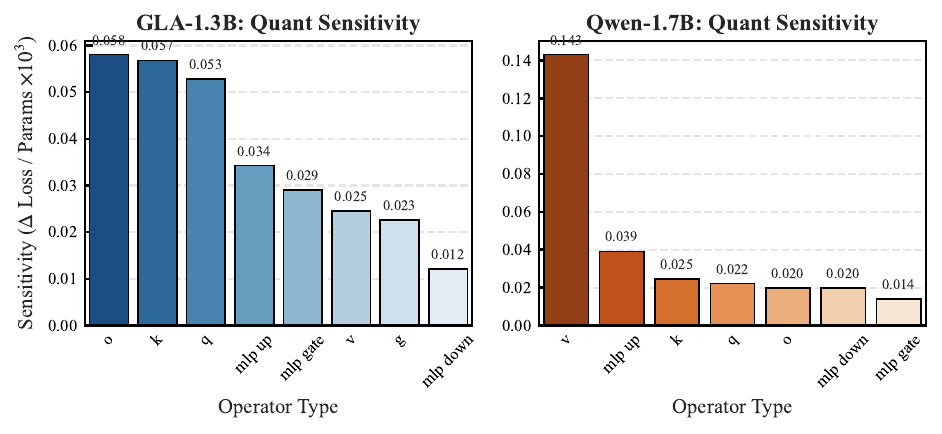}
        \vspace{-18pt}
        \caption{Quantization Sensitivity Analysis}
        \label{fig:op-sensitivity}
    \end{subfigure}
    \caption{\textbf{Comprehensive analysis of operator importance and quantization sensitivity.} 
    (a) Training loss curves for operation-level ablations in GLA-1.3B and Qwen-1.7B. While the \texttt{up\_proj} shows a high absolute loss impact, this is largely attributed to its disproportionate parameter scale. 
    (b) Quantization sensitivity scores normalized by parameter count ($\Delta \text{Loss} / \text{Params}$). The analysis identifies the GK and O projections in GLA, and the V projection in Qwen-1.7B, as the most critical bottlenecks. }
    \vspace{-10pt}
    \label{fig:combined-analysis}
\end{figure}

\section{Additional Experimental Details}
\label{app:exp_details}

\subsection{Experimental Setup}
\label{app:exp_setup}

\paragraph{Datasets and Preprocessing.}
We pretrain on a 60B-token subset of RedPajama~\cite{together2023redpajama}.
We remove documents flagged as toxic or low-quality by a lightweight classifier.
The final mixture is identical across all baselines. We use a 32k vocabulary trained with SentencePiece.
All models share the same tokenizer and special tokens.
Sequences are packed to 4,096 tokens without cross-document contamination.

\paragraph{Optimization and Schedule.}
Unless otherwise noted, we train with AdamW ($\beta_1{=}0.9$, $\beta_2{=}0.95$, weight decay $0.1$), cosine decay with 2k-step warmup, and gradient clipping at 1.0. For HCP, we designate ~9.09\% of channels as hot-channels for compensation.
We use FSDP with activation checkpointing. The sequence length is 4096 and the global batch size is 4M tokens. Training employs the AdamW optimizer ($\beta_1=0.9, \beta_2=0.95$) with a peak learning rate of $3\times10^{-4}$, 2,000 warmup steps, weight decay 0.1, and gradient clipping at 1.0. We use FSDP only (no tensor or pipeline parallelism) and gradient accumulation is 1.

\paragraph{Hardware.} All experiments were conducted on a server equipped with 8 NVIDIA Blackwell GPUs. Unless otherwise noted, we report per-GPU throughput and memory usage, with the model-parallel configuration held fixed across all runs. The software stack consists of Python~3.12.3, PyTorch~2.8.0, CUDA runtime~13.0, and cuDNN~9.12.

\subsection{Efficiency Details}
\label{app:efficiency}

We detail the implementation and efficiency of the Hot-Channel Patch (HCP) method.

\textbf{Algorithm Comparison.} The two algorithms presented in Algorithm 1 outline the compensation process. The \textit{Normal Process} (Left) performs dynamic selection, where importance scores are computed from quantization residuals to identify the top-$k$ hot channels at every step. In contrast, the \textit{Pre-computed Indices} version (Right) takes a set of indices as input, bypassing the \textit{Scoring \& Selection} stage. This modification reduces computational cost significantly. The validity of saving index calculation stems from the observation that outlier channels in LLMs exhibit distinct persistence; thus, indices computed at one step remain effective for subsequent steps, allowing us to cache and reuse them.

\textbf{Efficiency and Kernel Fusion.} Tab.~\ref{tab:chon-breakdown} analyzes the runtime overhead. The ``Pre-fuse'' columns show that executing operations such as dequantization, gathering, and concatenation individually incurs high overhead ($\sim$16\%). To mitigate this, we employ a \texttt{Triton} kernel to fuse residual compensation and concatenation into a single operation. This fusion strategy drastically lowers the overhead to 5.27\%, as evidenced in the ``Post-fused'' column, demonstrating the practical efficiency of our design.

\textbf{End-to-End Measurement.} We evaluate the end-to-end training performance of Qwen3-1.7B on 8 GPUs. For CHON, the total training time is 175.2 GPU-hours, compared to 152.72 GPU-hours for the baseline NVFP4 Recipe~\cite{nvidia_recipe}, corresponding to a 12.88\% overhead. For GLA-1.3B, CHON requires 92.88 GPU-hours, whereas the baseline requires 84.88 GPU-hours, yielding an 8.6\% overhead.

\begin{algorithm}[t]
\caption{Hot-Channel Patch Algorithm. (Left: Normal Process, Right: Use pre-computed indices.)}
\label{algo:hcp}
\begin{minipage}[t]{0.48\textwidth}
\begin{algorithmic}[1]
\REQUIRE Weight matrix $W \in \mathbb{R}^{M \times N}$, input matrix $X \in \mathbb{R}^{K \times N}$
\REQUIRE Number of compensation columns $c$
\ENSURE Expanded matrices $W_{\text{out}}, X_{\text{out}}$ for GEMM
\STATE \textbf{// 1. Quantization \& Dequantization}
\STATE $W_{q} \leftarrow \text{Quantize}(W, \text{NVFP4})$
\STATE $X_{q} \leftarrow \text{Quantize}(X, \text{NVFP4})$
\STATE $W_{dq} \leftarrow \text{Dequantize}(W_{q})$
\STATE $X_{dq} \leftarrow \text{Dequantize}(X_{q})$
\STATE \textbf{// 2. Residual Computation}
\STATE $R_W \leftarrow W - W_{dq}$ 
\COMMENT{quantization error for weights}
\STATE $R_X \leftarrow X - X_{dq}$ 
\COMMENT{quantization error for inputs}
\STATE \textbf{// 3. Scoring \& Selection (Top-$k$)}
\STATE $S_W \leftarrow \text{Mean}\bigl(\sum_{i=1}^{M} |R_W[i, :]|\bigr)$ 
\COMMENT{column-wise L1 norm}
\STATE $S_X \leftarrow \text{Mean}\bigl(\sum_{j=1}^{K} |R_X[j, :]|\bigr)$ 
\COMMENT{column-wise L1 norm}
\STATE $S \leftarrow S_W + S_X$ 
\COMMENT{importance score}
\STATE $\text{idx} \leftarrow \text{TopK}(S, k=c)$ 
\COMMENT{top $c$ hot channels}
\STATE \textbf{// 4. Gather}
\STATE $R_W^{\text{comp}} \leftarrow R_W[:, \text{idx}]$ 
\COMMENT{high-error residuals (weights)}
\STATE $W_{dq}^{\text{comp}} \leftarrow W_{dq}[:, \text{idx}]$ 
\COMMENT{quantized values (weights)}
\STATE $R_X^{\text{comp}} \leftarrow R_X[:, \text{idx}]$ 
\COMMENT{high-error residuals (inputs)}
\STATE $X_{dq}^{\text{comp}} \leftarrow X_{dq}[:, \text{idx}]$ 
\COMMENT{quantized values (inputs)}
\STATE \textbf{// 5. Concat}
\STATE $W_{\text{out}} \leftarrow \text{Concat}\bigl([W,\; R_W^{\text{comp}},\; W_{dq}^{\text{comp}}], \text{dim}=1\bigr)$
\STATE $X_{\text{out}} \leftarrow \text{Concat}\bigl([X,\; X_{dq}^{\text{comp}},\; R_X^{\text{comp}}], \text{dim}=1\bigr)$ 
\COMMENT{note: order may vary for alignment}
\STATE \textbf{return} $W_{\text{out}}, X_{\text{out}}$
\end{algorithmic}
\end{minipage}
\hfill
\begin{minipage}[t]{0.48\textwidth}
\begin{algorithmic}[1]
\REQUIRE Weight matrix $W \in \mathbb{R}^{M \times N}$, input matrix $X \in \mathbb{R}^{K \times N}$
\REQUIRE Pre-computed indices $\text{idx} \in \mathbb{R}^{c}$ of hot channels
\ENSURE Expanded matrices $W_{\text{out}}, X_{\text{out}}$ for GEMM
\STATE \textbf{// 1. Quantization \& Dequantization}
\STATE $W_{q} \leftarrow \text{Quantize}(W, \text{dtype=FP4})$
\STATE $X_{q} \leftarrow \text{Quantize}(X, \text{dtype=FP4})$
\STATE $W_{dq} \leftarrow \text{Dequantize}(W_{q})$
\STATE $X_{dq} \leftarrow \text{Dequantize}(X_{q})$
\STATE \textbf{// 2. Residual Computation}
\STATE $R_W \leftarrow W - W_{dq}$ \COMMENT{quantization error for weights}
\STATE $R_X \leftarrow X - X_{dq}$ \COMMENT{quantization error for inputs}
\STATE \textbf{// 3. Gather}
\STATE \textit{// Look up residual and quantized values for the specified columns}
\STATE $R_W^{\text{comp}} \leftarrow R_W[:, \text{idx}]$ 
\STATE $W_{dq}^{\text{comp}} \leftarrow W_{dq}[:, \text{idx}]$ 
\STATE $R_X^{\text{comp}} \leftarrow R_X[:, \text{idx}]$ 
\STATE $X_{dq}^{\text{comp}} \leftarrow X_{dq}[:, \text{idx}]$ 
\STATE \textbf{// 4. Concat}
\STATE $W_{\text{out}} \leftarrow \text{Concat}\bigl([W,\; R_W^{\text{comp}},\; W_{dq}^{\text{comp}}], \text{dim}=1\bigr)$
\STATE $X_{\text{out}} \leftarrow \text{Concat}\bigl([X,\; X_{dq}^{\text{comp}},\; R_X^{\text{comp}}], \text{dim}=1\bigr)$ 
\STATE \textbf{return} $W_{\text{out}}, X_{\text{out}}$
\end{algorithmic}
\end{minipage}
\end{algorithm}

\begin{table*}[t]
\centering
\caption{Fine-grained performance breakdown and fusion efficiency analysis of CHON. All time metrics are in milliseconds (ms).}
\label{tab:chon-breakdown}
\small
\begin{tabular}{l c cc ccccc c cc}
\toprule
\multirow{3}{*}{\textbf{Shape (W $\times$ X)}} & \textbf{Forward} & \multicolumn{2}{c}{\textbf{Backward Pass}} & \multicolumn{5}{c}{\textbf{CHON Operations (Pre-fuse)}} & \textbf{Fused} & \multicolumn{2}{c}{\textbf{Overhead Ratio (\%)$^\dagger$}} \\
\cmidrule(lr){2-2} \cmidrule(lr){3-4} \cmidrule(lr){5-9} \cmidrule(lr){10-10} \cmidrule(l){11-12}
& Fprop & DGrad & WGrad & Deq. & Gthr. & Resid. & Cat. & \textbf{Sum} & \textbf{Kernel} & \textbf{Pre-fuse} & \textbf{Post-fuse} \\
\midrule
$2048\times2048$ & 0.239 & 0.368 & 0.143 & 0.049 & 0.059 & 0.030 & 0.046 & 0.184 & 0.044 & 14.76\% & \textbf{3.97\%} \\
$1024\times2048$ & 0.235 & 0.238 & 0.142 & 0.046 & 0.062 & 0.030 & 0.046 & 0.184 & 0.044 & 16.56\% & \textbf{4.53\%} \\
$6144\times2048$ & 0.237 & 0.236 & 0.143 & 0.045 & 0.060 & 0.030 & 0.046 & 0.181 & 0.075 & 16.79\% & \textbf{7.72\%} \\
$2048\times6144$ & 0.235 & 0.238 & 0.143 & 0.046 & 0.060 & 0.030 & 0.046 & 0.182 & 0.047 & 16.47\% & \textbf{4.85\%} \\
\midrule
\textbf{Mean} & \textbf{0.237} & \textbf{0.270} & \textbf{0.143} & 0.047 & 0.060 & 0.030 & 0.046 & \textbf{0.183} & \textbf{0.053} & \textbf{16.15\%} & \textbf{5.27\%} \\
\bottomrule
\addlinespace[1ex]
\multicolumn{12}{l}{\footnotesize $^\dagger$ \textbf{Pre-fuse \%} = $\frac{\text{Sum}}{\text{Step} - \text{Fused} + \text{Sum}}$; \textbf{Post-fuse \%} = $\frac{\text{Fused\_Kernel}}{\text{Step}}$ (calculated based on end-to-end iteration time).} \\
\end{tabular}
\vspace{-10pt}
\end{table*}

\subsection{CHON Recipe}
\label{app:training_recipe}

We investigate the use of NVFP4 for training Linear Attention (focusing on GLA) under a 4-bit floating-point regime. To obtain stable end-to-end FP4 optimization, we incorporate the following techniques:

\ding{172} \textit{Precision partitioning}: retain higher precision for sensitive components (embeddings, output heads, normalization layers, and Attention kernel). 
\ding{173} \textit{Outlier suppression}: apply random Hadamard transforms in the backward pass only, avoiding inference overhead while diffusing sparse large-magnitude directions. 
\ding{174} \textit{Unbiased gradient quantization}: employ stochastic rounding during the backward pass to reduce systematic drift. 
\ding{175} \textit{Asymmetric granularity}: use 1D (per-channel) quantization in the forward path and 2D (tile-wise) quantization in the backward path to balance efficiency and robustness. 
\ding{176} \textit{Recover-block mechanism}: introduce an NVFP4-specific rehydration step for weight regions exhibiting transient outliers, mitigating irreversible clipping during accumulation.
Together these measures enable stable NVFP4 training of GLA without material degradation in convergence or downstream performance.

NVFP4 pairs a compact FP4 format with two-stage MicroScaling to expand effective dynamic range while keeping arithmetic low-bit. We summarize the forward and backward data flow below.

\paragraph{Two-stage MicroScaling.} Given a tensor $T \in \mathbb{R}^{m\times n}$, NVFP4 applies block-wise scales followed by per-row or per-channel refinements:
\begin{itemize}
    \item 2D block scaling: partition $T$ into tiles (e.g., $16\times16$) and compute block scale $s_{b}=\mathrm{RMS}(T_b)$; quantize $\tilde T_b = T_b / s_b$ to FP4.
    \item 1D refinement: apply narrower scales along a single dimension (e.g., $1\times16$) for activations/gradients to capture residual range.
\end{itemize}
This yields better coverage of local peaks while preserving regular memory access for GEMMs.

\paragraph{Forward (RTN) and Backward (SR).} The forward path typically uses round-to-nearest (RTN) quantization for deterministic kernels; the backward path employs stochastic rounding (SR) to reduce bias accumulation in gradients. For weight gradients, Randomized Hadamard Transform (RHT) scrambles inputs to diffuse sparse large-magnitude directions, stabilizing variance under FP4.

\paragraph{Sensitive Ops in higher precision.} Embeddings, output heads, normalization layers, nonlinearities, and QK GEMMs are commonly executed in BF16/FP16. Late layers can also be exempted to avoid end-of-training drift.

\paragraph{Mixed Precision}

Most GEMM operations within the Feedforward network (FFN) are executed in FP4. Each linear layer entails three GEMMs: one for the forward pass (Fprop) and two for the backward pass—one for activation gradients (Dgrad) and one for weight gradients (Wgrad). For ablations, we employ fake quantization: tensors are quantized to NVFP4 but the GEMMs themselves are performed in BF16. This isolates the effect of quantization from kernel-specific implementation details.

\begin{align}
\text{Fprop: } &\quad Y = X W, && Y \in \mathbb{R}^{B \times O},\; X \in \mathbb{R}^{B \times I},\; W \in \mathbb{R}^{I \times O} \\
\text{Dgrad: } &\quad \mathrm{d}X = \mathrm{d}Y\, W^\top \\
\text{Wgrad: } &\quad \mathrm{d}W = X^\top \mathrm{d}Y
\end{align}

\paragraph{Randomized Hadamard Transform}

Although Linear Attention exhibits markedly fewer outliers than Softmax Attention, we observe residual, localized outliers, predominantly in the gating linear modules. To mitigate their impact during training without incurring inference overhead, we apply a Randomized Hadamard Transform (RHT) only in the backward pass. Using RHT in the forward pass would necessitate mirroring the transform at inference time, introducing nontrivial latency and complexity. Following the NVIDIA FP4 recipe~\cite{nvidia_recipe}, we restrict RHT to the Wgrad GEMM: we scramble inputs via orthogonal Walsh–Hadamard transforms with random sign flips (e.g., $\tilde{X} = H D X$, $\mathrm{d}\tilde{Y} = H D' \mathrm{d}Y$) and compute $\mathrm{d}W = \tilde{X}^\top \mathrm{d}\tilde{Y}$, leveraging orthogonality to diffuse sparse large-magnitude components while preserving unbiased gradients.

\begin{itemize}
    \item \textbf{Forward/Backward Precision:} Forward compute uses NVFP4 kernels end-to-end. The backward pass employs stochastic rounding to prevent cumulative error drift, combined with a random Hadamard transform to reduce activation outliers.
    \item \textbf{Quantization Granularity:} We use 1D quantization in the forward path and 2D quantization in the backward path to balance efficiency and stability.
    \item \textbf{Layer Selection (Mixed Precision):} We do not quantize the Linear Attention module itself; the MLP is quantized. Sensitive layers such as output heads and normalization layers (e.g., RMSNorm) remain in higher precision. Overall this constitutes a mixed-precision quantization strategy.
    \item \textbf{Model Coverage:} Experiments are conducted on Mamba2~\cite{dao2024mamba2}, Gated Linear Attention (GLA)~\cite{yang2024gated}, and Gated DeltaNet~\cite{yang2025gateddeltanetworksimproving}. To our knowledge, we are the first organization to demonstrate the practical feasibility of end-to-end FP4 training on purely linear models.
    \item \textbf{Training Accuracy:} The final training achieves a loss error within 0.8\%.
\end{itemize}
This NVFP4-centric approach creates a numerically stable environment where the benefits of FP4 compute can be realized without sacrificing model quality.

\subsection{Details of NVFP4}
\label{app:nvfp4_details}

This section describes the scaling rules used to convert an input tensor to FP4 while preserving dynamic range at both the tensor and block granularity. We use a \emph{global} (tensor-level) scale to align the overall magnitude to the representable range of FP8 scale storage, and \emph{local} (block-level) scales to normalize each block to the FP4 range.

\paragraph{Notation.}
Let $x \in \mathbb{R}^{m}$ be a tensor (flattened for convenience). We partition indices into disjoint blocks $b$ (e.g., contiguous groups of elements). For a block $b$, denote the blockwise absolute maximum by
\begin{equation}
\amax_b \triangleq \max_{i \in b} |x_i| ,
\qquad
\amax_x \triangleq \max_i |x_i|.
\end{equation}
We write $\mathrm{e4m3}(\cdot)$ for quantization to the FP8 E4M3 format (used to store certain scale factors), and let $q(\cdot)$ denote the FP4 quantization function applied to values after scaling.

\begin{definition}[Global encode / decode scale]
\label{def:global_scale}
The \emph{global encode scale} $s_{\mathrm{enc}}$ is defined by
\begin{equation}
s_{\mathrm{enc}} \;=\; \frac{6 \cdot 448}{\amax_x}.
\label{eq:global_enc}
\end{equation}
The corresponding \emph{global decode scale} is
\begin{equation}
s_{\mathrm{dec}} \;=\; \frac{1}{s_{\mathrm{enc}}}.
\label{eq:global_dec}
\end{equation}
\end{definition}

\begin{remark}[Why the constants $6$ and $448$?]
Here $6$ and $448$ are the maximum representable magnitudes in FP4 E2M1 and FP8 E4M3, respectively. Intuitively, $s_{\mathrm{enc}}$ maps the tensor-level maximum $\amax_x$ into the product of the two maxima, so that subsequent blockwise scaling (targeting FP4) can be stored (in FP8) without overflow.
\end{remark}

\paragraph{Implementation note (memory traffic).}
Computing $\amax_x$ requires a pass over the tensor. Applying the global scaling prior to FP4 conversion may introduce an additional pass (read for $\amax_x$, then read again for scaling/conversion). In practice, one may choose smaller granularity for ``global'' scaling (e.g., per-row) to reduce round-trips through device memory, depending on the kernel and layout.

\begin{definition}[Local (block) decode scale]
\label{def:local_decode}
For each block $b$, the \emph{local decode scale} is chosen so that the largest magnitude in the block maps to the FP4 maximum:
\begin{equation}
s_{\mathrm{dec},b} \;=\; \frac{\amax_b}{6}.
\label{eq:local_dec}
\end{equation}
\end{definition}

\paragraph{Storing local scales in FP8.}
Local decode scales must be stored in FP8 (E4M3) for Tensor Core execution. To ensure these scales are representable in FP8, we first multiply them by the global encode scale and then quantize:
\begin{equation}
s_{\mathrm{dec},b}^{\mathrm{e4m3}}
\;=\;
\mathrm{e4m3}\!\bigl(s_{\mathrm{dec},b} \cdot s_{\mathrm{enc}}\bigr).
\label{eq:local_dec_e4m3}
\end{equation}
The role of $s_{\mathrm{enc}}$ is to remap the largest local decode scale
$\max_b s_{\mathrm{dec},b} = \amax_x/6$
into the FP8 range (whose maximum is $448$), hence avoiding saturation when storing $s_{\mathrm{dec},b}$ in FP8.

\begin{remark}[Recovering a usable local encode scale]
Quantizing $s_{\mathrm{dec},b} \cdot s_{\mathrm{enc}}$ in FP8 introduces rounding error. A practical way to obtain the effective \emph{local encode scale} is to invert the quantized value in higher precision and rescale back:
\begin{equation}
s_{\mathrm{enc},b}
\;\approx\;
\frac{1}{\mathrm{fp32}\!\left(s_{\mathrm{dec},b}^{\mathrm{e4m3}}\right)\cdot s_{\mathrm{dec}}}.
\end{equation}
This construction aims to keep $s_{\mathrm{enc},b}\cdot s_{\mathrm{dec}}\cdot \mathrm{fp32}(s_{\mathrm{dec},b}^{\mathrm{e4m3}})\approx 1$,
so that the overall scale/de-scale pipeline approximately preserves the original values. Round-to-nearest-even is typically used when forming~\eqref{eq:local_dec_e4m3}.
\end{remark}

\paragraph{Conversion and GEMM-time de-scaling}

\begin{definition}[Blockwise FP4 conversion]
\label{def:conversion}
For each element $x_i$ in block $b$, we form the FP4 value
\begin{equation}
\hat{x}_i \;=\; q\!\bigl(x_i \cdot s_{\mathrm{enc},b}\bigr),
\label{eq:fp4_quant}
\end{equation}
where $q(\cdot)$ denotes FP4 quantization.
\end{definition}

Beyond storing the quantized values $\hat{x}$, we store both the global decode scale $s_{\mathrm{dec}}$ (typically in FP32) and the blockwise FP8 scales $s_{\mathrm{dec},b}^{\mathrm{e4m3}}$ for use during matrix multiplication.

\paragraph{Applying scales inside Tensor Core GEMM.}
Let $x$ and $y$ denote the two FP4-quantized operands, each with its own blockwise FP8 decode scales. For a block $b$, Tensor Cores apply the blockwise decode scales to the partial dot product over that block:
\begin{equation}
s_{\mathrm{dec},b,x}^{\mathrm{e4m3}}\cdot s_{\mathrm{dec},b,y}^{\mathrm{e4m3}}
\cdot
\sum_{k \in b} \bigl(\hat{x}_k \cdot \hat{y}_k\bigr).
\label{eq:blockwise_gemm}
\end{equation}
After the GEMM accumulation, the global decode scales $s_{\mathrm{dec},x}$ and $s_{\mathrm{dec},y}$ are applied to the output in an analogous fashion.

\begin{remark}[Interpretation: two-level dynamic range control]
The blockwise scaling (via $s_{\mathrm{dec},b}$) adapts to local magnitude variations so FP4 uses its limited dynamic range effectively, while the global scaling (via $s_{\mathrm{enc}}$ and $s_{\mathrm{dec}}$) ensures the \emph{scales themselves} remain representable and efficiently usable by hardware (FP8 for per-block metadata, FP32 for final correction).
\end{remark}

\section{Additional Experiments}
\label{app:addition_exp}

\subsection{Supervised Fine-tuning with NVFP4}
\label{app:nvfp4_sft}

We conduct supervised fine-tuning experiments with NVFP4 quantization to assess its effectiveness in practical training settings. \textbf{Dataset.} We use a subset of the OpenMathReasoning~\cite{openmathreasoning} dataset consisting of chain-of-thought reasoning examples, totaling 12.67B tokens. \textbf{Model and Hardware.} All experiments are based on Qwen3-8B and are executed on a single machine equipped with 8 Blackwell GPUs.

\textbf{Quantization Configuration.} NVFP4 quantization is applied to all linear layers, including the query, key, value, and output projections in Softmax Attention, as well as the gate, up, and down projections in MLP blocks. To preserve model expressiveness, the \texttt{lm\_head} and \texttt{embedding} layers are kept in higher precision. Flash attention operations are not quantized, and we do not enable compensation strategies or the last4 optimization in this study.

\textbf{Training Configuration and Hyperparameters.} We train with a maximum sequence length of 32{,}768 and a batch size of 8, using FSDP2 for distributed training (FSDP size=8). The training framework is \texttt{VERL}, with FSDP2 as the training engine and \texttt{sglang} as the inference engine. The model is trained for one epoch, requiring 115.4 hours of wall-clock time (924 GPU hours). Optimization uses a learning rate of $2 \times 10^{-5}$, a warmup ratio of 0.01, weight decay of 0.1, betas of [0.9, 0.95], gradient clipping of 1.0, a minimum learning-rate ratio of 0.1, and a cosine warmup-and-decay schedule.

\textbf{Training Dynamics.} Fig.~\ref{fig:sft_loss} illustrates the training loss trajectories for both the BF16 and NVFP4 configurations. We observe a discernible divergence in loss between the two methods as training progresses, which becomes particularly pronounced during the learning rate decay phase. This phenomenon is likely attributable to the diminishing learning rate, which amplifies the relative impact of quantization noise on convergence—an observation consistent with established low-precision training protocols~\cite{nvidia_recipe}. Furthermore, the evolution of gradient norms reveals that NVFP4 maintains higher magnitudes than BF16 in the later stages of training, suggesting heightened gradient variance inherent to the low-precision regime.

\textbf{Evaluation Results.} The efficacy of the fine-tuned models is assessed on the AIME 2024 and AIME 2025 mathematical reasoning benchmarks. To mitigate the inherent stochasticity in model generation, performance metrics are averaged over 32 independent runs, although minor fluctuations across evaluation trials persist. As detailed in Tab.~\ref{tab:sft_results}, the NVFP4-trained model achieves performance highly competitive with the BF16 baseline, yielding AIME24 scores of approximately 0.61--0.62 and AIME25 scores between 0.45--0.48. These results underscore that NVFP4 quantization preserves model integrity and reasoning capabilities while substantially decreasing the computational overhead of the training process.

\begin{figure}[t]
    \centering
    \includegraphics[width=0.8\linewidth]{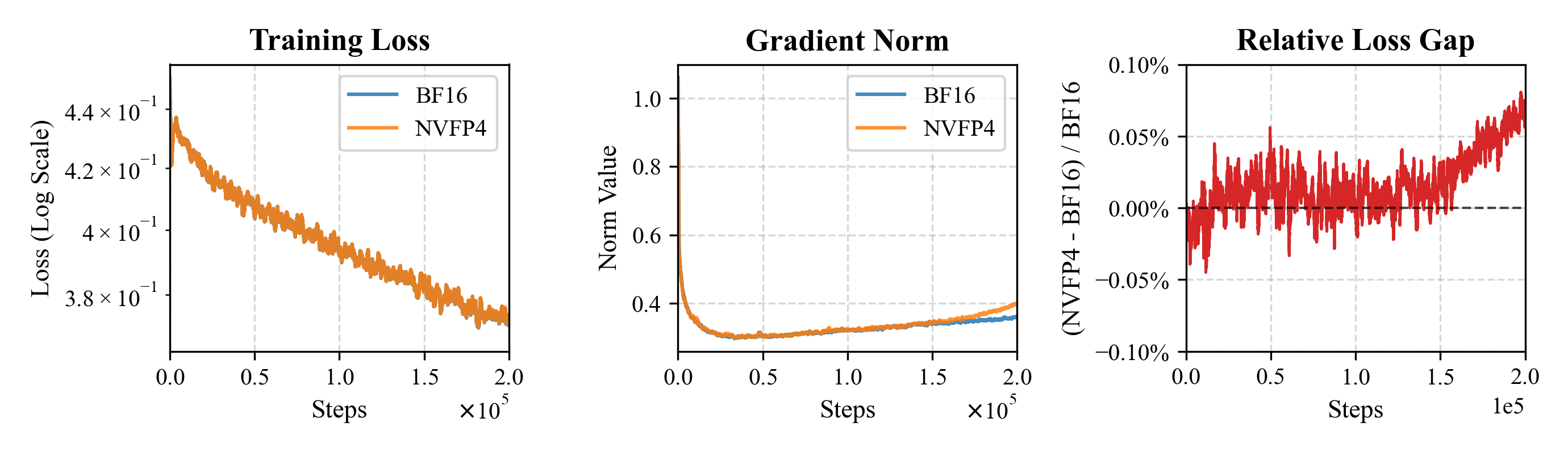}
    \vspace{-10pt}
    \caption{\textbf{Training dynamics of NVFP4 during Supervised Fine-tuning (SFT).}  (a) Training loss curves comparing BF16 and NVFP4 training. The loss gap increases during the learning rate decay phase. (b) Gradient norm evolution during training. NVFP4 shows higher gradient norms in later stages, indicating increased variance under low-precision training. (c) Loss Gap (NVFP4 - BF16)/BF16 during training. The loss gap continues to widen as training progresses, particularly during the learning rate decay phase.}
    \label{fig:sft_loss}
    \vspace{-15pt}
\end{figure}

\begin{table}[t]
    \centering
    \caption{\textbf{Evaluation results on AIME benchmarks for fine-tuned Qwen3-8B.} Comparison of mathematical reasoning performance between BF16 and NVFP4 models across three independent runs. Results are reported as mean scores averaged over 32 samples per run. The data demonstrates that NVFP4 training maintains competitive accuracy, achieving parity with the BF16 baseline on both AIME24 and 25.}
    \label{tab:sft_results}
    \begin{tabular}{llcc}
        \toprule
        Run & Method & AIME24 & AIME25 \\
        \midrule
        Run 1 & BF16 Training  & 0.595 & 0.478 \\
        Run 2 & BF16 Training  & 0.599 & 0.472 \\
        Run 3 & BF16 Training  & 0.620 & 0.482 \\
        \midrule
        Run 1 & NVFP4 Training & 0.620 & 0.448 \\
        Run 2 & NVFP4 Training & 0.615 & 0.478 \\
        Run 3 & NVFP4 Training & 0.637 & 0.469 \\
        \bottomrule
    \end{tabular}
\end{table}

\subsection{Reinforcement Learning with NVFP4}
\label{app:nvfp4_rl}

\paragraph{Experimental Setup.} We use the Qwen3-8B model as the base model for all experiments and train on the GSM8K dataset, using the training set for policy optimization and the test set for evaluation; all experiments are conducted on 8 NVIDIA Blackwell GPUs. For policy optimization, we employ GRPO~\cite{shao2024GRPO} using verl~\cite{sheng2025hybridflow}, with a learning rate of $1 \times 10^{-6}$, a training batch size of 1024, a PPO mini-batch size of 256, and a micro-batch size of 32 per GPU. The maximum prompt length is 512 tokens and the maximum response length is 1024 tokens. KL divergence regularization is enabled with a coefficient of $0.001$ using the low-variance KL estimator, while the entropy coefficient is set to 0 (no entropy regularization). Training runs for 15 epochs with gradient checkpointing enabled to reduce memory consumption. Under FSDP, there is no parameter or optimizer offloading for the actor model, while parameter offloading is enabled for the reference model. For rollouts, we use vLLM with 2-way tensor parallelism and 80\% GPU memory utilization; for each prompt we generate 8 responses ($n=8$), and log-probability computation uses a micro-batch size of 32 per GPU.

\paragraph{Configurations.} We compare five training/rollout setups (Tab.~\ref{tab:exp_configs}): C1 (\texttt{nvfp4\_R\_fp8}) uses NVFP4 training with FP8 rollout and no importance sampling (IS); C2 (\texttt{nvfp4\_R\_fp8\_token}) matches C1 but adds token-level IS with clipping at 2.0; C3 (\texttt{nvfp4\_R\_fp8\_seq}) matches C1 but uses sequence-level IS with clipping at 2.0; C4 (\texttt{nvfp4\_R\_bf16}) uses NVFP4 training with BF16 rollout and no IS; and C5 (\texttt{bf16\_R\_bf16}) is the full-precision BF16 training+rollout baseline. All NVFP4 runs use the same NVFP4 recipe (no HCP compensation and quantize all layers), as noted in Tab.~\ref{tab:exp_configs}.

\begin{table}[t]
    \centering
    \caption{\textbf{Training and rollout configurations for Reinforcement Learning (RL).} Detailed setup for GRPO training on the GSM8K dataset, comparing five configurations (C1--C5). The taxonomy spans varying training precisions (BF16 vs. NVFP4), rollout precisions (FP8 vs. BF16), and Truncated Importance Sampling (TIS) strategies (None, Token-level, or Sequence-level) with corresponding clipping thresholds.}
    \begin{tabular}{@{}llcccc@{}}
    \toprule
    \textbf{ID} & \textbf{Configuration} & \textbf{Train} & \textbf{Rollout} & \textbf{TIS Level} & \textbf{IS Clip} \\
    \midrule
    C1 & \texttt{nvfp4\_R\_fp8}        & NVFP4 & FP8  & None & -- \\
    C2 & \texttt{nvfp4\_R\_fp8\_token} & NVFP4 & FP8  & Token & 2.0 \\
    C3 & \texttt{nvfp4\_R\_fp8\_seq}   & NVFP4 & FP8  & Seq. & 2.0 \\
    C4 & \texttt{nvfp4\_R\_bf16}       & NVFP4 & BF16 & None & -- \\
    C5 & \texttt{bf16\_R\_bf16}        & BF16  & BF16 & None & -- \\
    \bottomrule
    \end{tabular}
    \label{tab:exp_configs}
\end{table}

\paragraph{Analysis of NVFP4 RL.}
(1) \textbf{Training Stability and Convergence.} Fig.~\ref{app:rl_training_nvfp4} evaluates GRPO training performance using NVFP4 quantization across various rollout configurations compared to a full BF16 baseline (\texttt{bf16\_R\_bf16}). The results demonstrate that NVFP4 training with either BF16 or FP8 rollouts (blue, green, and red curves) achieves a mean reward of approximately 0.95, matching the baseline's convergence and validating the effectiveness of NVFP4 in reinforcement learning. While NVFP4 training with FP8 rollout (green) shows a temporary performance dip around step 40, it recovers rapidly. Notably, although we have implemented a framework for end-to-end NVFP4 training and rollout, the current configuration (represented by the purple curve) faces convergence challenges where the reward remains stagnant; this remains a primary focus of our ongoing research.

(2) \textbf{Policy Dynamics and Efficiency.} Analysis of policy entropy and response length reveals distinct training behaviors. While the BF16 baseline and standard NVFP4 configurations follow a natural progression toward a deterministic policy (decreasing entropy), token-level importance sampling (red) induces a sharp increase in entropy after step 60, encouraging sustained exploration. Furthermore, all converging configurations exhibit a downward trend in response length as the model learns to provide more concise reasoning for the GSM8K task. Interestingly, NVFP4-trained models (blue and green) achieve shorter average responses ($\approx$ 400 tokens) compared to the BF16 baseline ($\approx$ 500 tokens), suggesting that NVFP4 quantization may promote more efficient reasoning patterns while maintaining high performance.

\begin{figure}
    \centering
    \includegraphics[width=0.85\linewidth]{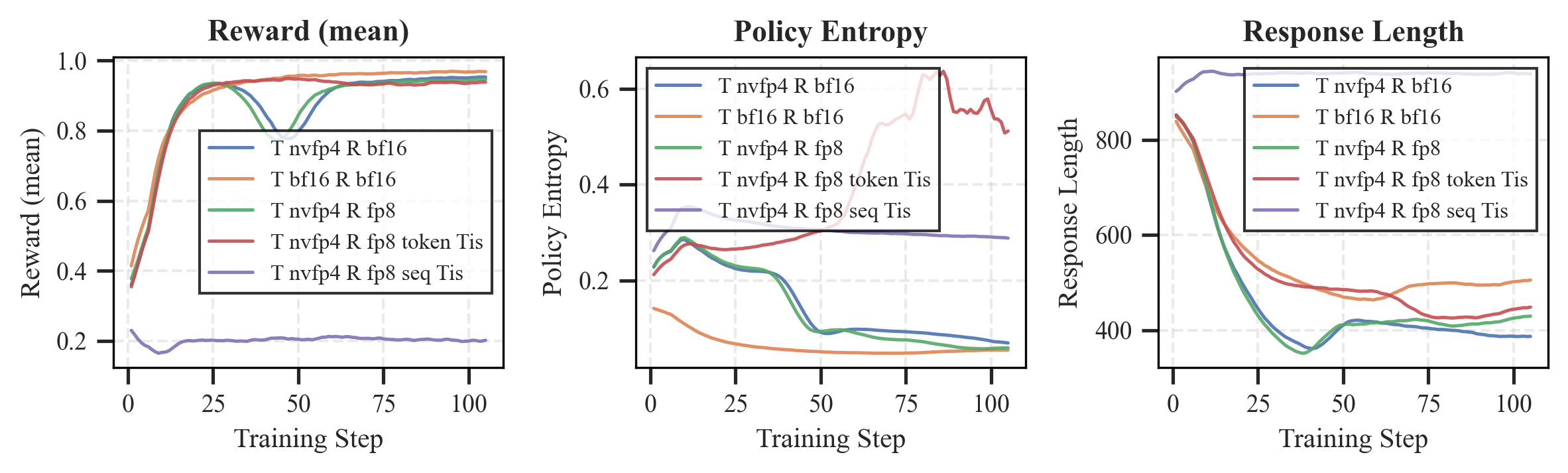}
    \vspace{-10pt}
    \caption{\textbf{Comparison of GRPO training dynamics on GSM8K across different precision configurations.} (Left) \textbf{Mean Reward}: NVFP4 training with either BF16 or FP8 rollouts matches the BF16 baseline convergence ($\approx 0.95$), despite a transient dip around step 40. (Middle) \textbf{Policy Entropy}: Standard configurations exhibit decreasing entropy as the policy becomes deterministic, while token-level importance sampling (red) triggers an entropy spike after step 60, encouraging sustained exploration. (Right) \textbf{Response Length}: All converging configurations show a downward trend; notably, NVFP4 models achieve more concise reasoning paths ($\approx 400$ tokens) compared to the BF16 baseline ($\approx 500$ tokens).}
    \label{app:rl_training_nvfp4}
    \vspace{-10pt}
\end{figure}

\subsection{Additional Downstream Tasks Performance}
\label{app:down_steam_tasks}

To further demonstrate the generalizability and scalability of the CHON recipe, we provide additional zero-shot evaluation results on downstream benchmarks for several model architectures and scales, including GLA-1.3B, GLA-7B, Gated-DeltaNet-1B, and GSA-340M. These results, summarized in Tab.~\ref{tab:more_mainexp_results}, complement the findings presented in the main text for GLA-340M, Gated DeltaNet-340M, GSA-1B, and Qwen3-1.7B.

\paragraph{Analysis}
As shown in Tab.~\ref{tab:more_mainexp_results}, our proposed recipe consistently narrows the performance gap between 4-bit quantized training and the BF16 baseline across diverse settings:
(1) \textbf{Scalability to Larger Models.} For GLA-7B, the CHON recipe (61.3\%) outperforms both FP8 (59.9\%) and the BF16 baseline (59.3\%), suggesting that larger models combined with HCP can achieve highly efficient training without quality degradation.
(2) \textbf{Robustness Across Architectures.} CHON effectively recovers performance drops seen in standard FP8. For Gated-DeltaNet-1B, it reaches 58.3\% (near BF16's 58.6\%), and for GSA-340M, it achieves parity with full precision (31.4\% vs. 31.3\%), demonstrating its versatility across different linear-attention frameworks.
(3) \textbf{Effectiveness of HCP.} NVFP4 with HCP consistently serves as the best-performing quantized configuration. By patching ``hot channels'' to mitigate heavy-tailed noise, HCP preserves critical activations and stabilizes optimization across diverse architectures. These experiments confirm the CHON recipe as a robust, scale-agnostic framework for stable NVFP4 pretraining.

\begin{table*}[t]
\centering
\footnotesize
\setlength{\tabcolsep}{1.8pt}
\caption{\textbf{Zero-shot performance on downstream benchmarks across various linear attention architectures and scales.} Results are reported as percentages (\%) with standard errors. $Acc_n$ denotes length-normalized accuracy; $Avg$ represents the mean across all tasks. The data demonstrates that CHON consistently narrows the quantization gap compared to BF16 and outperforms standard FP8 across scales (340M to 7B).}
\resizebox{\textwidth}{!}{
\begin{tabular}{@{}lccccccccccccccc@{}}
\toprule
\textbf{Setting} &
\multicolumn{2}{c}{\textbf{ARC-C}} &
\multicolumn{2}{c}{\textbf{ARC-E}} &
\multicolumn{2}{c}{\textbf{HellaSwag}} &
\multicolumn{2}{c}{\textbf{OBQA}} &
\multicolumn{2}{c}{\textbf{PIQA}} &
\multicolumn{2}{c}{\textbf{SciQ}} &
\textbf{Winograd} & \textbf{Avg} \\
& Acc & Acc$_{\text{n}}$ & Acc & Acc$_{\text{n}}$ & Acc & Acc$_{\text{n}}$ & Acc & Acc$_{\text{n}}$ & Acc & Acc$_{\text{n}}$ & Acc & Acc$_{\text{n}}$ & Acc & \\
\midrule
\multicolumn{15}{c}{\mygrey \textbf{GLA-1.3B}} \\
BF16 & 33.8{\scriptsize$\pm$1.4} & 36.9{\scriptsize$\pm$1.4} & 70.0{\scriptsize$\pm$0.9} & 64.2{\scriptsize$\pm$1.0} & 42.0{\scriptsize$\pm$0.5} & 54.1{\scriptsize$\pm$0.5} & 30.4{\scriptsize$\pm$2.1} & 40.0{\scriptsize$\pm$2.2} & 72.3{\scriptsize$\pm$1.0} & 72.4{\scriptsize$\pm$1.0} & 89.5{\scriptsize$\pm$1.0} & 83.5{\scriptsize$\pm$1.2} & 57.9{\scriptsize$\pm$1.4} & 57.5 \\
FP8 & 33.8{\scriptsize$\pm$1.4} & 35.2{\scriptsize$\pm$1.4} & 70.2{\scriptsize$\pm$0.9} & 62.8{\scriptsize$\pm$1.0} & 41.7{\scriptsize$\pm$0.5} & 54.2{\scriptsize$\pm$0.5} & \textbf{27.8{\scriptsize$\pm$2.0}} & 38.8{\scriptsize$\pm$2.2} & 71.0{\scriptsize$\pm$1.1} & 72.0{\scriptsize$\pm$1.1} & \textbf{89.7{\scriptsize$\pm$1.0}} & 83.7{\scriptsize$\pm$1.2} & 55.2{\scriptsize$\pm$1.4} & 56.6 \\
NVFP4 & 33.6{\scriptsize$\pm$1.4} & \textbf{37.3{\scriptsize$\pm$1.4}} & \textbf{70.4{\scriptsize$\pm$0.9}} & 63.4{\scriptsize$\pm$1.0} & \textbf{42.0{\scriptsize$\pm$0.5}} & \textbf{54.3{\scriptsize$\pm$0.5}} & 27.0{\scriptsize$\pm$2.0} & 38.2{\scriptsize$\pm$2.2} & 71.1{\scriptsize$\pm$1.1} & \textbf{72.3{\scriptsize$\pm$1.0}} & 88.6{\scriptsize$\pm$1.0} & 83.0{\scriptsize$\pm$1.2} & 56.2{\scriptsize$\pm$1.4} & 56.7 \\
CHON & \textbf{36.6{\scriptsize$\pm$1.4}} & 37.2{\scriptsize$\pm$1.4} & 70.0{\scriptsize$\pm$0.9} & \textbf{63.9{\scriptsize$\pm$1.0}} & 41.5{\scriptsize$\pm$0.5} & 54.0{\scriptsize$\pm$0.5} & \textbf{27.8{\scriptsize$\pm$2.0}} & \textbf{40.4{\scriptsize$\pm$2.2}} & \textbf{71.7{\scriptsize$\pm$1.1}} & 71.7{\scriptsize$\pm$1.1} & 89.0{\scriptsize$\pm$1.0} & \textbf{84.8{\scriptsize$\pm$1.1}} & \textbf{56.5{\scriptsize$\pm$1.4}} & \textbf{57.3} \\[0.6ex]
\midrule
\multicolumn{15}{c}{\mygrey \textbf{GLA-7B}} \\
BF16 & 37.0{\scriptsize$\pm$1.4} & 41.0{\scriptsize$\pm$1.4} & 72.3{\scriptsize$\pm$0.9} & 66.5{\scriptsize$\pm$1.0} & 45.0{\scriptsize$\pm$0.5} & 59.1{\scriptsize$\pm$0.5} & 28.8{\scriptsize$\pm$2.0} & 39.6{\scriptsize$\pm$2.2} & 72.3{\scriptsize$\pm$1.0} & 73.0{\scriptsize$\pm$1.0} & 90.7{\scriptsize$\pm$0.9} & 85.8{\scriptsize$\pm$1.1} & 59.9{\scriptsize$\pm$1.4} & 59.3 \\
FP8 & 37.2{\scriptsize$\pm$1.4} & 40.1{\scriptsize$\pm$1.4} & 73.7{\scriptsize$\pm$0.9} & 65.7{\scriptsize$\pm$1.0} & 45.9{\scriptsize$\pm$0.5} & 59.8{\scriptsize$\pm$0.5} & 31.4{\scriptsize$\pm$2.1} & 42.2{\scriptsize$\pm$2.2} & 72.4{\scriptsize$\pm$1.0} & 72.5{\scriptsize$\pm$1.0} & 91.1{\scriptsize$\pm$0.9} & 85.8{\scriptsize$\pm$1.1} & 60.6{\scriptsize$\pm$1.4} & 59.9 \\
NVFP4 & \textbf{39.3{\scriptsize$\pm$1.4}} & \textbf{41.9{\scriptsize$\pm$1.4}} & 74.8{\scriptsize$\pm$0.9} & 68.2{\scriptsize$\pm$1.0} & \textbf{46.8{\scriptsize$\pm$0.5}} & 61.6{\scriptsize$\pm$0.5} & \textbf{32.8{\scriptsize$\pm$2.1}} & 43.2{\scriptsize$\pm$2.2} & \textbf{73.7{\scriptsize$\pm$1.0}} & 74.0{\scriptsize$\pm$1.0} & 92.0{\scriptsize$\pm$0.9} & \textbf{87.7{\scriptsize$\pm$1.0}} & \textbf{61.6{\scriptsize$\pm$1.4}} & \textbf{61.4} \\
CHON & \textbf{39.3{\scriptsize$\pm$1.4}} & 40.6{\scriptsize$\pm$1.4} & \textbf{75.1{\scriptsize$\pm$0.9}} & \textbf{69.3{\scriptsize$\pm$1.0}} & 46.7{\scriptsize$\pm$0.5} & \textbf{61.9{\scriptsize$\pm$0.5}} & 32.0{\scriptsize$\pm$2.1} & \textbf{43.4{\scriptsize$\pm$2.2}} & 73.1{\scriptsize$\pm$1.0} & \textbf{75.0{\scriptsize$\pm$1.0}} & \textbf{92.3{\scriptsize$\pm$0.8}} & 87.3{\scriptsize$\pm$1.1} & 61.2{\scriptsize$\pm$1.4} & 61.3 \\[0.6ex]
\midrule
\multicolumn{15}{c}{\mygrey \textbf{Gated-DeltaNet-1B}} \\
BF16 & 35.9{\scriptsize$\pm$1.4} & 39.0{\scriptsize$\pm$1.4} & 71.9{\scriptsize$\pm$0.9} & 65.9{\scriptsize$\pm$1.0} & 43.5{\scriptsize$\pm$0.5} & 56.6{\scriptsize$\pm$0.5} & 29.4{\scriptsize$\pm$2.0} & 40.6{\scriptsize$\pm$2.2} & 72.8{\scriptsize$\pm$1.0} & 72.9{\scriptsize$\pm$1.0} & 89.5{\scriptsize$\pm$1.0} & 85.3{\scriptsize$\pm$1.1} & 58.7{\scriptsize$\pm$1.4} & 58.6 \\
FP8 & 29.3{\scriptsize$\pm$1.3} & 31.9{\scriptsize$\pm$1.4} & 65.7{\scriptsize$\pm$1.0} & 59.6{\scriptsize$\pm$1.0} & 38.8{\scriptsize$\pm$0.5} & 48.6{\scriptsize$\pm$0.5} & 26.0{\scriptsize$\pm$2.0} & 37.0{\scriptsize$\pm$2.2} & 69.7{\scriptsize$\pm$1.1} & 69.5{\scriptsize$\pm$1.1} & 86.4{\scriptsize$\pm$1.1} & 81.5{\scriptsize$\pm$1.2} & 53.3{\scriptsize$\pm$1.4} & 53.6 \\
NVFP4 & 35.4{\scriptsize$\pm$1.4} & 36.6{\scriptsize$\pm$1.4} & \textbf{72.0{\scriptsize$\pm$0.9}} & \textbf{65.4{\scriptsize$\pm$1.0}} & \textbf{43.4{\scriptsize$\pm$0.5}} & \textbf{56.3{\scriptsize$\pm$0.5}} & \textbf{30.6{\scriptsize$\pm$2.1}} & \textbf{40.8{\scriptsize$\pm$2.2}} & \textbf{72.1{\scriptsize$\pm$1.1}} & 72.3{\scriptsize$\pm$1.0} & 90.5{\scriptsize$\pm$0.9} & \textbf{85.6{\scriptsize$\pm$1.1}} & 57.6{\scriptsize$\pm$1.4} & \textbf{58.4} \\
CHON & \textbf{36.2{\scriptsize$\pm$1.4}} & \textbf{37.1{\scriptsize$\pm$1.4}} & 71.1{\scriptsize$\pm$0.9} & 64.1{\scriptsize$\pm$1.0} & 43.2{\scriptsize$\pm$0.5} & \textbf{56.3{\scriptsize$\pm$0.5}} & 30.2{\scriptsize$\pm$2.1} & 40.0{\scriptsize$\pm$2.2} & \textbf{72.1{\scriptsize$\pm$1.1}} & \textbf{72.6{\scriptsize$\pm$1.0}} & \textbf{90.9{\scriptsize$\pm$0.9}} & 84.5{\scriptsize$\pm$1.2} & \textbf{59.1{\scriptsize$\pm$1.4}} & 58.3 \\[0.6ex]
\midrule
\multicolumn{15}{c}{\mygrey \textbf{GSA-340M}} \\
BF16 & 18.0{\scriptsize$\pm$1.1} & 22.1{\scriptsize$\pm$1.2} & 33.0{\scriptsize$\pm$1.0} & 31.5{\scriptsize$\pm$1.0} & 25.5{\scriptsize$\pm$0.4} & 25.1{\scriptsize$\pm$0.4} & 12.2{\scriptsize$\pm$1.5} & 24.4{\scriptsize$\pm$1.9} & 53.8{\scriptsize$\pm$1.2} & 52.3{\scriptsize$\pm$1.2} & 28.5{\scriptsize$\pm$1.4} & 30.4{\scriptsize$\pm$1.5} & 50.1{\scriptsize$\pm$1.4} & 31.3 \\
FP8 & 18.0{\scriptsize$\pm$1.1} & 22.1{\scriptsize$\pm$1.2} & \textbf{33.0{\scriptsize$\pm$1.0}} & 31.5{\scriptsize$\pm$1.0} & 25.5{\scriptsize$\pm$0.4} & \textbf{25.1{\scriptsize$\pm$0.4}} & 12.2{\scriptsize$\pm$1.5} & 24.4{\scriptsize$\pm$1.9} & 53.8{\scriptsize$\pm$1.2} & 52.3{\scriptsize$\pm$1.2} & 28.5{\scriptsize$\pm$1.4} & \textbf{30.4{\scriptsize$\pm$1.5}} & 50.1{\scriptsize$\pm$1.4} & 31.3 \\
NVFP4 & \textbf{18.6{\scriptsize$\pm$1.1}} & 21.8{\scriptsize$\pm$1.2} & 30.4{\scriptsize$\pm$0.9} & 30.1{\scriptsize$\pm$0.9} & \textbf{25.9{\scriptsize$\pm$0.4}} & 24.9{\scriptsize$\pm$0.4} & \textbf{13.4{\scriptsize$\pm$1.5}} & \textbf{25.0{\scriptsize$\pm$1.9}} & 53.6{\scriptsize$\pm$1.2} & 51.3{\scriptsize$\pm$1.2} & 25.9{\scriptsize$\pm$1.4} & 28.9{\scriptsize$\pm$1.4} & 51.1{\scriptsize$\pm$1.4} & 30.1 \\
CHON & 18.1{\scriptsize$\pm$1.1} & \textbf{22.4{\scriptsize$\pm$1.2}} & 31.4{\scriptsize$\pm$1.0} & \textbf{31.6{\scriptsize$\pm$1.0}} & 25.6{\scriptsize$\pm$0.4} & 24.7{\scriptsize$\pm$0.4} & 12.4{\scriptsize$\pm$1.5} & 24.8{\scriptsize$\pm$1.9} & \textbf{54.6{\scriptsize$\pm$1.2}} & \textbf{53.2{\scriptsize$\pm$1.2}} & \textbf{29.2{\scriptsize$\pm$1.4}} & 29.2{\scriptsize$\pm$1.4} & \textbf{51.4{\scriptsize$\pm$1.4}} & \textbf{31.4} \\[0.6ex]
\bottomrule
\end{tabular}}
\vspace{-10pt}
\label{tab:more_mainexp_results}
\end{table*}

\section{Additional Analysis}
\label{app:analysis}

\subsection{Kurtosis Analysis}
\label{app:kurtosis}

\begin{figure*}[ht]
    \centering
    \begin{minipage}{0.32\linewidth}
        \includegraphics[width=\linewidth]{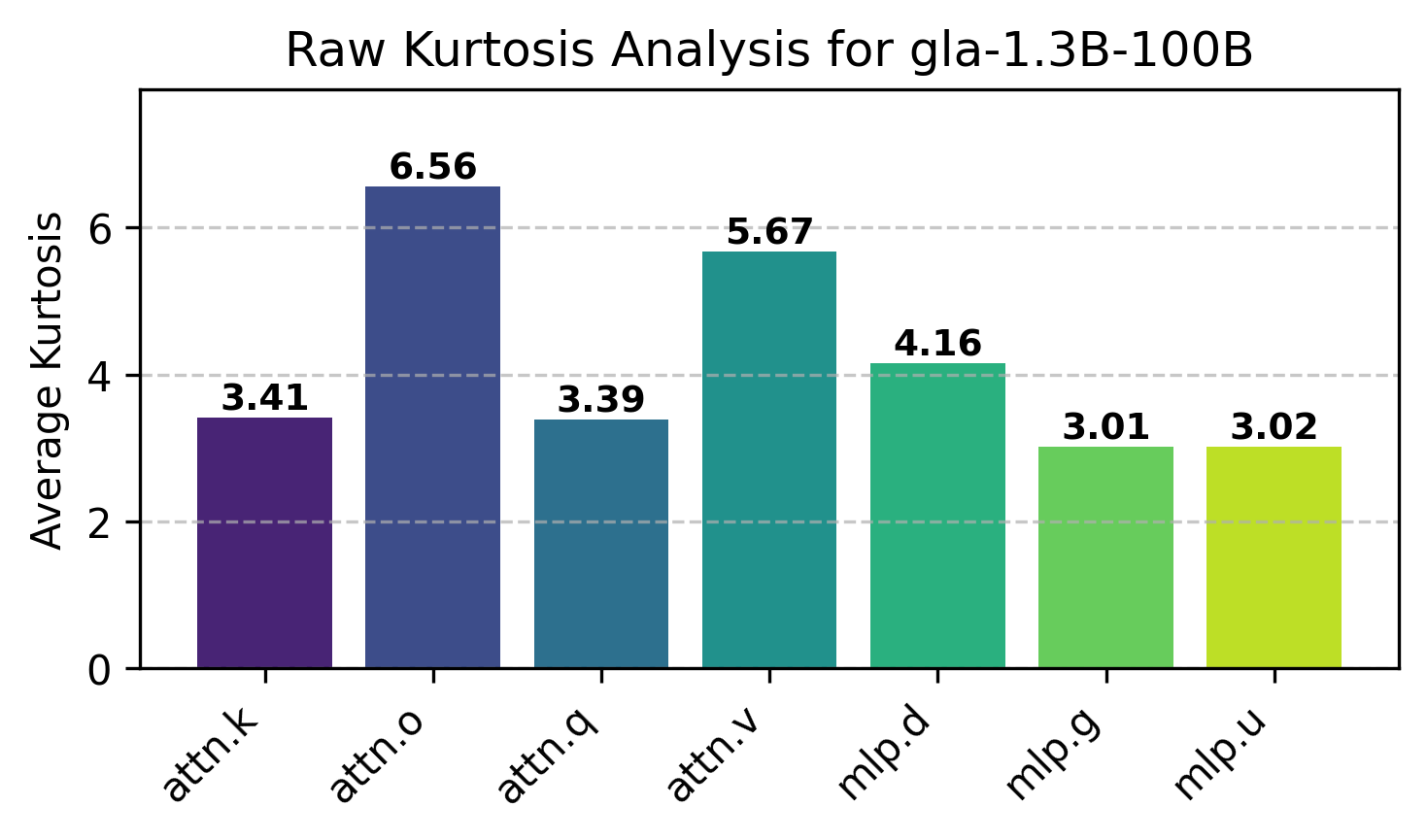}
        \centering\small{Gated Linear Attention (1.3B)}
    \end{minipage}\hfill
    \begin{minipage}{0.32\linewidth}
        \includegraphics[width=\linewidth]{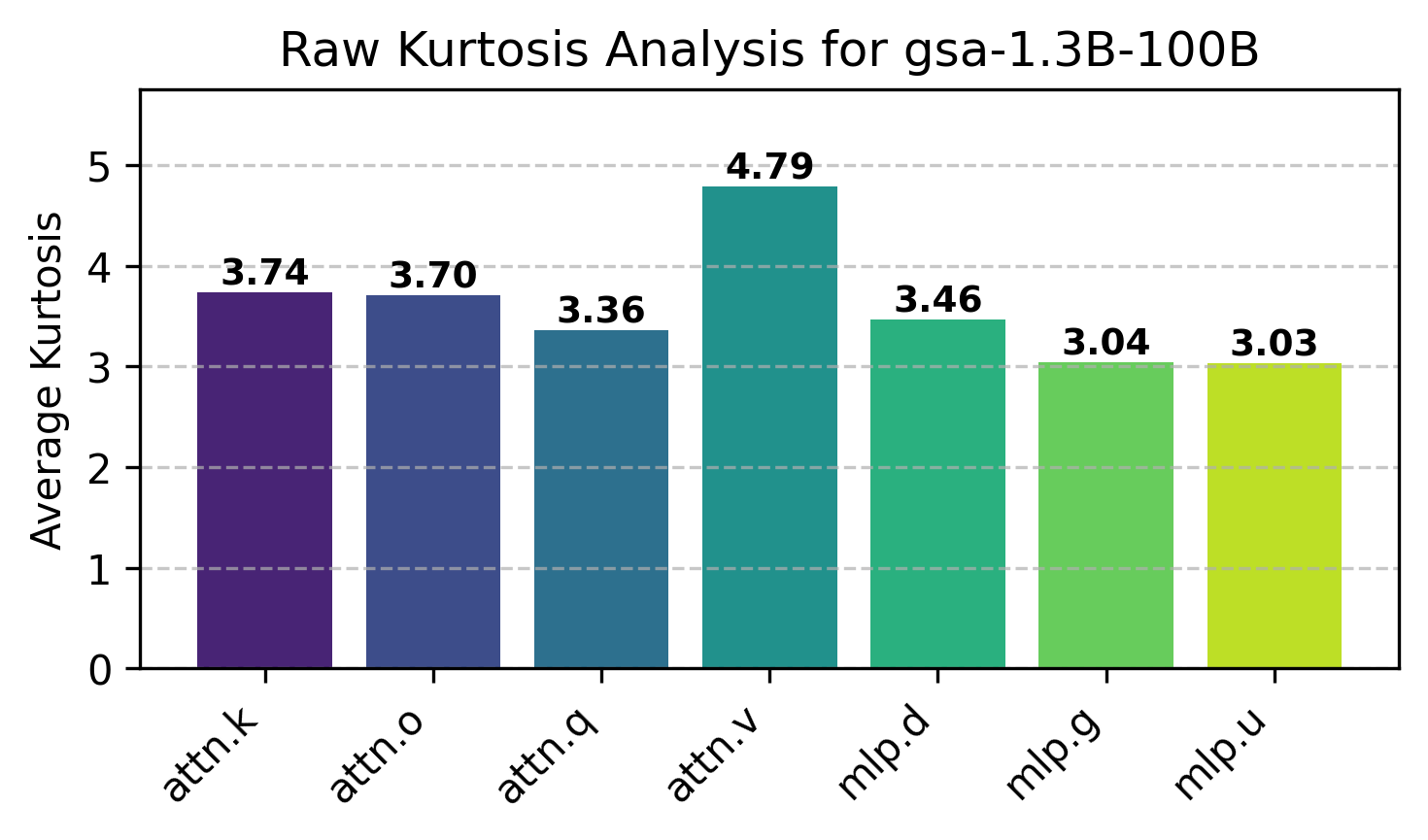}
        \centering\small{Gated Slot Attention (1.3B)}
    \end{minipage}\hfill
    \begin{minipage}{0.32\linewidth}
        \includegraphics[width=\linewidth]{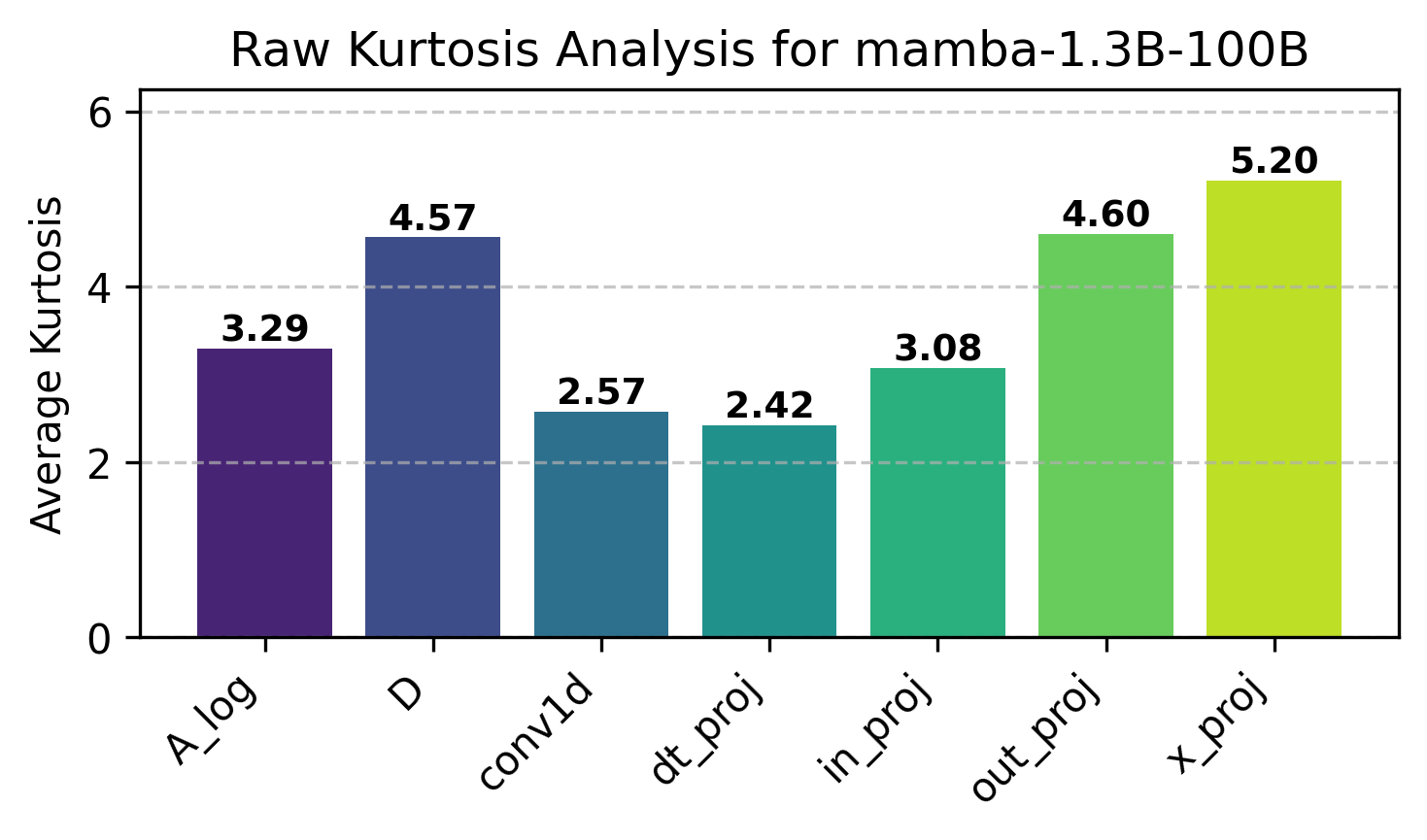}
        \centering\small{Mamba (1.3B)}
    \end{minipage}\hfill 

    \vspace{0.1em}

    \begin{minipage}{0.32\linewidth}
        \includegraphics[width=\linewidth]{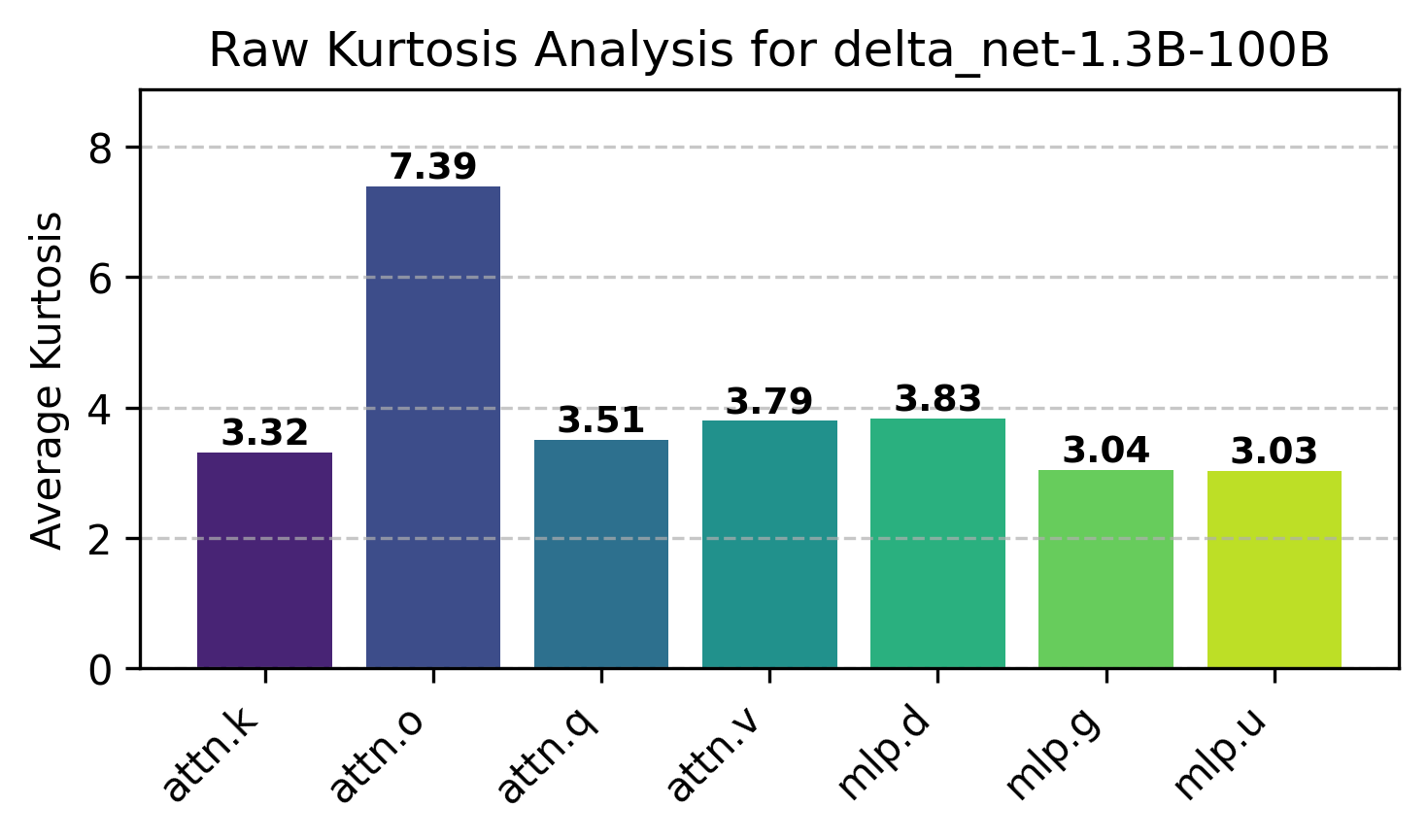}
        \centering\small{Gated DeltaNet (1.3B)}
    \end{minipage}
    \hfill
    \begin{minipage}{0.32\linewidth}
        \includegraphics[width=\linewidth]{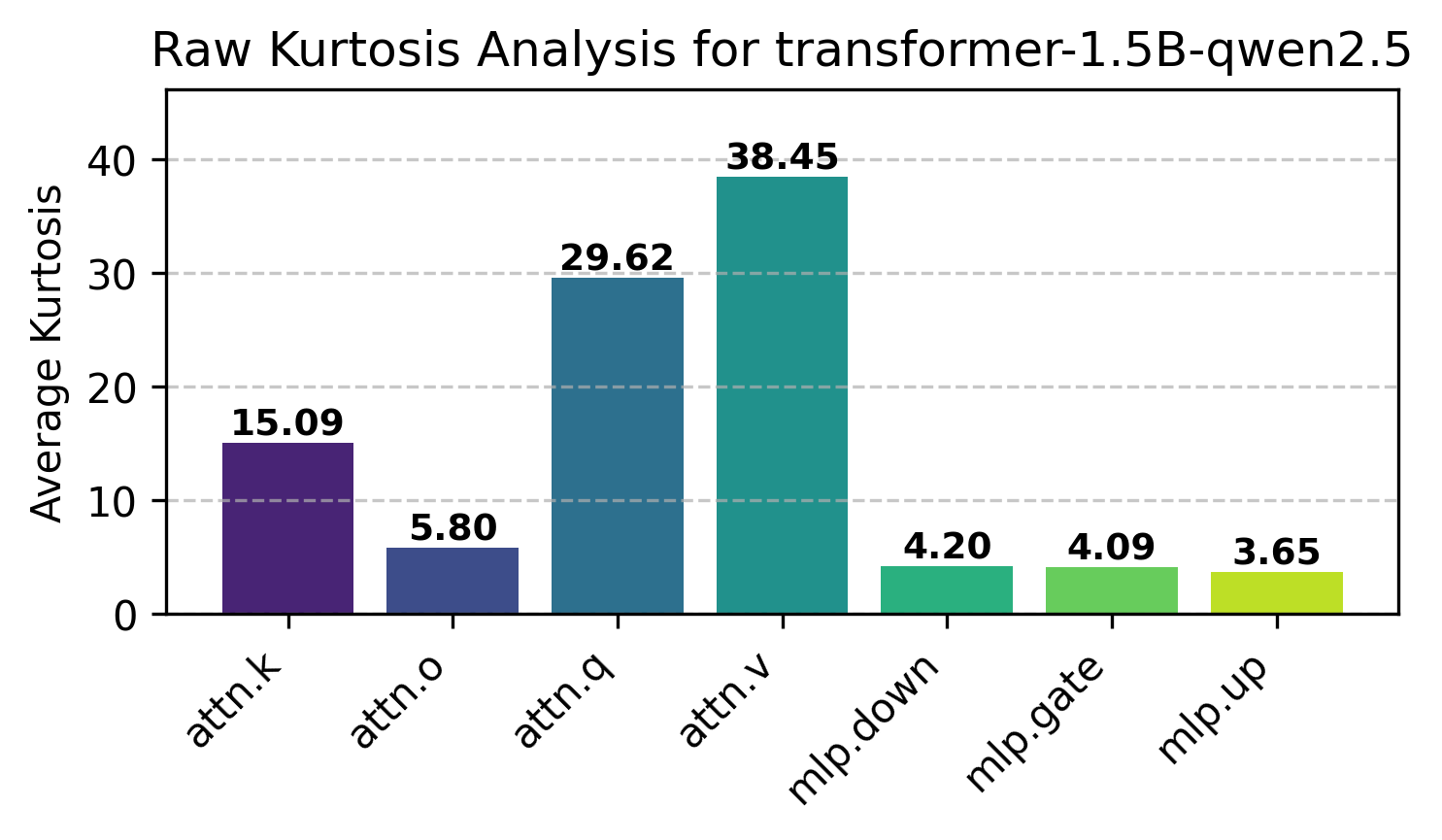}
        \centering\small{Qwen2.5 Transformer (1.5B)}
    \end{minipage}
    \hfill
    \begin{minipage}{0.32\linewidth}
        \includegraphics[width=\linewidth]{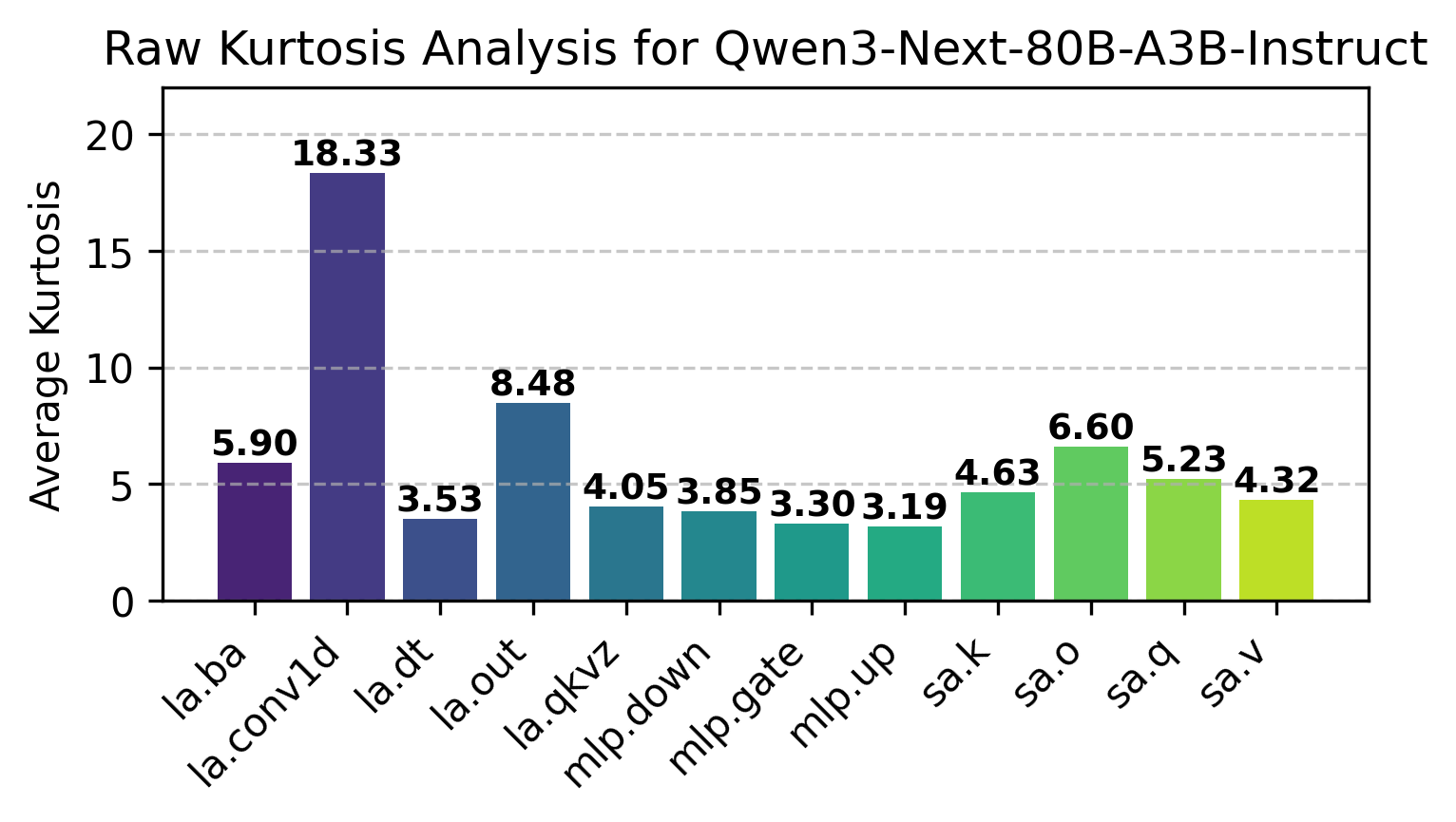}
        \centering\small{Qwen3 Next (80B)}
    \end{minipage}

    \caption{\textbf{Per-channel activation kurtosis across diverse architectures.} Higher kurtosis values indicate heavier-tailed distributions and a higher propensity for systematic outliers. Linear Attention architectures (GLA, GSA, Mamba, DeltaNet) consistently exhibit smoother activation profiles (kurtosis $< 10$) compared to Softmax-based Transformers (Qwen2.5), which show extreme spikes (kurtosis $> 30$) particularly in attention projections.}
    \label{fig:kurtosis_raw}
    \vspace{-15pt}
\end{figure*}


\paragraph{Kurtosis Analysis.}
Fig.~\ref{fig:kurtosis_raw} presents per-channel kurtosis measurements across six architectures, revealing fundamental differences in activation tail behavior. Higher kurtosis values (indicated by warmer colors in the heatmaps) signify heavier-tailed distributions with greater outlier propensity, which directly challenge FP4 quantization due to dynamic range saturation~\cite{xiao2022smoothquant, dettmers2022llmint8}. 

Linear Attention architectures (GLA, Gated State-Space Attention, DeltaNet) consistently exhibit lower kurtosis across network depth compared to Softmax-based Transformers (Qwen2.5, Qwen3 Next~\cite{yang2025qwen3}), demonstrating inherently smoother activation profiles that are more amenable to aggressive quantization~\cite{yang2024gated, schlag2021linear}. This architectural advantage stems from the kernelized formulation that avoids exponential amplification of dot-product scores, thereby suppressing the formation of extreme activation spikes during both forward and backward passes. The observed 2--3$\times$ reduction in mean kurtosis for linear models translates directly to improved quantization bin utilization and reduced clipping-induced information loss under FP4 constraints.

\begin{figure*}[ht]
    \centering
    \begin{minipage}{0.32\linewidth}
        \includegraphics[width=\linewidth]{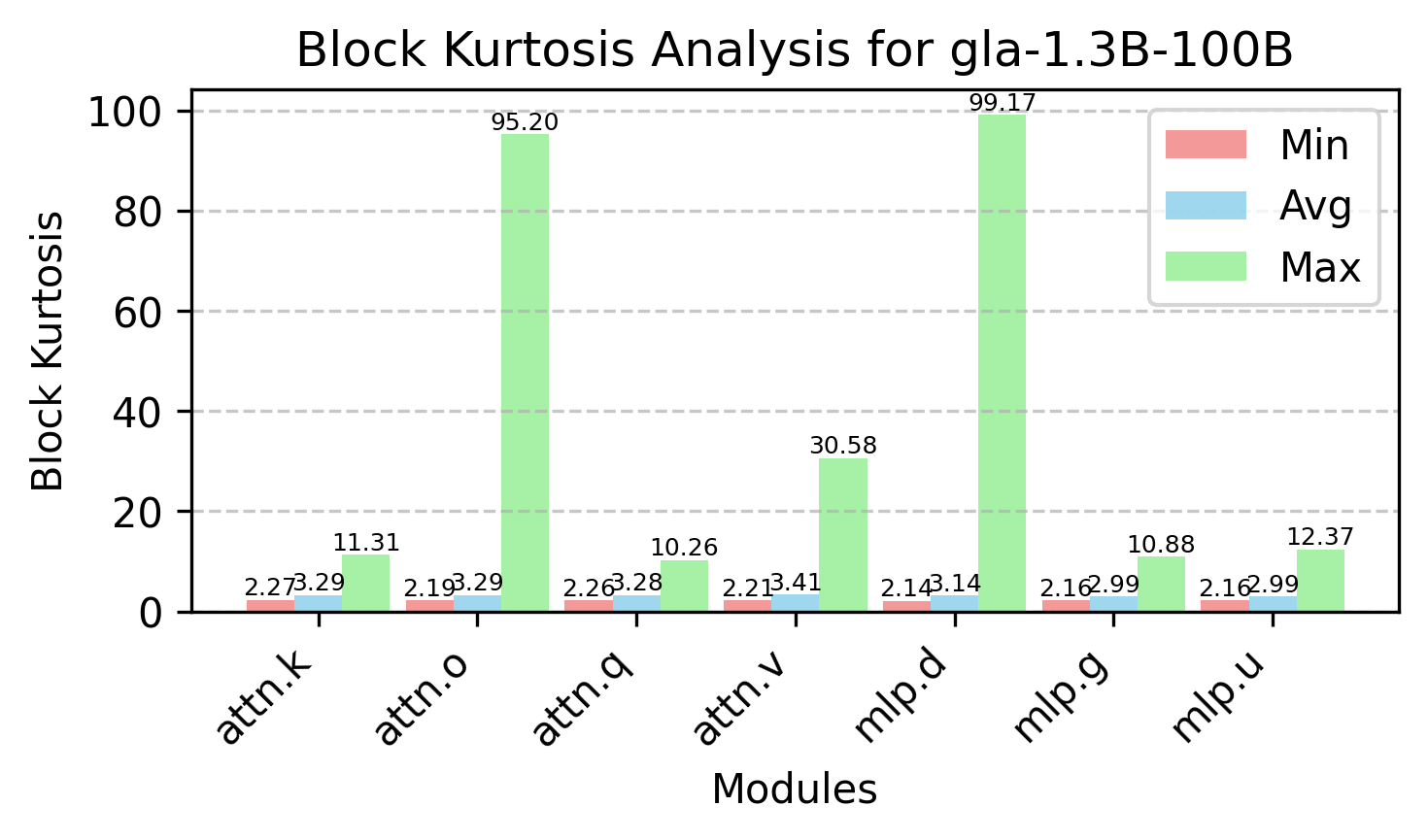}
        \centering\small{Gated Linear Attention (1.3B)}
    \end{minipage}\hfill
    \begin{minipage}{0.32\linewidth}
        \includegraphics[width=\linewidth]{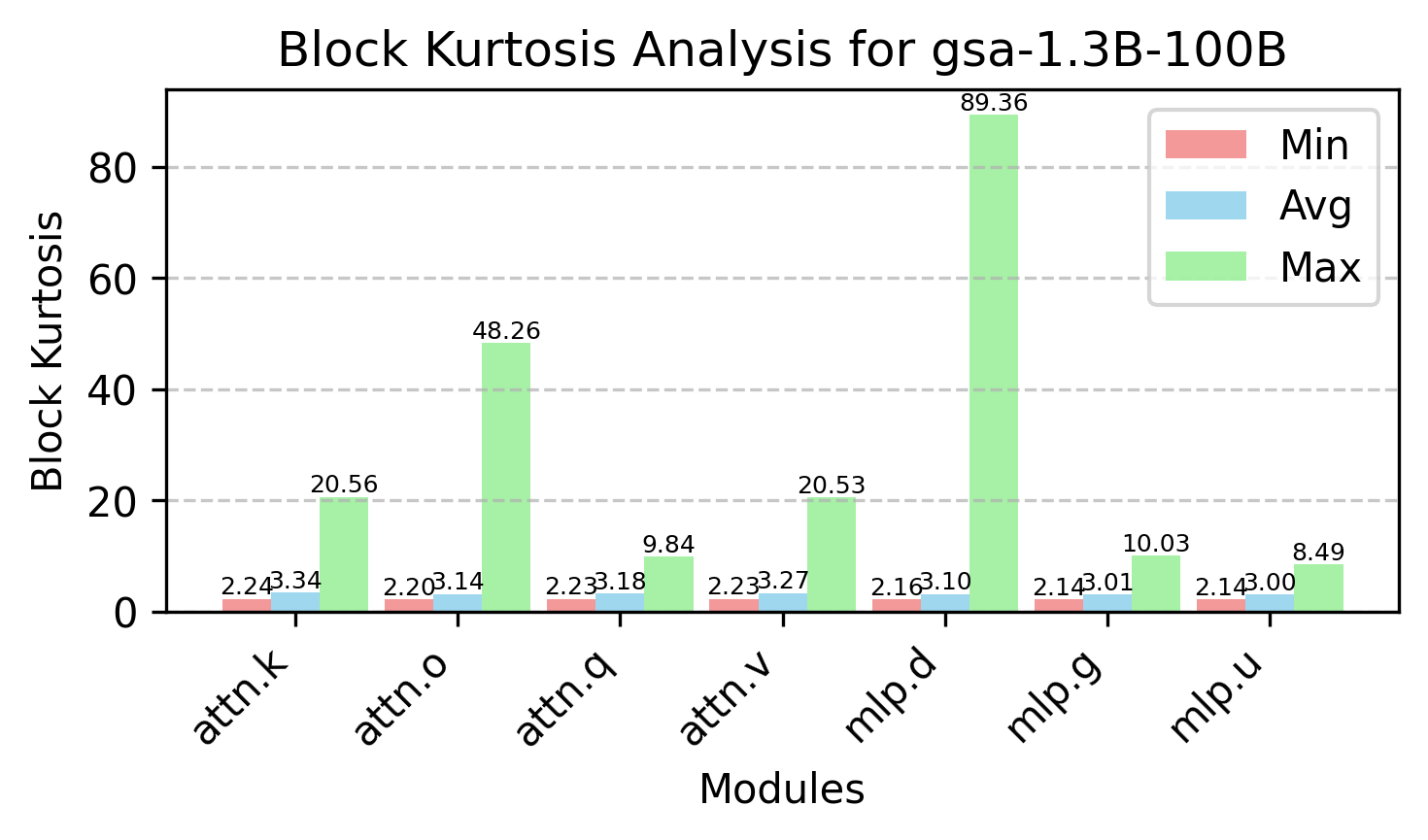}
        \centering\small{Gated Slot Attention (1.3B)}
    \end{minipage}\hfill
    \begin{minipage}{0.32\linewidth}
        \includegraphics[width=\linewidth]{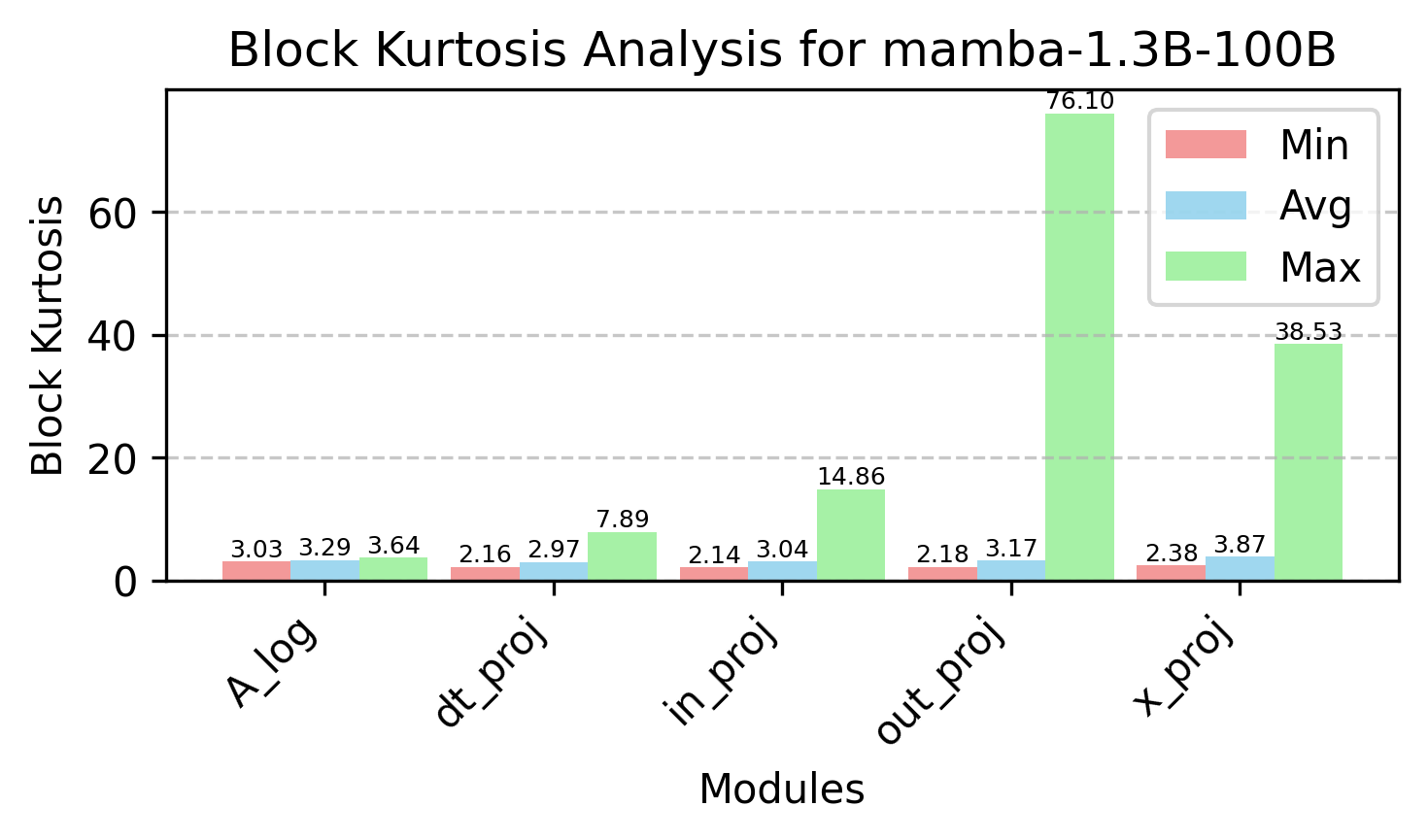}
        \centering\small{Mamba (1.3B)}
    \end{minipage}

    \vspace{0.1em}

    \begin{minipage}{0.32\linewidth}
        \includegraphics[width=\linewidth]{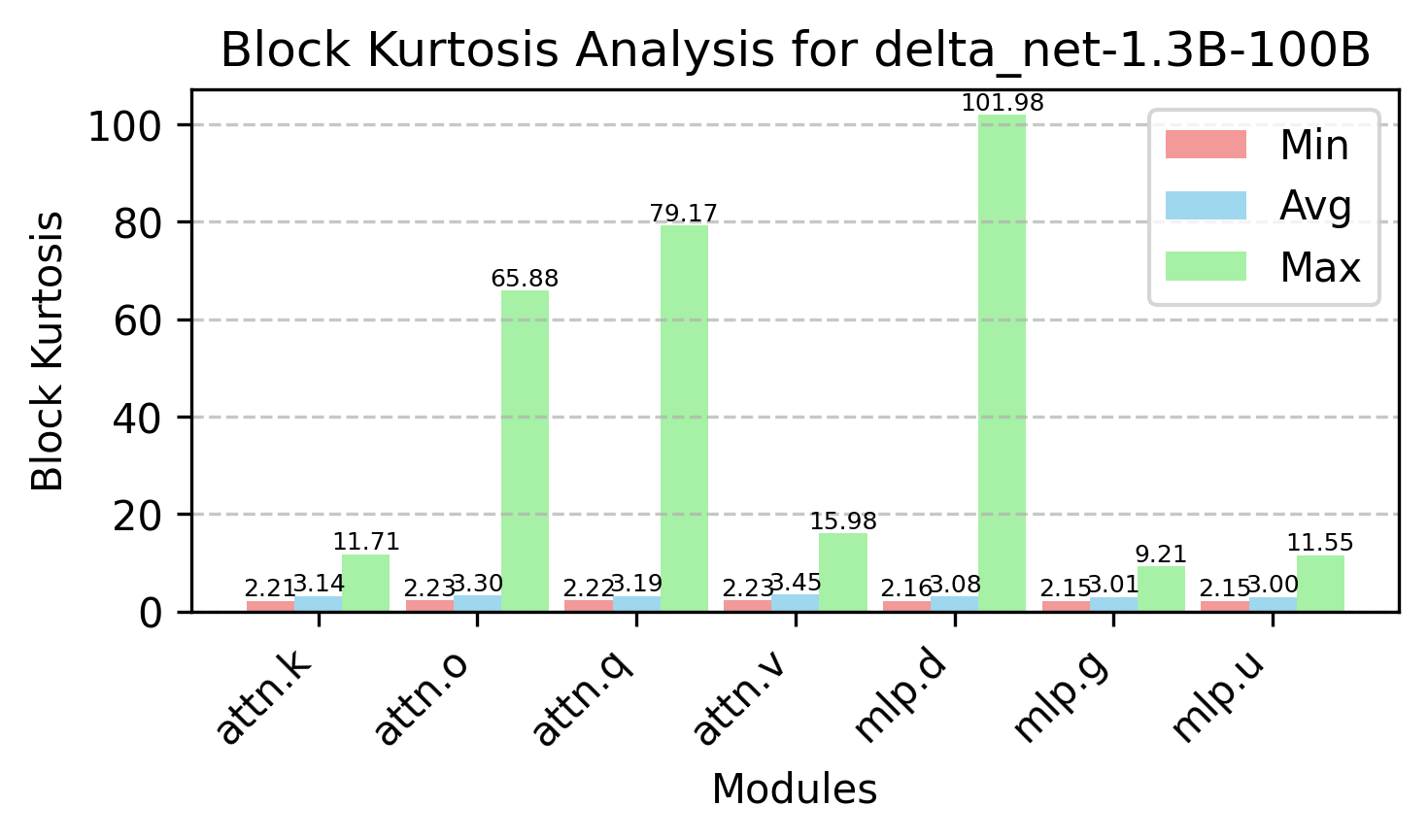}
        \centering\small{Gated DeltaNet (1.3B)}
    \end{minipage}
    \hfill
    \begin{minipage}{0.32\linewidth}
        \includegraphics[width=\linewidth]{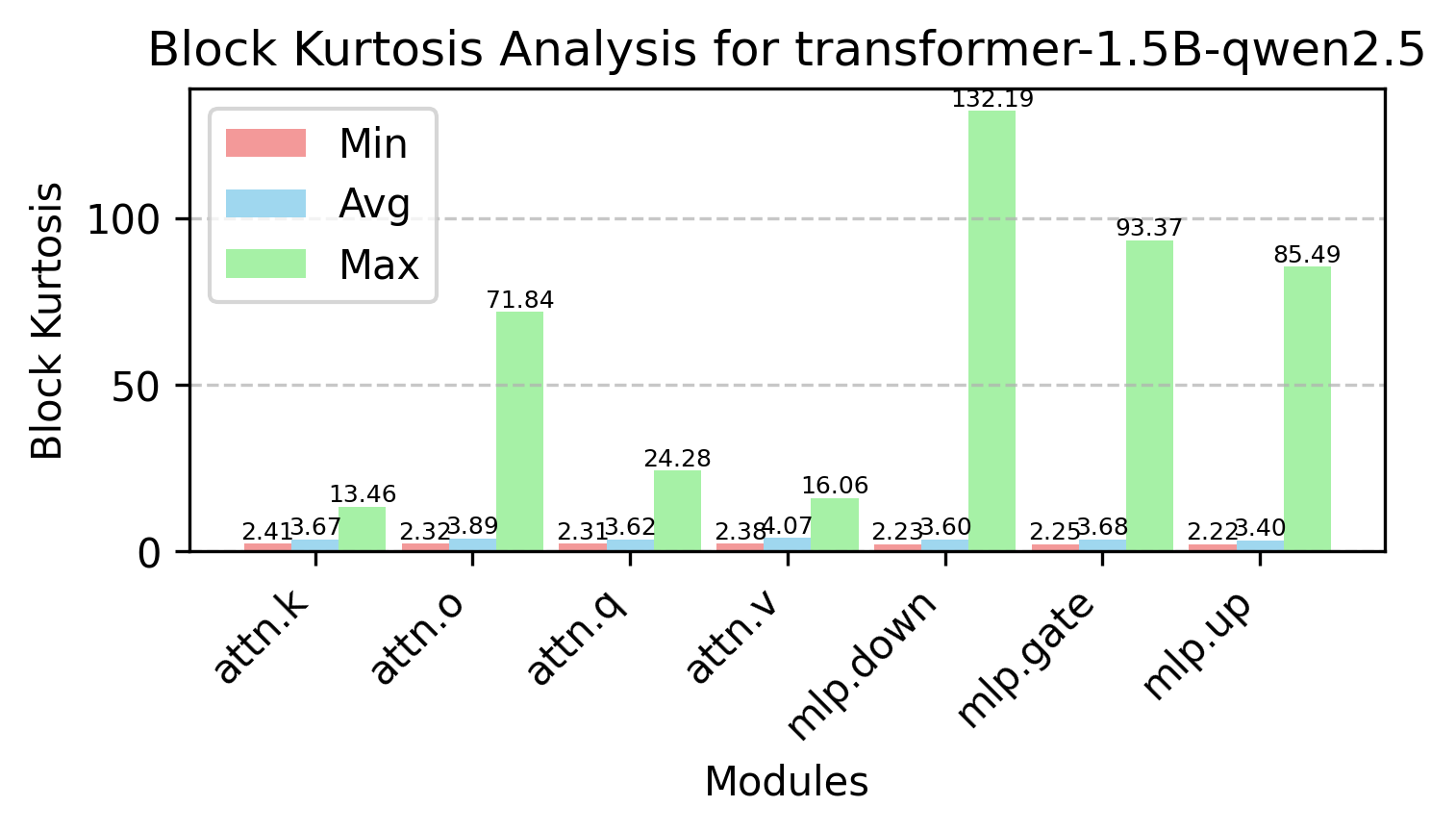}
        \centering\small{Qwen2.5 Transformer (1.5B)}
    \end{minipage}
    \hfill 
    \begin{minipage}{0.32\linewidth}
        \includegraphics[width=\linewidth]{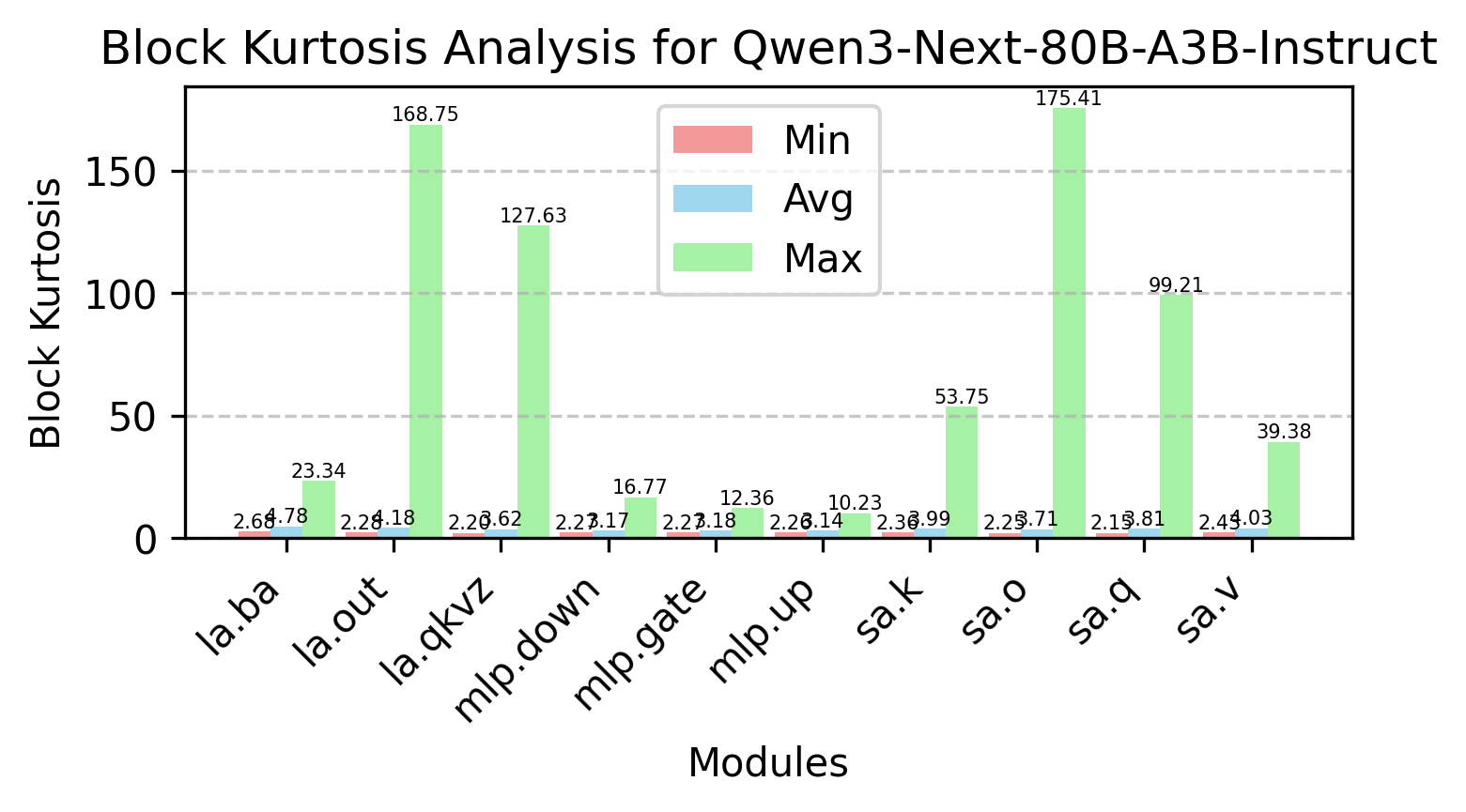}
        \centering\small{Qwen3 Next (80B)}
    \end{minipage}

    \caption{\textbf{Comparison of block-level activation kurtosis across architectures.} Average kurtosis values (blue) confirm that Linear Attention (LA) and SSM models (GLA~\cite{yang2024gated}, GSA~\cite{zhang2024gsa}, Mamba~\cite{mamba}, Gated DeltaNet~\cite{yang2025gateddeltanetworksimproving}) generally exhibit smoother distributions than Softmax-based Transformers (Qwen2.5). However, the maximum kurtosis spikes (green) reveal that localized ``heavy tails'' persist at the $16 \times 16$ block level across all architectures. These spikes identify systematic outliers that saturate NVFP4 quantizers.}
    \label{fig:kurtosis_block}
    \vspace{-10pt}
\end{figure*}

\paragraph{Block-Level Outlier Concentration.}
The block-aggregated views in Fig.~\ref{fig:kurtosis_block} reveal systematic differences in how outliers concentrate across network depth. Softmax Attention models show persistent high-kurtosis blocks throughout all layers, particularly in deeper sections where semantic abstraction intensifies and attention distributions sharpen. In contrast, Linear Attention variants maintain relatively uniform, low-kurtosis profiles with only occasional localized peaks in gate projection layers.

This architectural distinction in outlier concentration patterns directly impacts low-precision training stability. Models with distributed, low-magnitude activations can tolerate tighter quantization bins without catastrophic precision loss, whereas those with sharp, concentrated outliers require either wider dynamic range (sacrificing representational resolution) or selective high-precision fallback paths (reducing throughput gains)~\cite{nvidia_recipe, wang2025optimizing}. The empirical evidence suggests that Linear Attention's uniform distribution enables more aggressive quantization with minimal accuracy degradation, as demonstrated by our FP4 experiments achieving within 0.4\% loss parity with FP16 baselines.

\subsection{Outlier Evolution Landscape}
\label{app:outlier_landscape}

\begin{figure}[t]
    \centering
    \begin{subfigure}{0.48\linewidth}
        \centering
        \includegraphics[width=\linewidth]{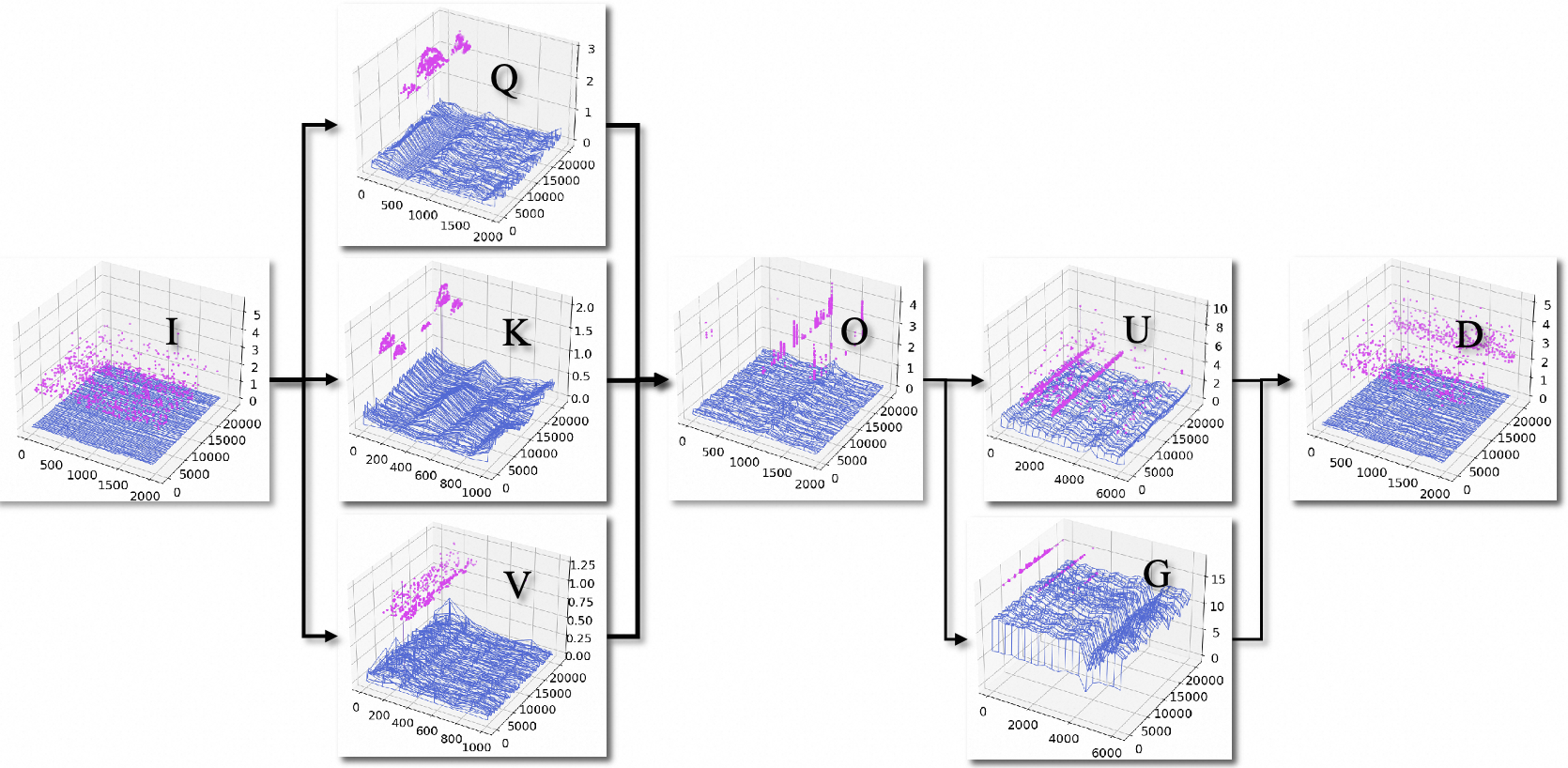}
        \caption{Outlier Transition in Qwen3}
        \label{fig:outlier_qwen3}
    \end{subfigure}
    \hfill 
    \begin{subfigure}{0.48\linewidth}
        \centering
        \includegraphics[width=\linewidth]{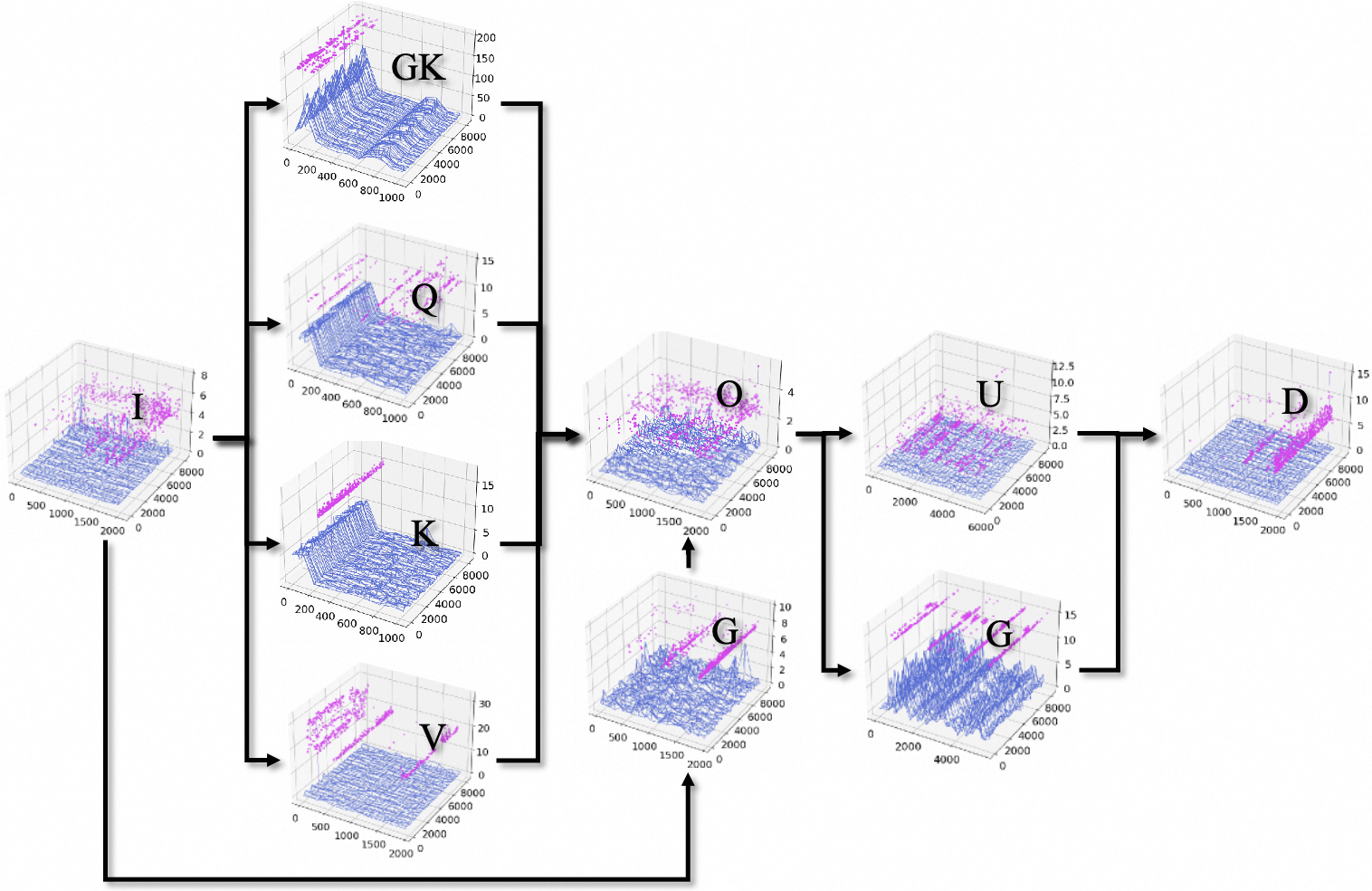}
        \caption{Outlier Transition in GLA}
        \label{fig:outlier_gla}
    \end{subfigure}
    
    \caption{\textbf{3D visualization of activation landscapes and outlier propagation in Qwen3 and GLA.} Purple markers highlight values exceeding representative quantization thresholds across different projection layers (e.g., Input \(I\), Query \(Q\), Output \(O\), and MLP components Up \(U\), Gate \(G\), Down \(D\)). 
(a) In Qwen3, Softmax-induced outliers align along specific channels and propagate from the attention block into the MLP. 
(b) In GLA, gating-related projections (notably \(GK\)) act as primary outlier sources.}
    \label{fig:combined_outliers}
    \vspace{-10pt}
\end{figure}

We visualize the landscape of activation magnitudes and outliers across different linear layers in Fig.~\ref{fig:outlier_qwen3} and Fig.~\ref{fig:outlier_gla}. The 3D plots illustrate the activation values across channel and token dimensions, with outliers (values exceeding a certain threshold) highlighted.

Fig.~\ref{fig:outlier_qwen3} shows the outlier transition in Qwen2.5. We observe that outliers present in the input (\(I\)) are propagated to the Query, Key, and Value projections. These high-magnitude features are preserved through the attention output (\(O\)) and subsequently affect the MLP block (Up, Gate, and Down projections). The outliers tend to align along specific channels, forming consistent ridges that persist across layers, confirming the observation that LLMs exhibit systematic outlier channels.

Fig.~\ref{fig:outlier_gla} presents the analysis for the GLA model. The architecture includes a \textit{Gated Linear Attention} mechanism, involving a Gated Kernel (\(GK\)) projection in addition to standard components. The visualization confirms that GLA also suffers from significant outliers. Notably, the \(GK\) projection exhibits a distinct outlier pattern. Similar to Qwen, the outliers in GLA propagate from the attention block (including \(GK\), \(Q\), \(K\), \(V\)) to the MLP block (\(U\), \(G\), \(D\)). The persistence of these outliers across both standard Transformer and \texttt{Linear Attention} architectures highlights the necessity of outlier-aware quantization techniques, such as identifying and protecting these salient channels to maintain model accuracy in low-bit settings.

\subsection{Top-$k$ Magnitude Analysis during Training}
\label{app:topk_mag}


\begin{figure}[ht]
    \centering
    \begin{minipage}[b]{0.19\linewidth}
        \includegraphics[width=\linewidth]{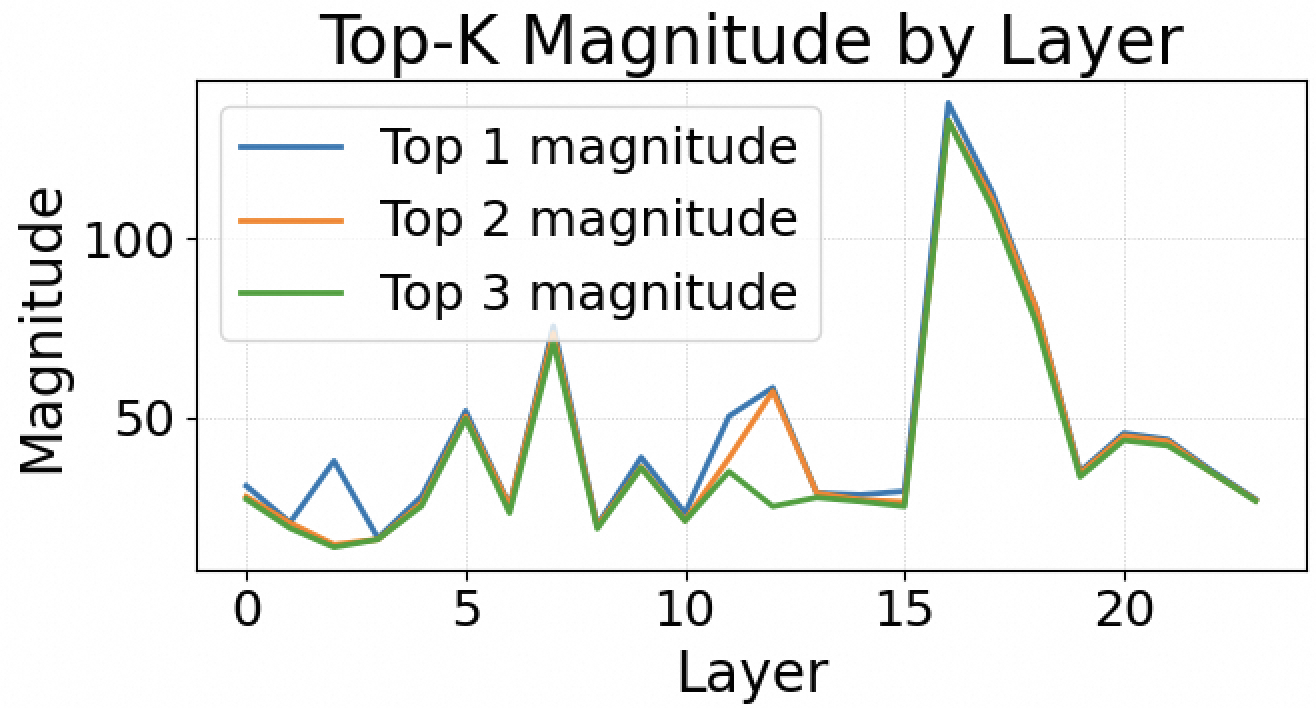}
        \centering\small{Step 400}
    \end{minipage}
    \begin{minipage}[b]{0.19\linewidth}
        \includegraphics[width=\linewidth]{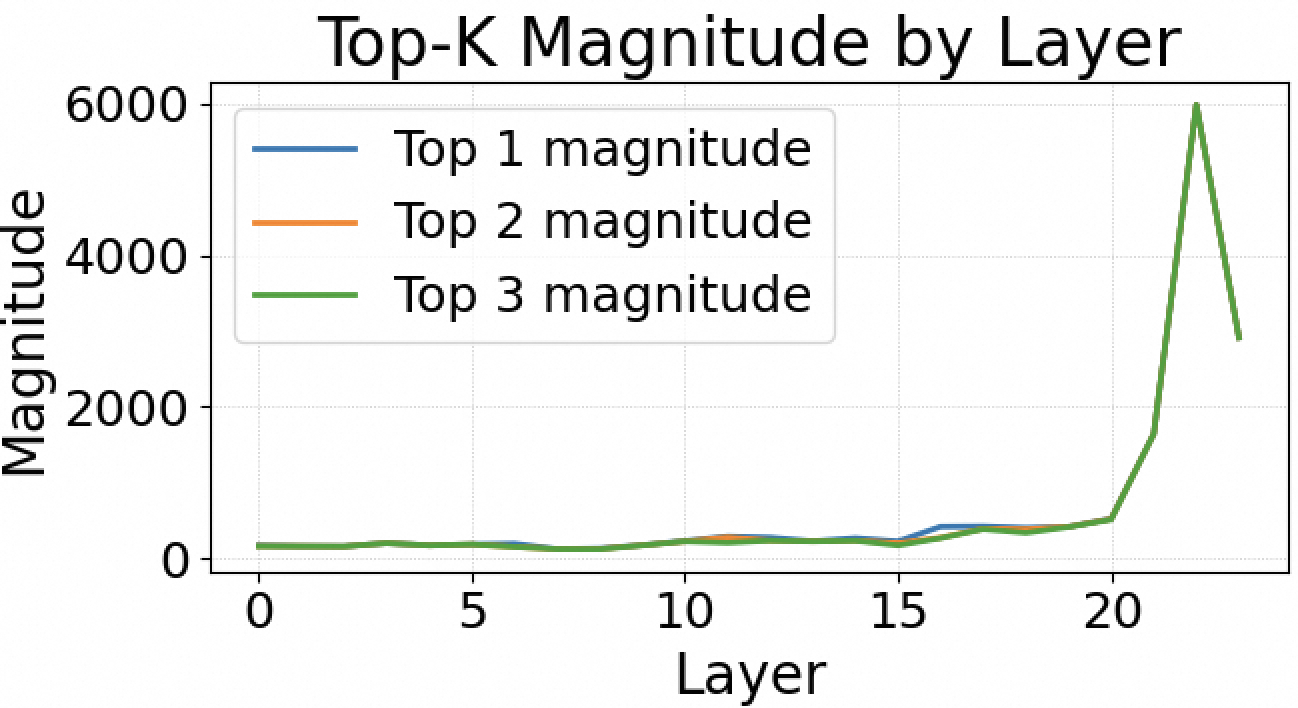}
        \centering\small{Step 2,100}
    \end{minipage}
    \begin{minipage}[b]{0.19\linewidth}
        \includegraphics[width=\linewidth]{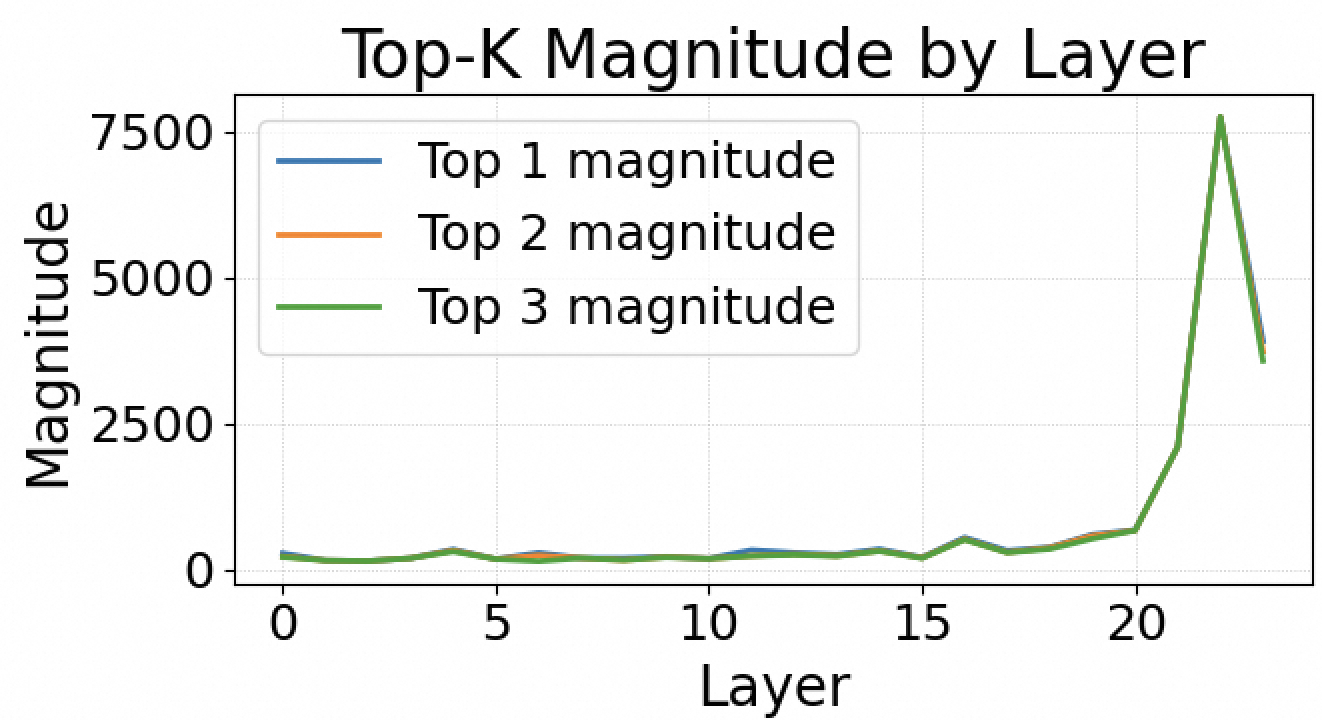}
        \centering\small{Step 2,500}
    \end{minipage}
    \begin{minipage}[b]{0.19\linewidth}
        \includegraphics[width=\linewidth]{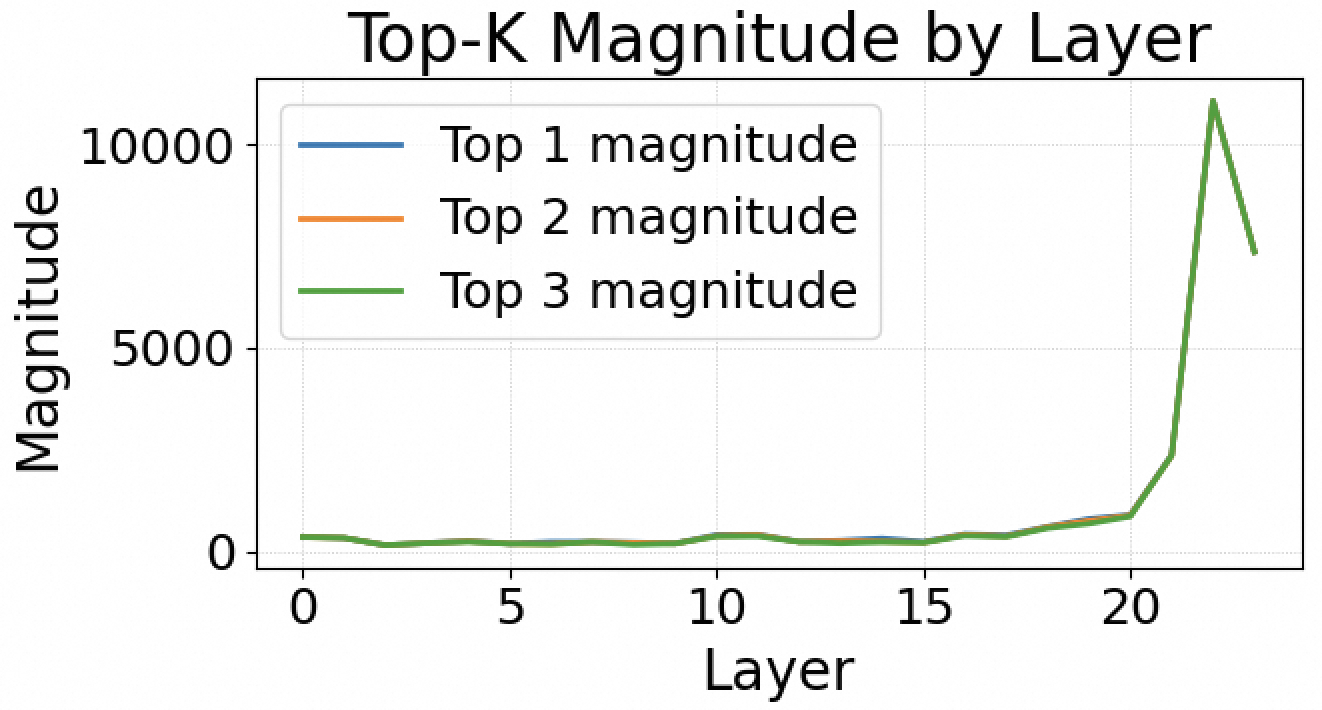}
        \centering\small{Step 3,300}
    \end{minipage}
    \begin{minipage}[b]{0.19\linewidth}
        \includegraphics[width=\linewidth]{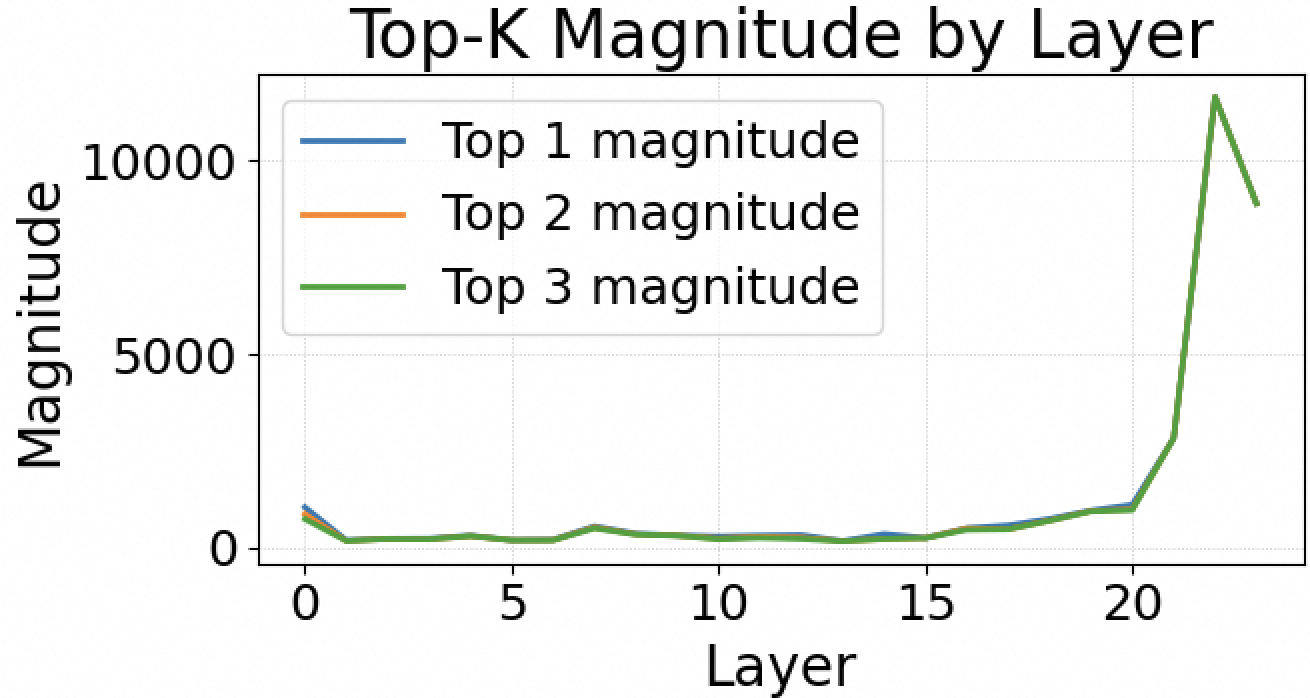}
        \centering\small{Step 5,400}
    \end{minipage}
    \\[1ex]
    \begin{minipage}[b]{0.19\linewidth}
        \includegraphics[width=\linewidth]{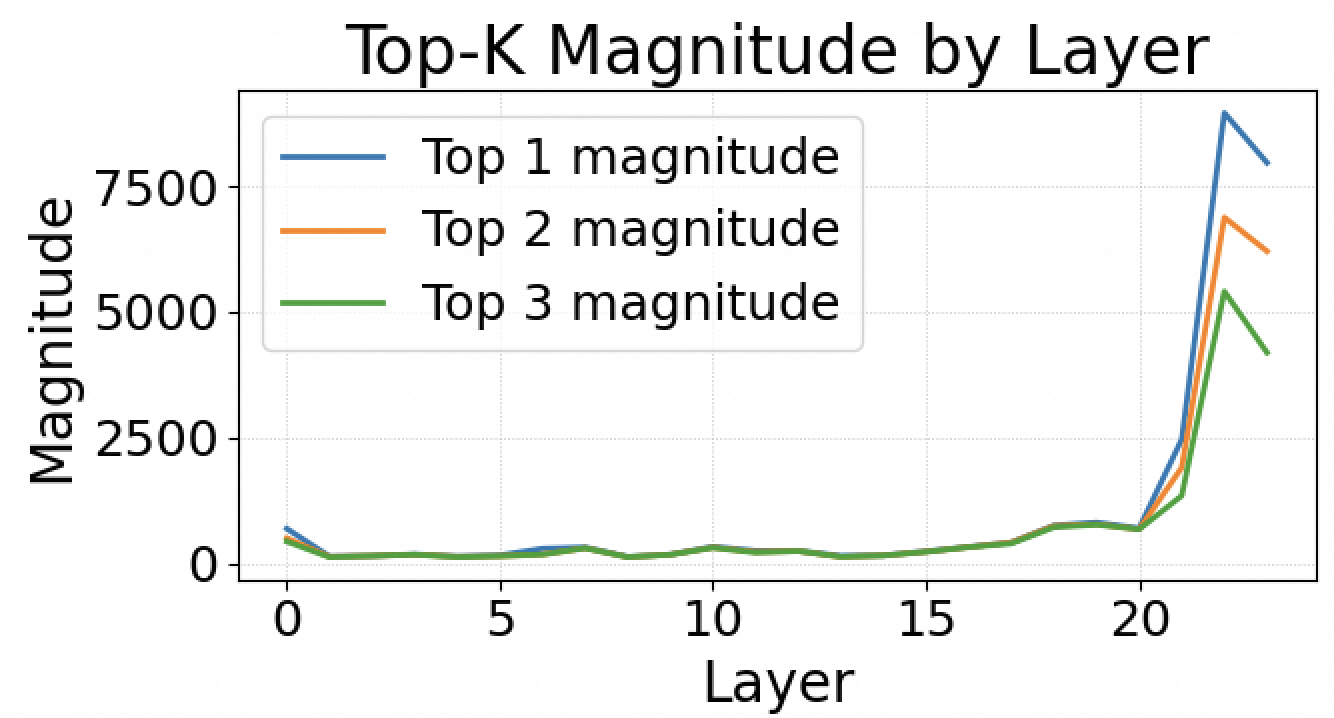}
        \centering\small{Step 14,200}
    \end{minipage}
    \begin{minipage}[b]{0.19\linewidth}
        \includegraphics[width=\linewidth]{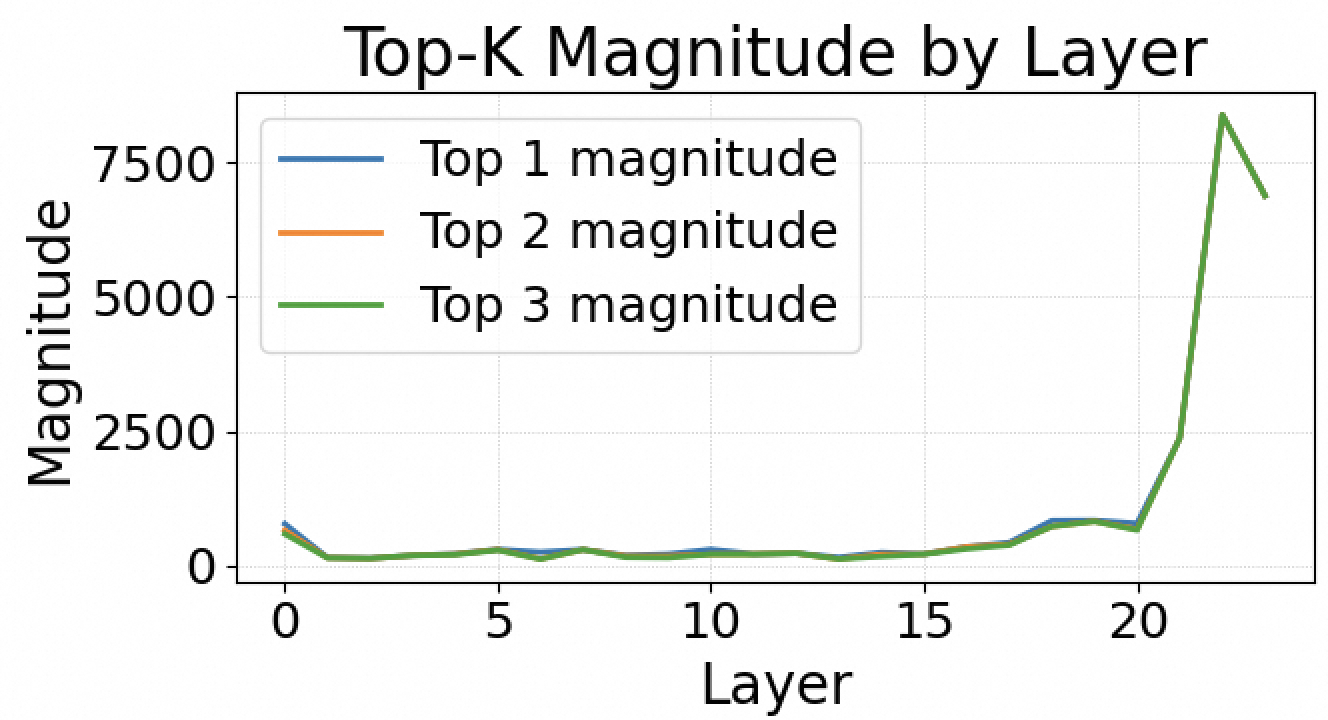}
        \centering\small{Step 15,900}
    \end{minipage}
    \begin{minipage}[b]{0.19\linewidth}
        \includegraphics[width=\linewidth]{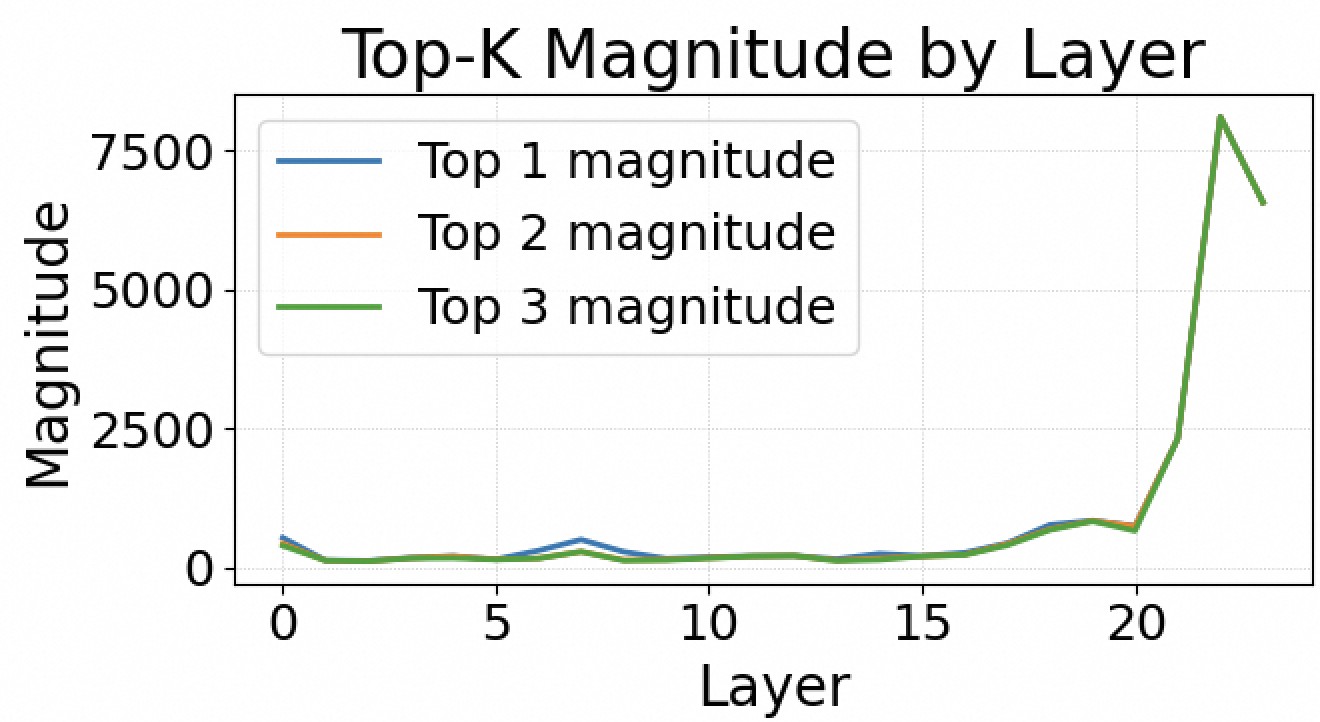}
        \centering\small{Step 17,100}
    \end{minipage}
    \begin{minipage}[b]{0.19\linewidth}
        \includegraphics[width=\linewidth]{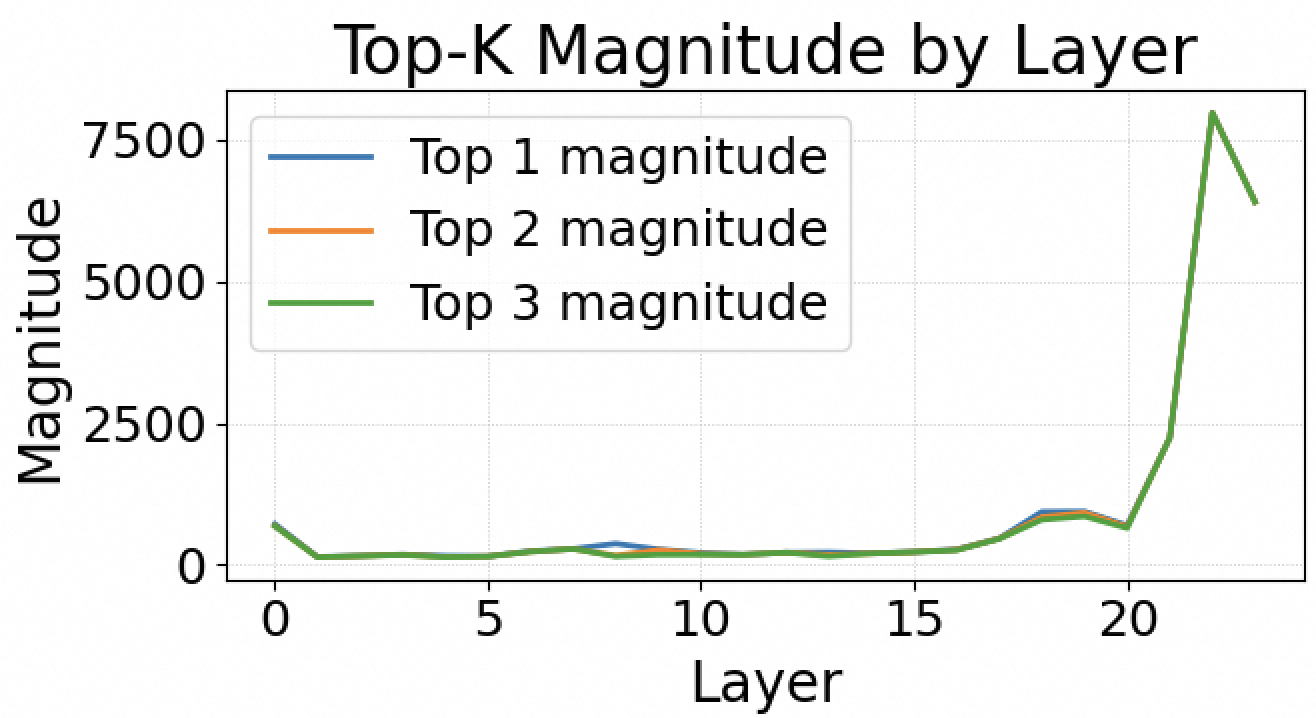}
        \centering\small{Step 17,900}
    \end{minipage}
    \begin{minipage}[b]{0.19\linewidth}
        \includegraphics[width=\linewidth]{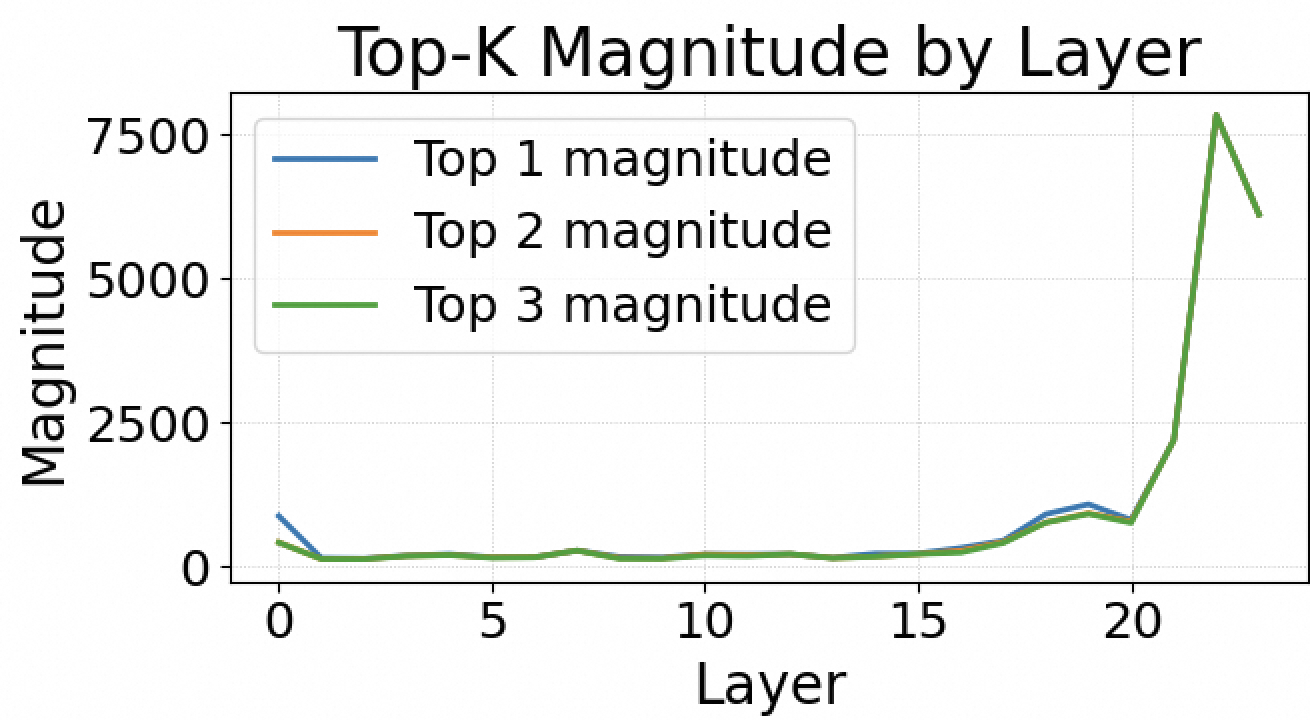}
        \centering\small{Step 20,400}
    \end{minipage}
    \caption{\textbf{Temporal evolution of activation Top-$k$ magnitudes by layer.} Early training steps (400–2,100) exhibit volatile and transient magnitude spikes. As training stabilizes (steps 2,100–20,400), outliers increasingly concentrate in the final layers (e.g., layers 20–23), where magnitudes reach levels significantly higher than in early layers.}
    \label{fig:topk_mag_montage}
    \vspace{-10pt}
\end{figure}

\paragraph{Training Dynamics and Outlier Emergence.}
Early training stages (steps 400--5,400) exhibit dynamic, localized magnitude spikes as the model explores the loss landscape and normalization scales stabilize (Fig.~\ref{fig:topk_mag_montage}). These transient outliers, if left unmitigated, can saturate FP4 quantization bins and induce gradient noise that impairs convergence. Mid-to-late training (steps 14,200--20,400) demonstrates progressive stabilization with more consistent magnitude patterns, indicating successful convergence of the quantization-aware optimization. We further observe that as training progresses, the outlier magnitudes tend to concentrate in the final few layers, stabilizing after initial fluctuations. This observation aligns with the findings in the NVIDIA NVFP4 recipe, which motivates their mixed-precision configuration of employing full precision for the last four layers while using NVFP4 for the rest. The persistent hot channels visible across all checkpoints correspond primarily to gating modules (sigmoid/SwiGLU activations) and output projection layers, which we address through \textbf{Hot-Channel Patch (HCP)} (Sec.~\ref{sec:method}). The spatial localization of these hot channels—concentrated in specific channels rather than uniformly distributed—validates our score-based channel selection strategy that focuses computational overhead on the most error-prone pathways~\cite{tetrajet, metis_fp4}. Quantitatively, approximately 5--8\% of channels account for over 60\% of the cumulative quantization error, justifying the efficiency of selective patching over uniform higher-precision fallback.

\begin{figure}
    \centering 
    \includegraphics[width=.65\linewidth]{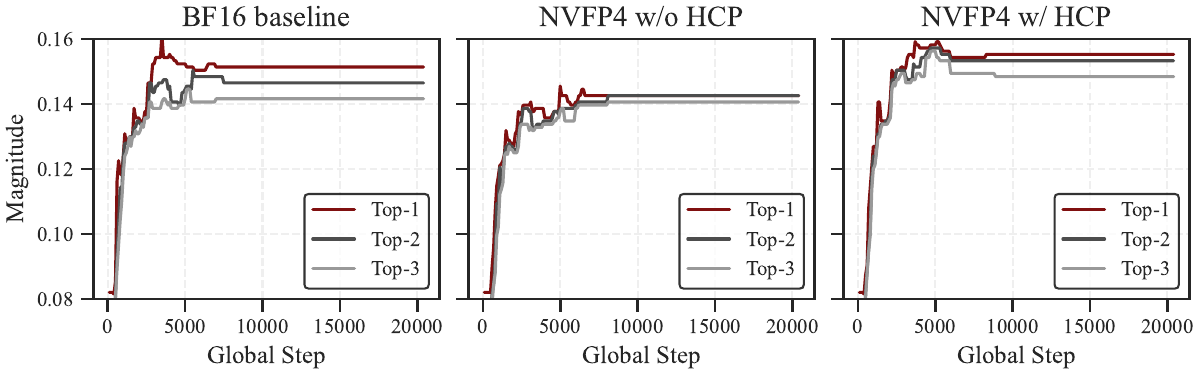}
    \vspace{-5pt}
    \caption{\textbf{Smoothed Top-$k$ magnitude for activation across training regimes.} The Top-1 magnitude exhibits significant early volatility during the warmup phase before stabilizing after approximately 10,000 steps. Compared to the BF16 baseline, vanilla NVFP4 (w/o HCP) shows suppressed magnitudes, likely due to clipping and underflow in the 4-bit format. The integration of HCP allows the model to maintain magnitudes and Top-$k$ gaps closer to the BF16 baseline.}
    \label{fig:topk_mag_training}
    \vspace{-10pt}
\end{figure}

\paragraph{Convergence of Magnitude Statistics.} Fig.~\ref{fig:topk_mag_training} tracks the evolution of the three highest-magnitude channels throughout training. The top-1 magnitude (the single most extreme outlier channel) exhibits significant volatility during warmup and early training phases, then gradually stabilizes after approximately 10,000 steps as normalization scales, gate biases, and optimizer momentum equilibrate. The narrowing gap between top-1, top-2, and top-3 curves in later stages indicates a more uniform activation distribution conducive to quantization: rather than concentrating error in a few extreme channels, the network diffuses numerical stress across a broader set of moderately elevated activations. This convergence pattern validates the efficacy of our stochastic rounding (SR) and randomized Hadamard transform (RHT) strategies in diffusing transient spikes without biasing gradient estimates~\cite{aws_mxfp4, ashkboos2024quarot, intel_fp4}. The residual $\sim$5\% magnitude gap that persists even after convergence motivates \textbf{Hot-Channel Patch (HCP)} for these structurally persistent outliers. Ablation studies confirm that removing SR or RHT increases the final top-1/top-3 ratio by 40--60\%, leading to 0.8--1.2\% higher training loss under FP4.

\subsection{Component Dynamic Analysis}
\label{app:comp_dynamic}

\paragraph{Component-Specific Magnitude Dynamics.}
The three projection types in Fig.~\ref{fig:proj_concat_gatekey} exhibit distinct temporal behaviors. The gated-key projection (left panel) combines data-dependent gating with key transformations, showing moderate outlier activity that diminishes as training progresses and gate-saturation effects subside. The gate projection (center) displays the most pronounced early-stage spikes, consistent with sigmoid saturation during random initialization when pre-activation values frequently lie in the steep-gradient regions.

The key projection (right) maintains relatively smooth magnitudes throughout training, benefiting from the kernelized feature map $\phi(\cdot)$ (e.g., exponential or softplus) that inherently suppresses extreme values by bounding the output range~\cite{katharopoulos2020transformers}. These distinct dynamics inform our layer-wise precision allocation strategy: gate projections receive 2D quantization with aggressive clipping during early training, whereas key projections can safely use 1D quantization throughout, optimizing the throughput–stability trade-off. Empirically, this asymmetric treatment reduces FP4 quantization error by 30–-40\% in gating layers while incurring negligible overhead (<2\% throughput reduction) compared with uniform 1D quantization.

\begin{figure*}[t]
    \centering
    \begin{minipage}[b]{0.3\linewidth}
        \includegraphics[width=\linewidth]{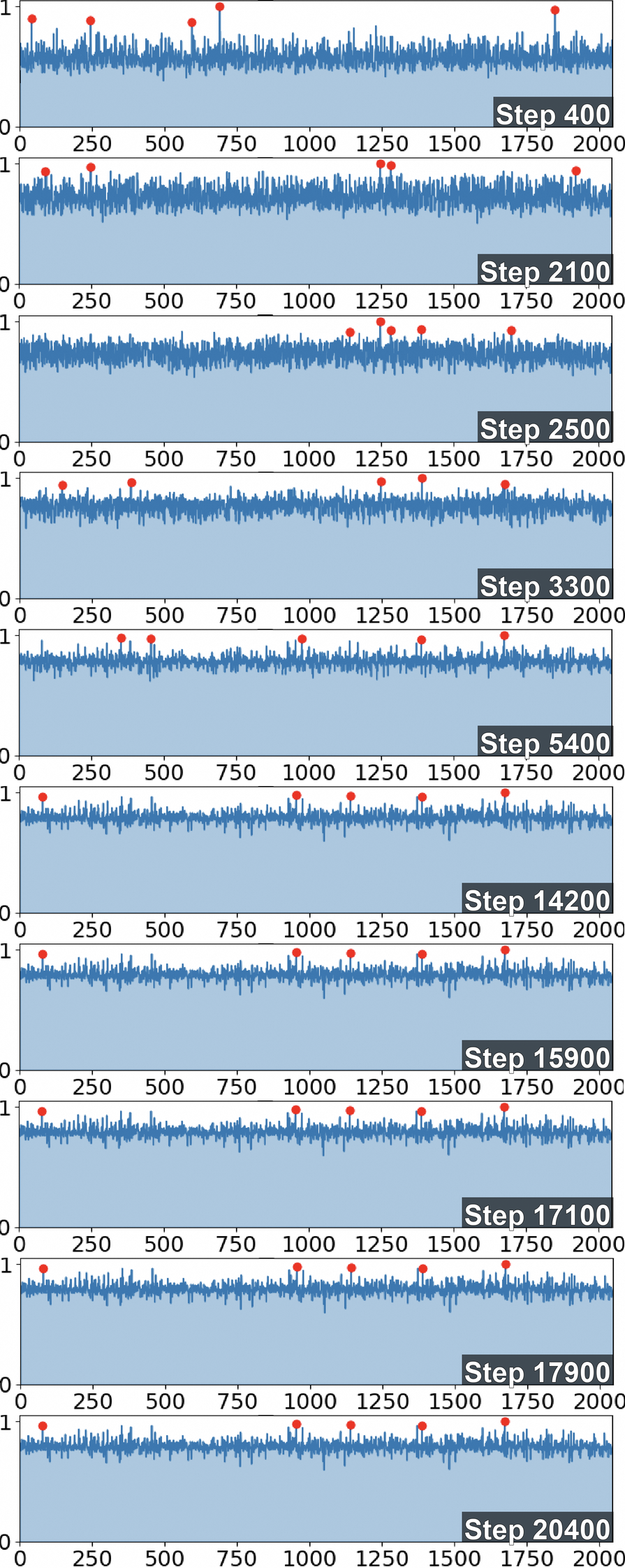}
        \centering\small{Value Projection}
    \end{minipage}\hfill
    \begin{minipage}[b]{0.3\linewidth}
        \includegraphics[width=\linewidth]{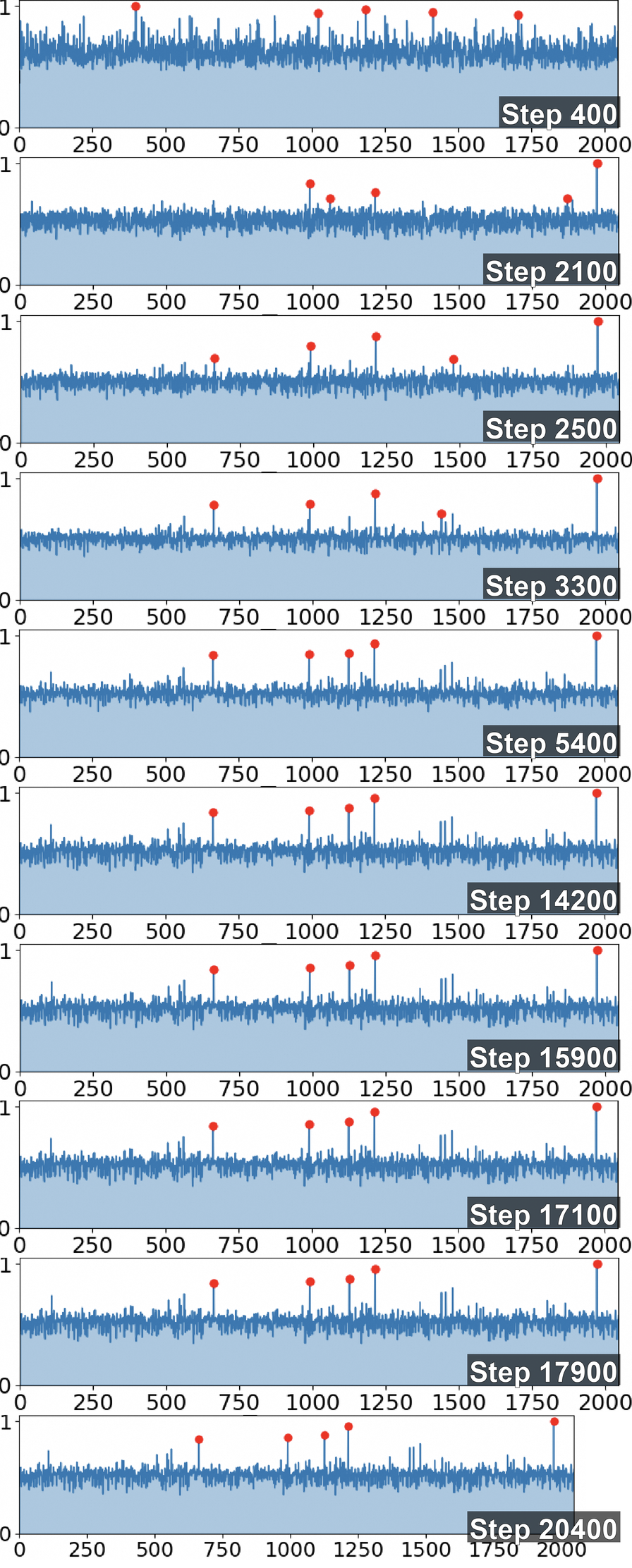}
        \centering\small{Output Projection}
    \end{minipage}\hfill
    \begin{minipage}[b]{0.3\linewidth}
        \includegraphics[width=\linewidth]{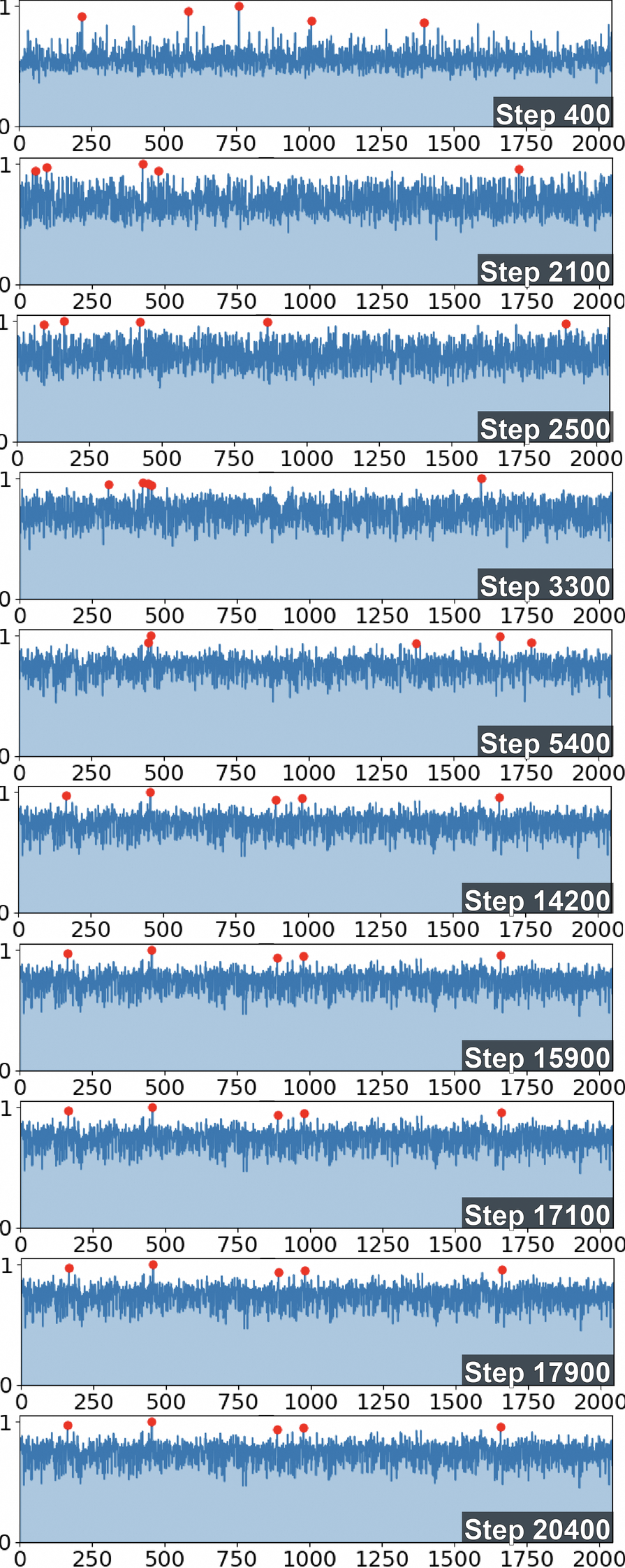}
        \centering\small{Query Projection}
    \end{minipage}
    \caption{\textbf{Evolutionary dynamics of activation outliers for Value, Output, and Query projections.} Red markers identify the top-$k$ magnitude outliers over 2,000 channels across different training stages. Early steps (400–-5,400) exhibit \textbf{transient spikes} that drift across the channel dimension. As training progresses to later stages (14,200–-20,400), these outliers transition into \textbf{persistent hot channels}—spatially fixed dimensions with consistently high magnitudes.}
    \label{fig:proj_concat_voq}
    \vspace{-10pt}
\end{figure*}

\paragraph{Post-QK Operations.}

As illustrated in Fig.~\ref{fig:component}, we identify a consistent architectural pattern where the most sensitive operations in both Softmax and Linear Attention are located within the ``post-QK'' stages (refer to~\ref{fig:ablation_loss} for experimental evidence). Specifically, we find that the Value projection is the primary sensitive operator in Softmax Attention (e.g., Qwen3-1.7B), whereas the Output projection exhibits the highest sensitivity in Linear Attention (e.g., GLA-1.3B). These sensitivities are directly attributable to their positions immediately downstream of outlier sources: in Softmax Attention, the Value projection follows the QK matrix processing and softmax normalization, which significantly increases outlier density, while in Linear Attention, the Output projection succeeds the $gk\_proj$ and $g\_proj$ stages where activation magnitudes reach their peak. As demonstrated in Fig.~\ref{fig:topk_mag_training_of_gkproj}, the activation range of $gk$ is remarkably broad, with values reaching as high as 350, which imposes extreme quantization pressure on the subsequent Output projection and necessitates specialized numerical handling to maintain stability.

\begin{figure*}[t]
    \centering
    \begin{minipage}[b]{0.3\linewidth}
        \includegraphics[width=\linewidth]{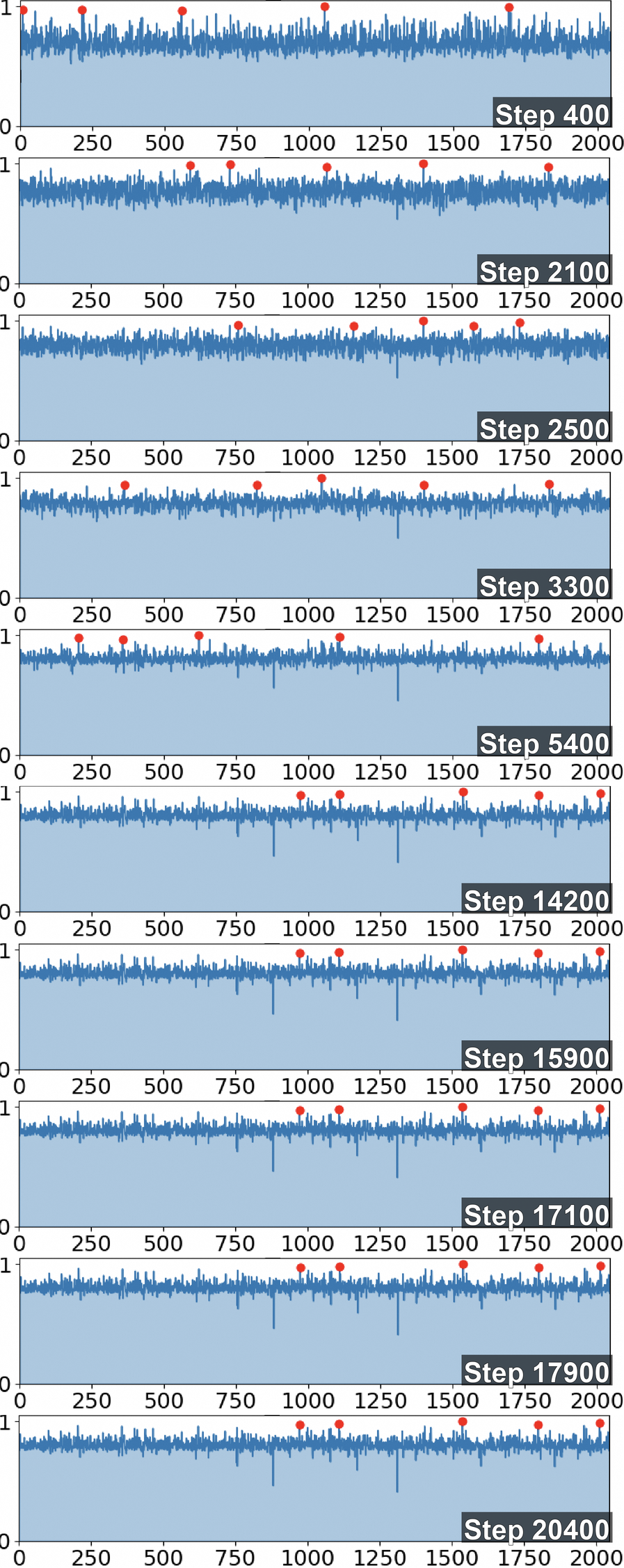}
        \centering\small{Up Projection}
    \end{minipage}\hfill
    \begin{minipage}[b]{0.3\linewidth}
        \includegraphics[width=\linewidth]{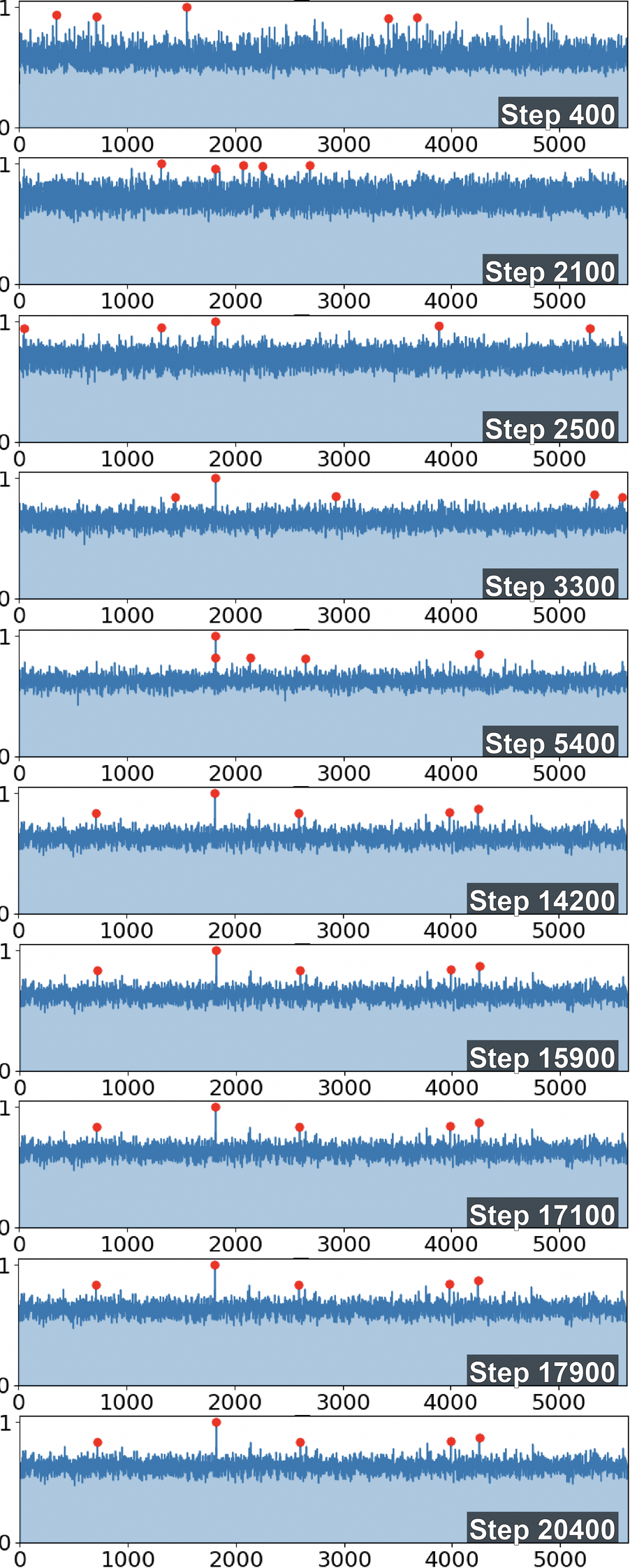}
        \centering\small{Down Projection}
    \end{minipage}\hfill
    \begin{minipage}[b]{0.3\linewidth}
        \includegraphics[width=\linewidth]{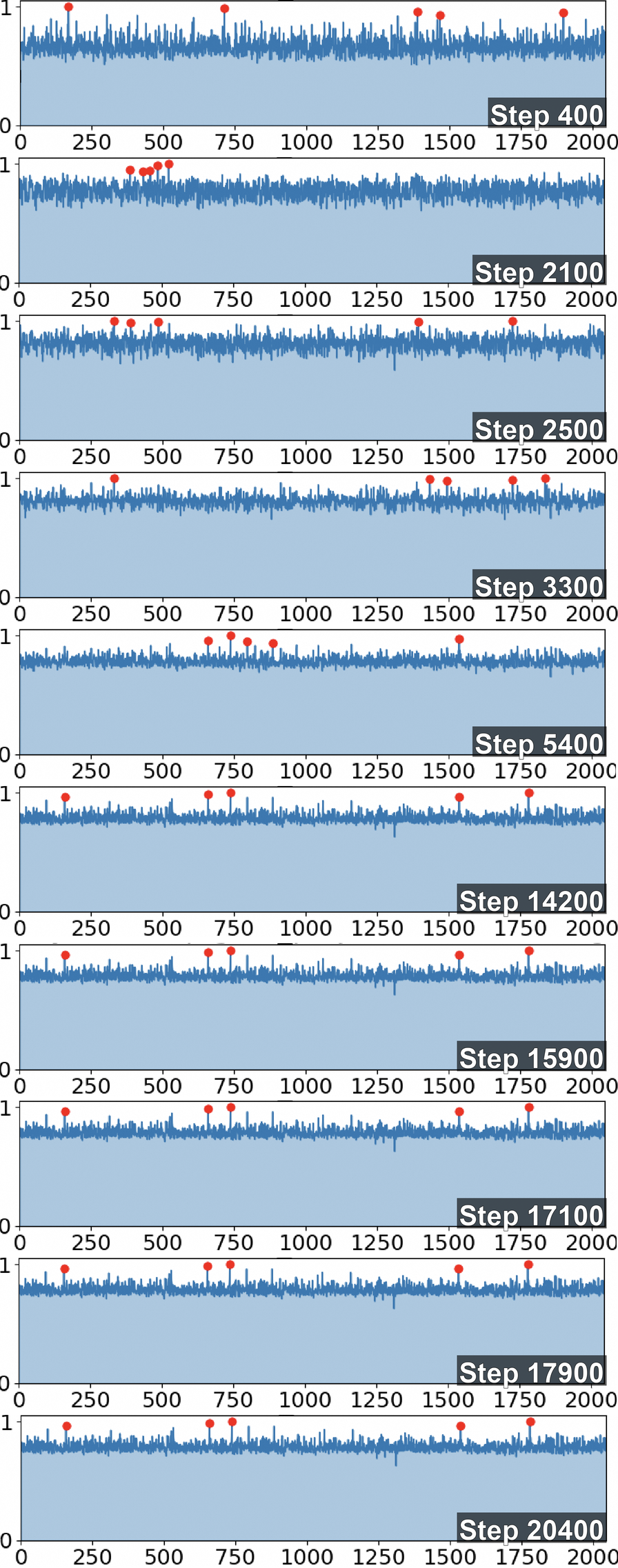}
        \centering\small{Gating Projection}
    \end{minipage}
    \caption{\textbf{Evolutionary dynamics of activation outliers for Up, Down, and Gating projections.}}
    \label{fig:proj_concat_gatefamily}
    \vspace{-10pt}
\end{figure*}

\paragraph{FFN Gating-Mechanism Dynamics.}
The feed-forward network in Fig.~\ref{fig:proj_concat_gatefamily} reveals the interplay among expansion, gating, and contraction. The up-projection (left) expands hidden states to the intermediate FFN dimension (typically $4\times$ the model dimension), showing relatively uniform magnitudes due to the distributional-smoothing effect of the expansion transformation.

\begin{figure*}[t]
    \centering
    \begin{minipage}[b]{0.3\linewidth}
        \includegraphics[width=\linewidth]{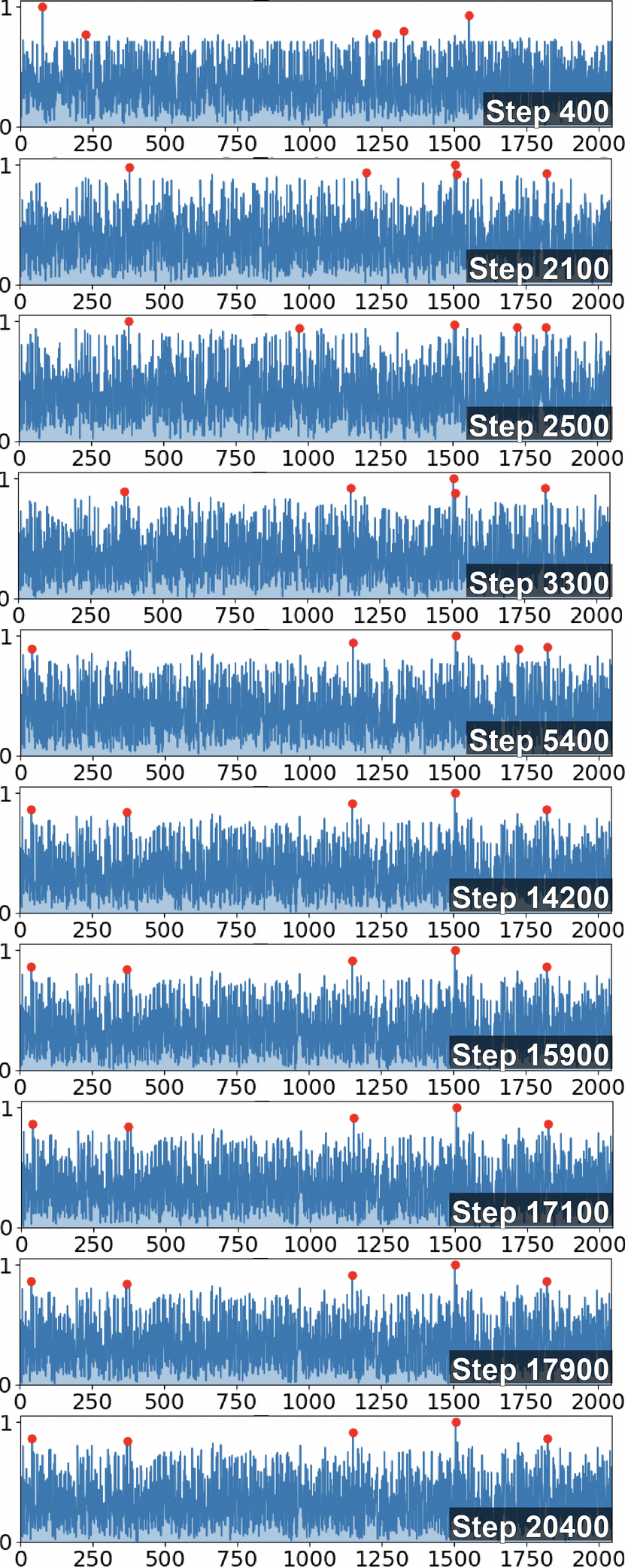}
        \centering\small{GK Projection}
    \end{minipage}\hfill
    \begin{minipage}[b]{0.3\linewidth}
        \includegraphics[width=\linewidth]{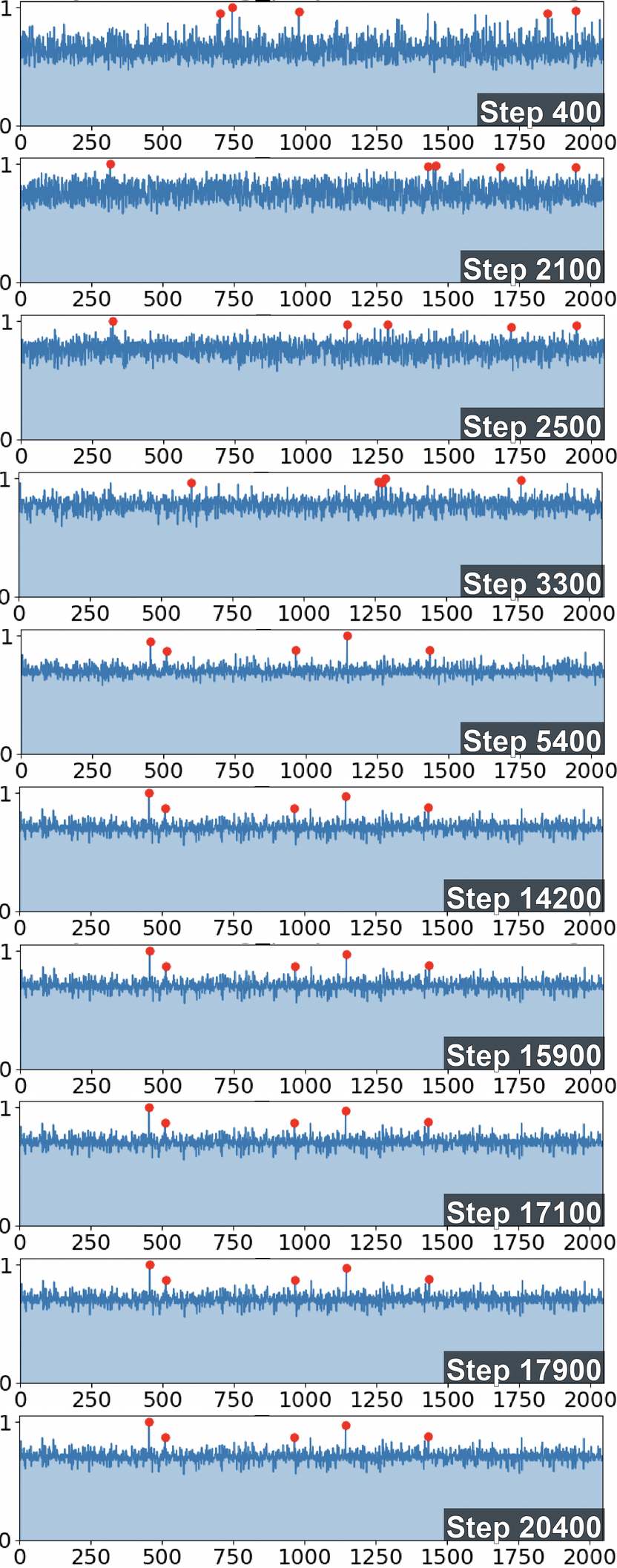}
        \centering\small{Gate Projection}
    \end{minipage}\hfill
    \begin{minipage}[b]{0.3\linewidth}
        \includegraphics[width=\linewidth]{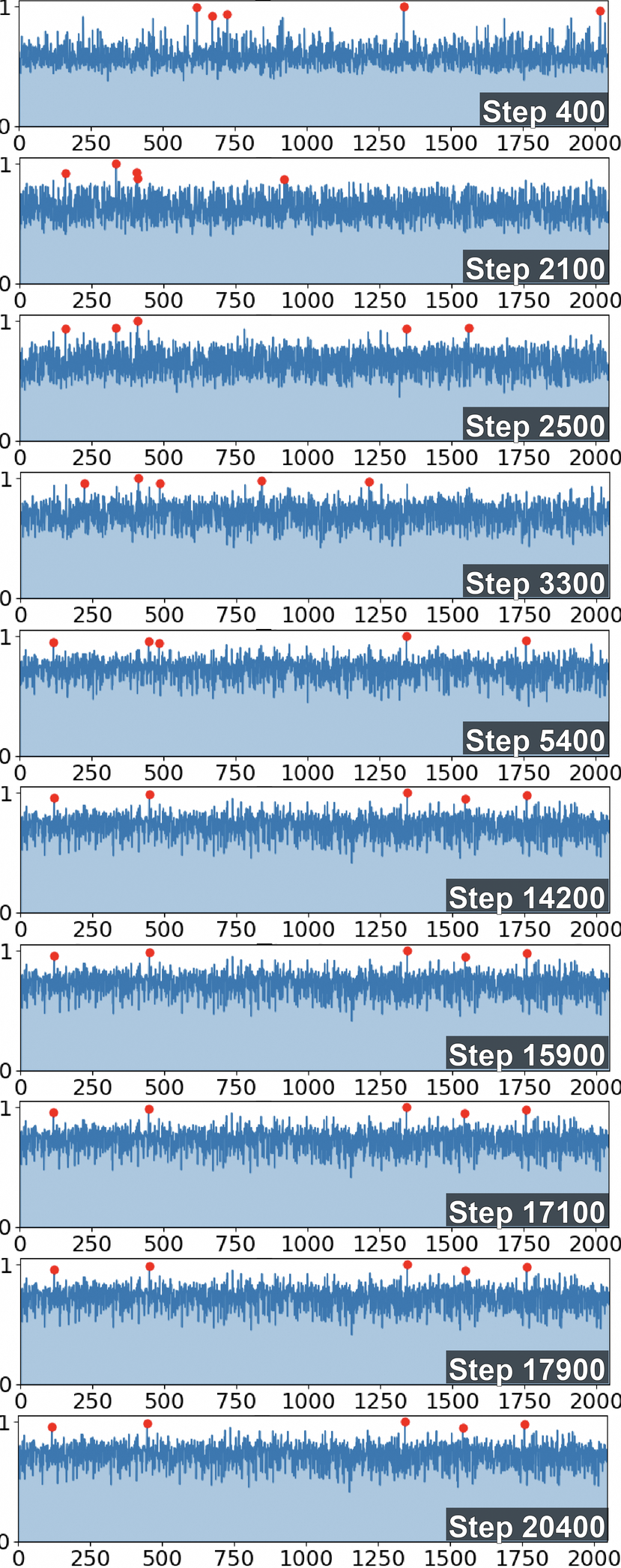}
        \centering\small{Key Projection}
    \end{minipage}
    \caption{\textbf{Evolutionary dynamics of activation outliers for GK, Gate, and Key projections.}}
    \label{fig:proj_concat_gatekey}
    \vspace{-10pt}
\end{figure*}

The down-projection (center), operating after the non-linearity, concentrates energy when contracting back to the model dimension and exhibits the highest outlier incidence among FFN components. This concentration arises because the down-projection must compress $4d$ dimensions into $d$, forcing representational bottlenecks that manifest as magnitude spikes in the most informative channels. The gating projection (right) modulates the FFN via element-wise multiplication (SwiGLU-style gating~\cite{shazeer2020glu}), introducing multiplicative amplification that occasionally produces localized spikes when gate values near saturation coincide with large up-projection activations. Our 2-D quantization in the backward pass and selective BF16 fallback for persistent outlier channels (identified via running $\ell_2$ norms) effectively stabilize these three interdependent pathways while keeping $>$90\% of GEMMs in FP4. Profiling indicates that the down-projection alone accounts for 45-–55\% of FFN quantization error, justifying its prioritization in our compensation budget.

\subsection{Frobenius Energy Analysis}
\label{app:frobenius_energy}

\begin{figure*}
    \centering
    \includegraphics[width=.65\linewidth]{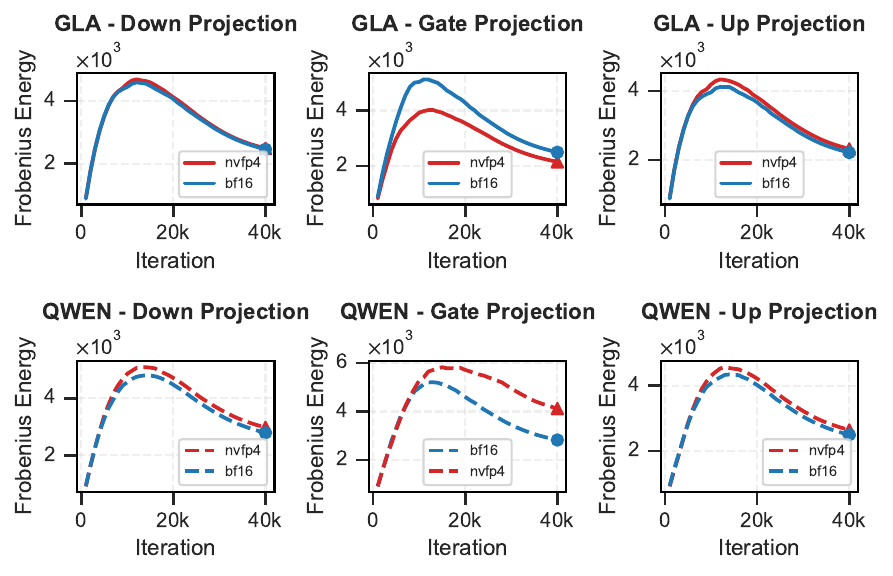}
    \caption{\textbf{Frobenius energy dynamics for FFN projections.} NVFP4 induces energy collapse in GLA and numerical instability in Qwen (notably in Gate projections). Our CHON recipe stabilizes these trajectories to closely match the BF16 baselines.}
    \label{fig:frobenius_energy_analysis}
\end{figure*}

\paragraph{Frobenius Energy Analysis.}
Fig.~\ref{fig:frobenius_energy_analysis} presents the Frobenius energy analysis across different projection layers throughout training. 
We observe distinct behaviors for GLA and Qwen models. For GLA, the standard NVFP4 setting (red curve) exhibits a significant drop in Frobenius energy compared to the BF16 baseline (blue curve) across all projections, suggesting a potential underestimation or damping of weight magnitudes. In contrast, incorporating HCP (green curve) effectively recovers the energy profile, aligning it closely with the BF16 trajectory.
For Qwen, particularly in the Gate projection, the NVFP4 setting leads to an energy explosion, indicating numerical instability. The HCP mechanism successfully suppresses this divergence, maintaining a stable energy evolution comparable to the precision baseline.
These results highlight that HCP acts as a bi-directional stabilizer—preventing both energy collapse (as in GLA) and explosion (as in Qwen)—thereby ensuring that FP4 training dynamics remain consistent with full-precision counterparts.

\subsection{Flush-to-Zero Dynamic Analysis}
\label{app:ftz_dynamic}

\begin{figure}[t]
    \centering
    \includegraphics[width=0.8\linewidth]{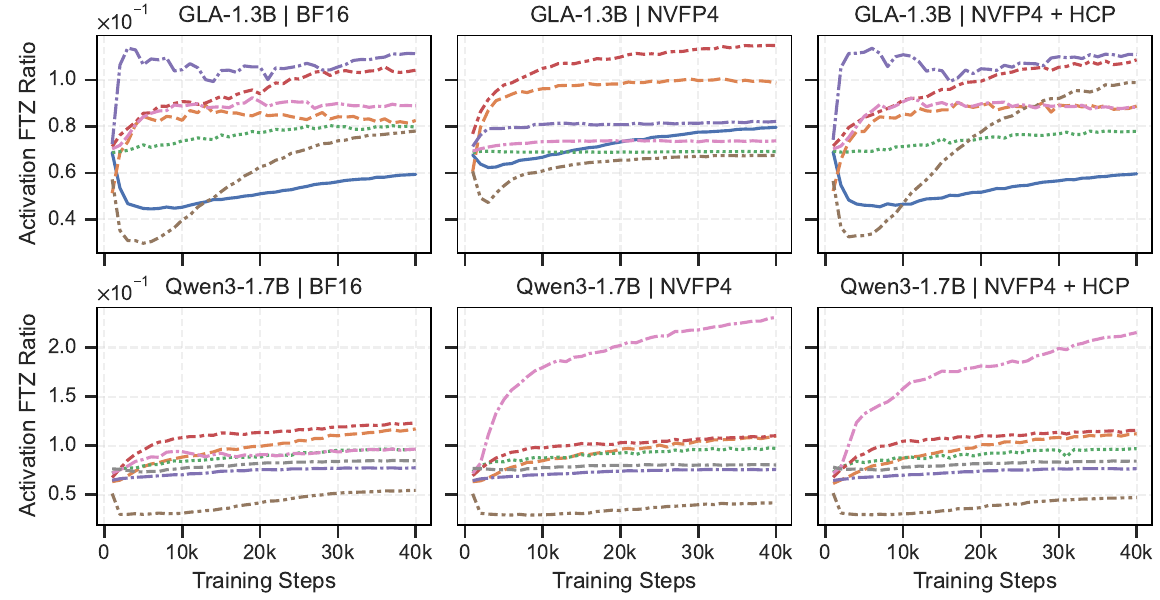}
    \caption{\textbf{Evolution of Activation Flush-to-Zero (FTZ) rates throughout training.} The FTZ rate (\texttt{ftz\_nvfp4}) quantifies the proportion of activation values that collapse to zero due to underflow in the NVFP4 format. Compared to the vanilla NVFP4 baseline, the \textbf{CHON} recipe (NVFP4 + HCP) significantly suppresses these underflow events, bringing the distribution closer to the BF16 baseline.}
    \label{fig:act-ftz-ablation}
\end{figure}

\begin{figure}[t]
    \centering
    \includegraphics[width=0.8\linewidth]{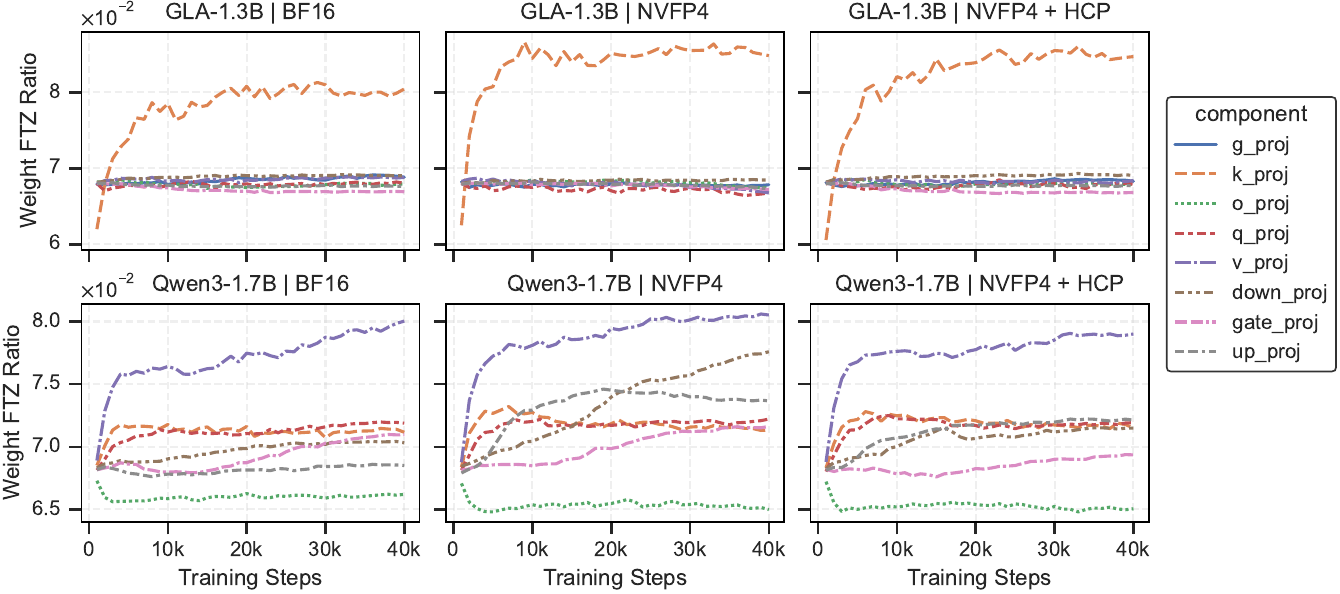}
    \caption{\textbf{Weight Flush-to-Zero (FTZ) dynamics across training stages.} Unlike activations, weight tensors exhibit significantly lower FTZ ratios (note the $10^{-2}$ scale), indicating they are inherently more robust to underflow-to-zero events in the NVFP4 format. The minimal separation between the BF16 baseline, vanilla NVFP4, and the CHON recipe curves confirms that weight distributions are well-behaved and effectively captured by the FP4 E2M1 representable range.}
    \label{fig:weight-ftz-ablation}
\end{figure}

We analyze the \,\texttt{ftz\_nvfp4}\, metric, defined as the fraction of tensor elements that become exactly zero after NVFP4 quantization (i.e., \emph{underflow-to-zero}). Higher FTZ indicates more irreversible loss of ``small-but-nonzero'' signals and is therefore a direct indicator of underflow severity in FP4 training.

\paragraph{Activations are the primary underflow bottleneck.}
Comparing Activation FTZ (Fig.~\ref{fig:act-ftz-ablation}) and Weight FTZ (Fig.~\ref{fig:weight-ftz-ablation}), we observe a clear scale gap: activation FTZ typically lies in the $\sim\!0.05$--$0.20$ range, whereas weight FTZ concentrates in a much narrower band (around $\sim\!0.065$--$0.085$). This suggests that, under NVFP4, activations place substantially more probability mass near zero and/or exhibit a stronger mismatch to the limited FP4 dynamic range, making them more prone to underflow-to-zero than weights.

\paragraph{CHON suppresses activation underflow induced by NVFP4.}
Across both models, enabling CHON consistently reduces the FTZ increases introduced by the baseline NVFP4 recipe. In particular, for GLA-1.3B, the elevated activation FTZ in attention-related projections (e.g., \texttt{q\_proj}/\texttt{k\_proj}) under NVFP4 is largely restored toward the BF16 baseline once CHON is applied, indicating that the compensation prevents excessive contraction of small values into exact zero. For Qwen3-1.7B, the \texttt{up\_proj} activation FTZ exhibits the most pronounced degradation under NVFP4 (reaching $>0.20$ in the worst case), and CHON substantially reduces this spike, though \texttt{up\_proj} remains the dominant contributor.

\paragraph{Weights are comparatively robust under NVFP4.}
Weight FTZ curves show limited separation across BF16, NVFP4, and NVFP4+CHON (Fig.~\ref{fig:weight-ftz-ablation}), indicating that weight distributions are relatively stable and largely covered by the FP4 E2M1 representable range. Consequently, the primary underflow bottleneck in NVFP4 training arises from activations rather than weights.

\paragraph{Training dynamics and early-warning signal.}
We further note that, for Qwen3-1.7B, even with CHON, the activation FTZ of \texttt{up\_proj} can gradually increase over training steps. This suggests that activation distributions can drift during training (e.g., increasing sparsity or accumulating small-magnitude activations), causing underflow-to-zero risk to compound over time. Therefore, tracking FTZ at the component level provides a practical early-warning signal: sustained FTZ growth in a specific module may foreshadow instability induced by low-precision quantization.

\subsection{Gating as Outlier Source}
\label{app:gk_outlier_analysis}

Vanilla Linear Attention~\cite{aueb2016one, qin2022cosformer, sun2023vicinity} suffers from the ``attention dilution'' issue~\cite{qin2022devil}, where attention scores are spread over long sequences and local neighborhood structure is under-emphasized, whereas softmax attention often becomes overly local and can produce sharp, outlier-prone distributions due to exponential normalization. To alleviate these issues, recent linear attention methods~\cite{nahshan2023linear, zhang2024hedgehog} adopt an element-wise exponential feature map, i.e., $\phi(x)=\exp(t x)$ with $t>2$, to better concentrate attention weights; in this work we choose Gated Linear Attention (GLA) as a representative for its simplicity and efficiency.

Given input projections for queries ($Q$), keys ($K$), and values ($V$) in $\mathbb{R}^{T \times d_h}$ for a sequence of length $T$, GLA computes the output $y_t$ at each timestep $t$ as follows. First, it applies a positive feature map $\phi: \mathbb{R}^{d_h} \to \mathbb{R}^{d_f}$ to queries and keys. Then, it maintains two streaming accumulators:
\begin{align}
S_t &= \sum_{i=1}^{t} \phi(K_i) V_i^\top \in \mathbb{R}^{d_f \times d_h} \label{eq:state_matrix} \\
z_t &= \sum_{i=1}^{t} \phi(K_i) \in \mathbb{R}^{d_f} \label{eq:normalizer}
\end{align}
These accumulators can be updated recursively: $S_t = S_{t-1} + \phi(K_t)V_t^\top$ and $z_t = z_{t-1} + \phi(K_t)$. The normalized output $\tilde{y}_t$ is then computed as:
\begin{align}
\tilde{y}_t = \frac{\phi(Q_t)^\top S_t}{\phi(Q_t)^\top z_t + \varepsilon} \in \mathbb{R}^{d_h}
\end{align}
Finally, a data-dependent gate $g_t$ modulates the output:
\begin{align}
y_t = g_t \odot \tilde{y}_t, \quad \text{where} \quad g_t = \sigma(W_g Q_t + b_g)
\end{align}
Here, $\sigma$ is a sigmoid function, and $\odot$ is the elementwise product. This formulation is associative, allowing for computation with $O(T)$ complexity and constant memory, while avoiding the numerical instability of Softmax.

To understand the emergence of extreme outliers in Gated Linear Attention (GLA), we analyze the parameterization of the state decay mechanism and its impact on the recurrent pathway. In GLA, the hidden state $\mathbf{S}_t \in \mathbb{R}^{d \times d}$ is updated via a cumulative sum modulated by a data-dependent decay factor $\lambda_t \in (0, 1]$:
\begin{equation}
\mathbf{S}_t = \lambda_t \odot \mathbf{S}_{t-1} + \mathbf{k}_t \mathbf{v}_t^\top,
\label{eq:state_update}
\end{equation}
where $\lambda_t$ governs the retention of historical information.

\paragraph{Gating Parameterization Bottleneck}
In the implementation of GLA, the decay factor $\lambda_t$ is derived from the output of the gated-key projection $\hat{g}_t = \text{proj}_{gk}(\mathbf{x}_t)$ through a composition of a log-sigmoid activation and a scaling constant $\gamma$ (typically $\gamma = 16$):
\begin{equation}
\lambda_t = \exp\left( \frac{\log \sigma(\hat{g}_t)}{\gamma} \right) = \left[ \sigma(\hat{g}_t) \right]^{1/\gamma},
\label{eq:lambda_param}
\end{equation}
where $\sigma(x) = (1 + e^{-x})^{-1}$ is the sigmoid function. This specific parameterization introduces significant numerical challenges, particularly in the extreme ends of the decay range.

\textbf{Lower Bound for State Resetting.} To achieve rapid forgetting of local context, $\lambda_t$ must be sufficiently small (e.g., $\lambda_t \approx \epsilon = 10^{-3}$). Solving for the pre-activation $\hat{g}_t$ in Eq.~\ref{eq:lambda_param} yields $\sigma(\hat{g}_t) = \epsilon^\gamma$. Given $\gamma = 16$, $\epsilon^\gamma$ reaches the order of $10^{-48}$. Using the asymptotic approximation $\sigma(x) \approx e^x$ for $x \to -\infty$, we obtain $\hat{g}_t \approx \gamma \ln(\epsilon)$. For $\epsilon = 10^{-3} \approx e^{-6.9}$, this implies $\hat{g}_t \approx -110.4$. This derivation explains the observed structural lower bound of approximately $\mathbf{-120}$ in $\text{proj}_{gk}$. 

\textbf{Upper Bound and Gradient Vanishing.} Conversely, for long-term modeling, $\lambda_t \to 1$ requires $\hat{g}_t$ to enter the positive saturation zone. The sensitivity $\frac{\partial \lambda_t}{\partial \hat{g}_t} = \frac{1}{\gamma} \sigma(\hat{g}_t)^{1/\gamma} (1 - \sigma(\hat{g}_t))$ vanishes exponentially as $\hat{g}_t \to \infty$. To effect infinitesimal increases in retention near $\lambda \approx 1$, the optimizer must push $\hat{g}_t$ significantly further into the saturation region, leading to the observed positive tail of approximately $\mathbf{80}$.

\paragraph{Scaling Asymmetry and Signal Recovery in $\text{proj}_k$}
Empirically, we observe that the key projection $\mathbf{k}_t = \text{proj}_k(\mathbf{x}_t)$ also exhibits abnormal magnitudes ($\approx 300$), significantly exceeding the query projection $\mathbf{q}_t$ ($\approx 50$). We attribute this to two structural factors:
\begin{itemize}
    \item \textbf{Numerical Compensation for Asymmetric Scaling:} In hardware-efficient GLA kernels, the attention scale $\tau = d^{-1/2}$ is typically applied solely to queries ($\tilde{\mathbf{q}}_t = \tau \mathbf{q}_t$), while keys remain unscaled during the state update in Eq.~\ref{eq:state_update}. To maintain a balanced dot-product magnitude $\langle \tilde{\mathbf{q}}, \mathbf{k} \rangle$, the optimizer compensates for the $d^{-1/2}$ reduction in $\mathbf{q}$ by expanding the dynamic range of $\mathbf{k}$.
    \item \textbf{SNR Preservation in Recurrent States:} When performing a ``state reset'' ($\lambda_t \to 0$), the term $\lambda_t \mathbf{S}_{t-1}$ vanishes. To ensure the new information $\mathbf{k}_t \mathbf{v}_t^\top$ accurately updates the state matrix against quantization noise and residual gradients, $\mathbf{k}_t$ must maintain a high Signal-to-Noise Ratio (SNR). This signal recovery pressure forces $\text{proj}_k$ to generate large-magnitude activations to dominate the state-transition dynamics.
\end{itemize}

\paragraph{Numerical Implications for NVFP4}
The extreme dynamic range $\mathcal{R} \approx [-120, 300+]$ in the recurrent pathway poses a fatal challenge for 4-bit quantization. A uniform scaling factor covering $\mathcal{R}$ results in a quantization step $\Delta \approx 420 / 16 = 26.25$. Such a coarse resolution is larger than the entire dynamic range of standard hidden layers (e.g., $[-20, 20]$), effectively annihilating the model's ability to switch between short-term and long-term memory. This observation necessitates a targeted stabilization strategy, such as Hot-Channel Patch (HCP), to preserve the high-magnitude signals in these critical gating and recurrent channels.

\begin{figure}[t]
    \centering
    \includegraphics[width=.85\linewidth]{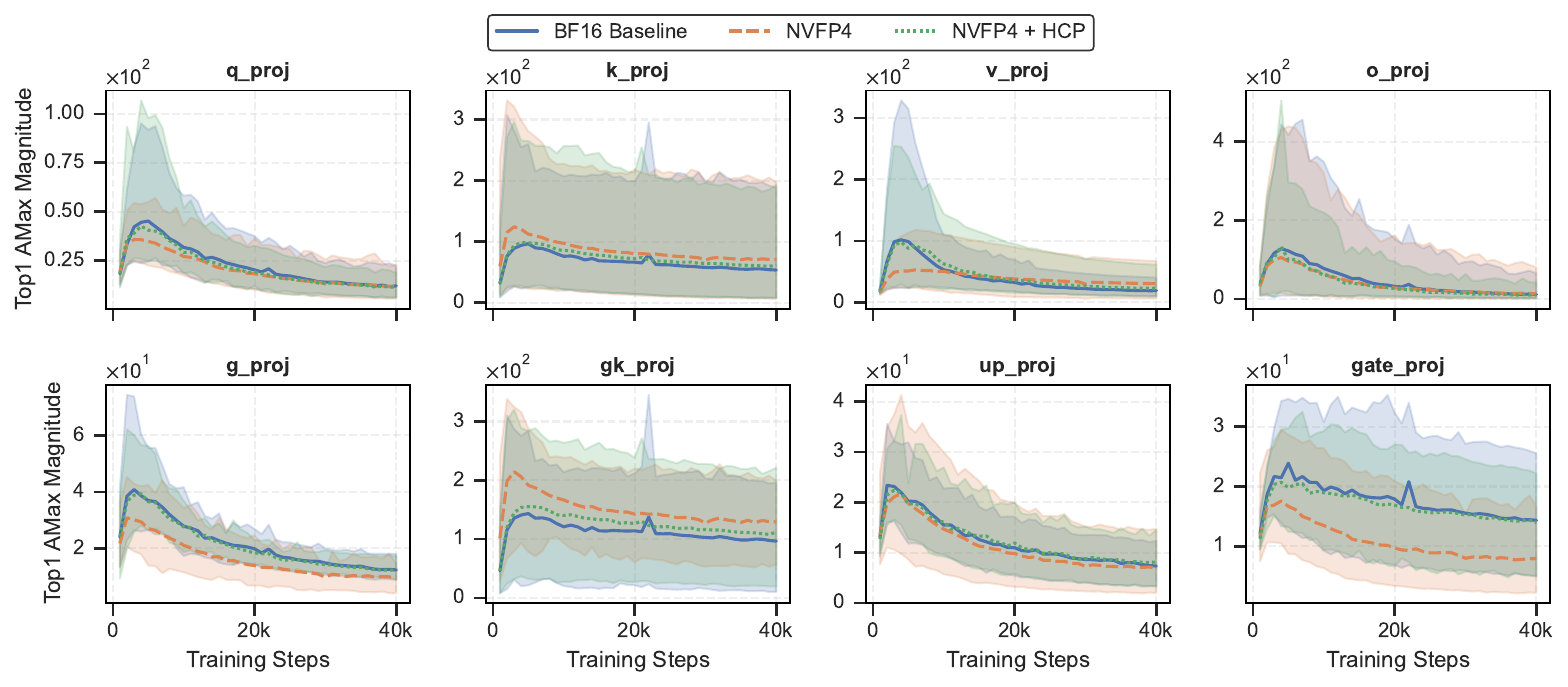}
    \caption{\textbf{Evolution of Top-1 activation magnitudes across GLA components.} The gated-key projection (\texttt{gk\_proj}) exhibits the most extreme outliers (magnitudes up to 350), which we attribute to the log-sigmoid parameterization required for state resetting ($\lambda_t \approx 0$). This extreme dynamic range imposes significant numerical pressure on downstream sensitive operations like the output projection (\texttt{o\_proj}).}
    \label{fig:gla_gate_dynamics}
\end{figure}

\subsection{Normalization Impact Analysis}
\label{app:norm_impact}

As illustrated in Fig.~\ref{fig:norm_comparison}, we visualize the distribution of the RMSNorm scale parameter $\gamma$. We find that GLA typically maintains $\gamma<1$, thereby limiting activation amplification and yielding smoother activation statistics. In contrast, Qwen (Softmax Attention) exhibits $\gamma>1$, consistent with the need to counteract softmax-induced activation spikes; this norm-induced amplification produces systematic outliers that are difficult to represent under NVFP4’s limited dynamic range.

Fig.~\ref{fig:rmnsnorm_range} further compares normalization behavior across Softmax Attention (SA), Linear Attention (LA), and Hybrid Attention (HA) models, revealing several consistent trends. (i) The magnitude of $\gamma$ generally increases with layer depth, which helps explain the higher incidence of activation outliers in deeper layers of LLMs. (ii) LA models exhibit substantially smaller $\gamma$ values (typically $\approx 2$) than SA models (often reaching $\approx 10^{2}$), indicating a markedly reduced tendency to amplify activations. (iii) These depth-wise scaling patterns are broadly consistent across model sizes, as evidenced by similar trends between the 1.3B and 2.7B variants of DeltaNet and Mamba. (iv) Among HA architectures, Qwen3-Next maintains a comparatively low-$\gamma$ profile throughout the network, which constrains activation ranges and improves quantization robustness. (v) RMSNorm layers immediately following gating projection show a more pronounced increase in $\gamma$ with depth, suggesting that gating can indirectly exacerbate activation magnitudes through normalization.

\begin{figure}[htbp]
    \centering
    \begin{subfigure}[b]{0.32\linewidth}
        \centering
        \includegraphics[width=\linewidth]{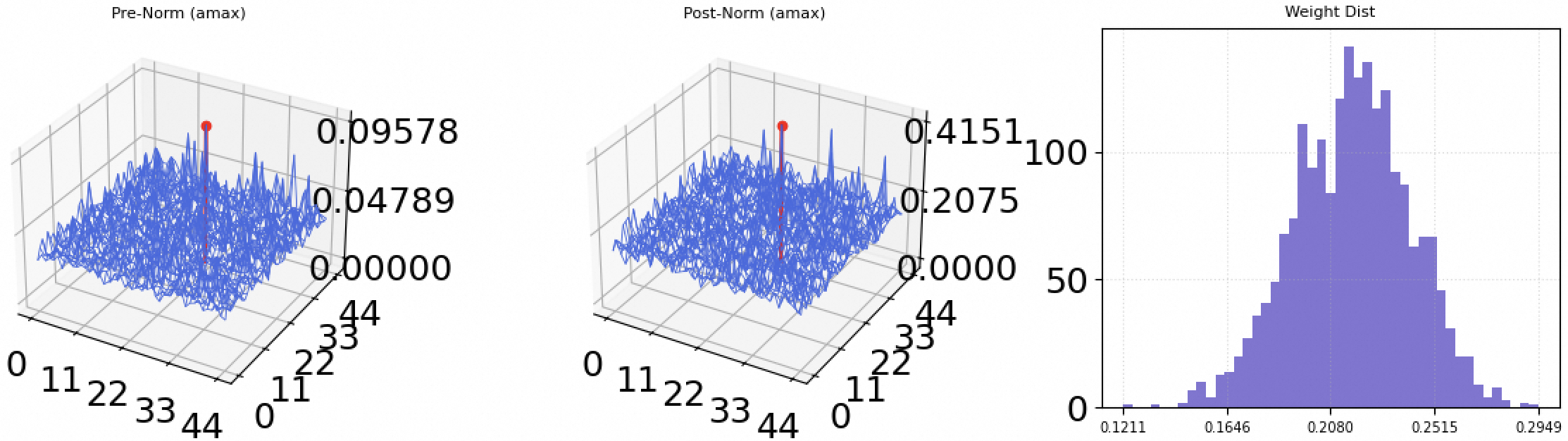}
        \caption{GLA: Attention Norm}
    \end{subfigure}
    \hfill
    \begin{subfigure}[b]{0.32\linewidth}
        \centering
        \includegraphics[width=\linewidth]{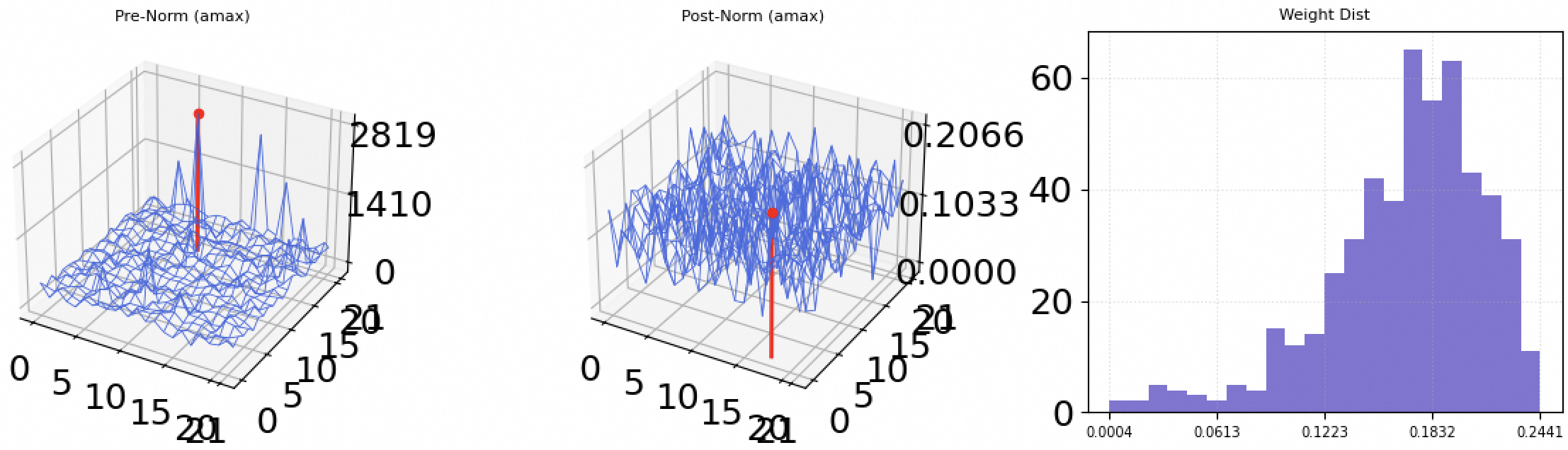}
        \caption{GLA: Gating Norm}
    \end{subfigure}
    \hfill
    \begin{subfigure}[b]{0.32\linewidth}
        \centering
        \includegraphics[width=\linewidth]{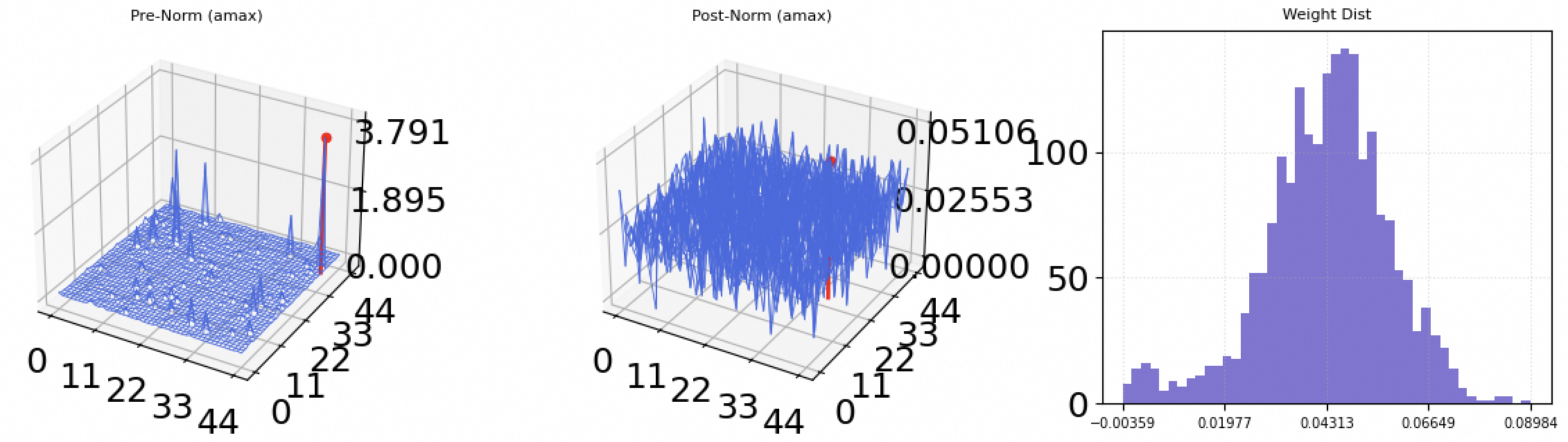}
        \caption{GLA: MLP Norm}
    \end{subfigure}

    \vspace{0.8em} 

    \begin{subfigure}[b]{0.32\linewidth}
        \centering
        \includegraphics[width=\linewidth]{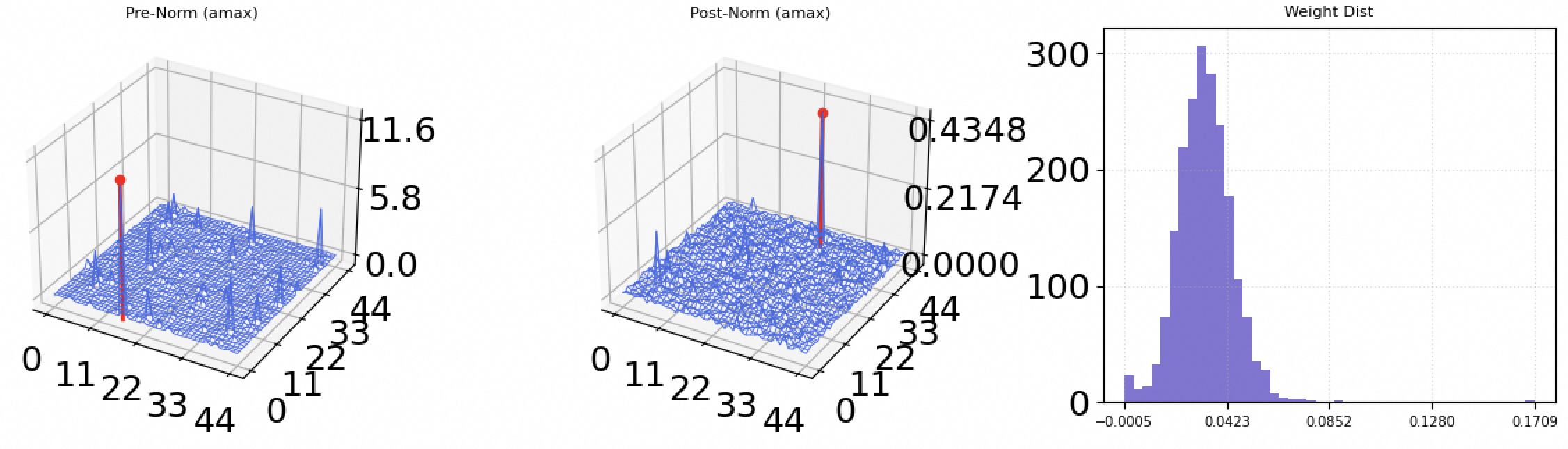}
        \caption{Qwen: Post-Attn Norm}
    \end{subfigure}
    \hfill
    \begin{subfigure}[b]{0.32\linewidth}
        \centering
        \includegraphics[width=\linewidth]{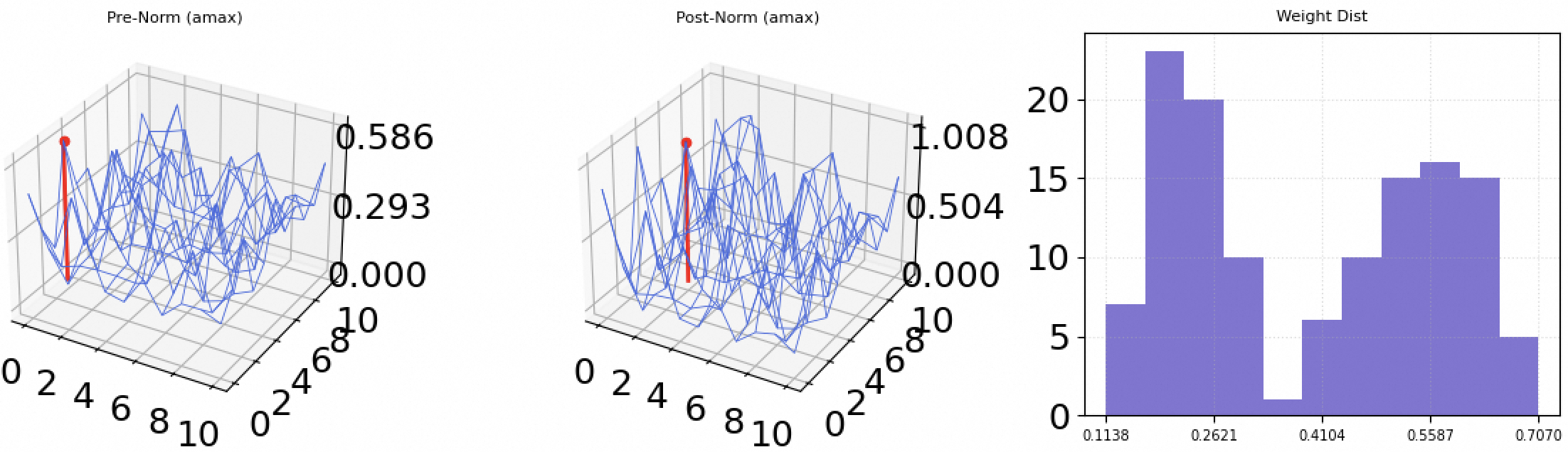}
        \caption{Qwen: Key Norm}
    \end{subfigure}
    \hfill
    \begin{subfigure}[b]{0.32\linewidth}
        \centering
        \includegraphics[width=\linewidth]{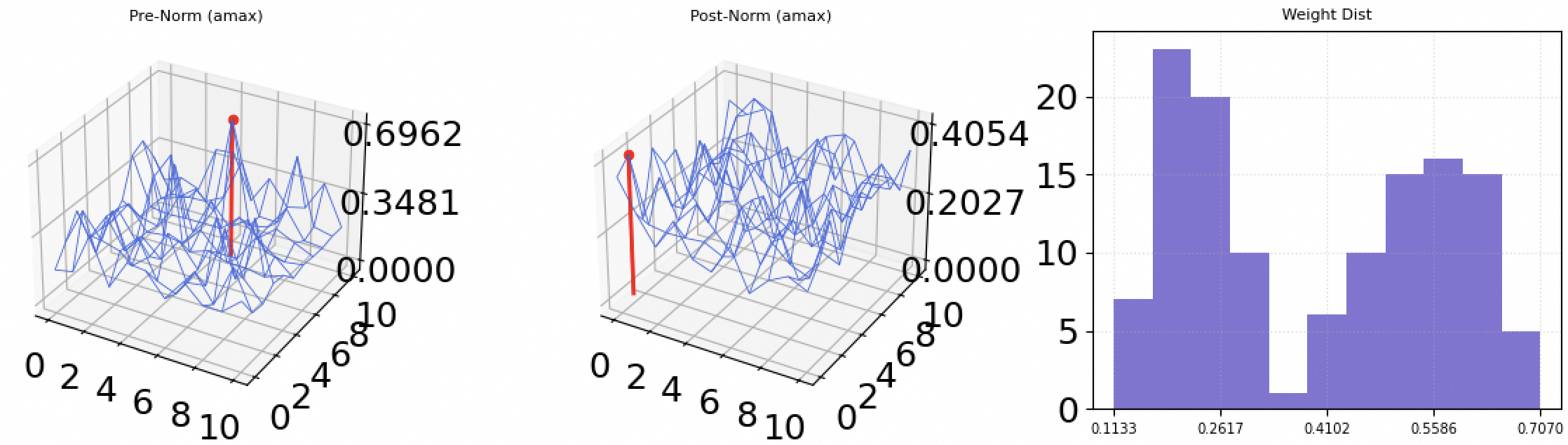}
        \caption{Qwen: Query Norm}
    \end{subfigure}
    
    \caption{\textbf{Comparison of normalization and $\gamma$ distributions in GLA (top) vs.\ Qwen (bottom).} Each panel displays the Pre-Norm (left) and Post-Norm (middle) activation landscapes, along with the \textbf{learnable scaling factor $\gamma$ distribution} (right).}
    \label{fig:norm_comparison}
\end{figure}

\begin{figure}[t]
    \centering
    \includegraphics[width=0.99\linewidth]{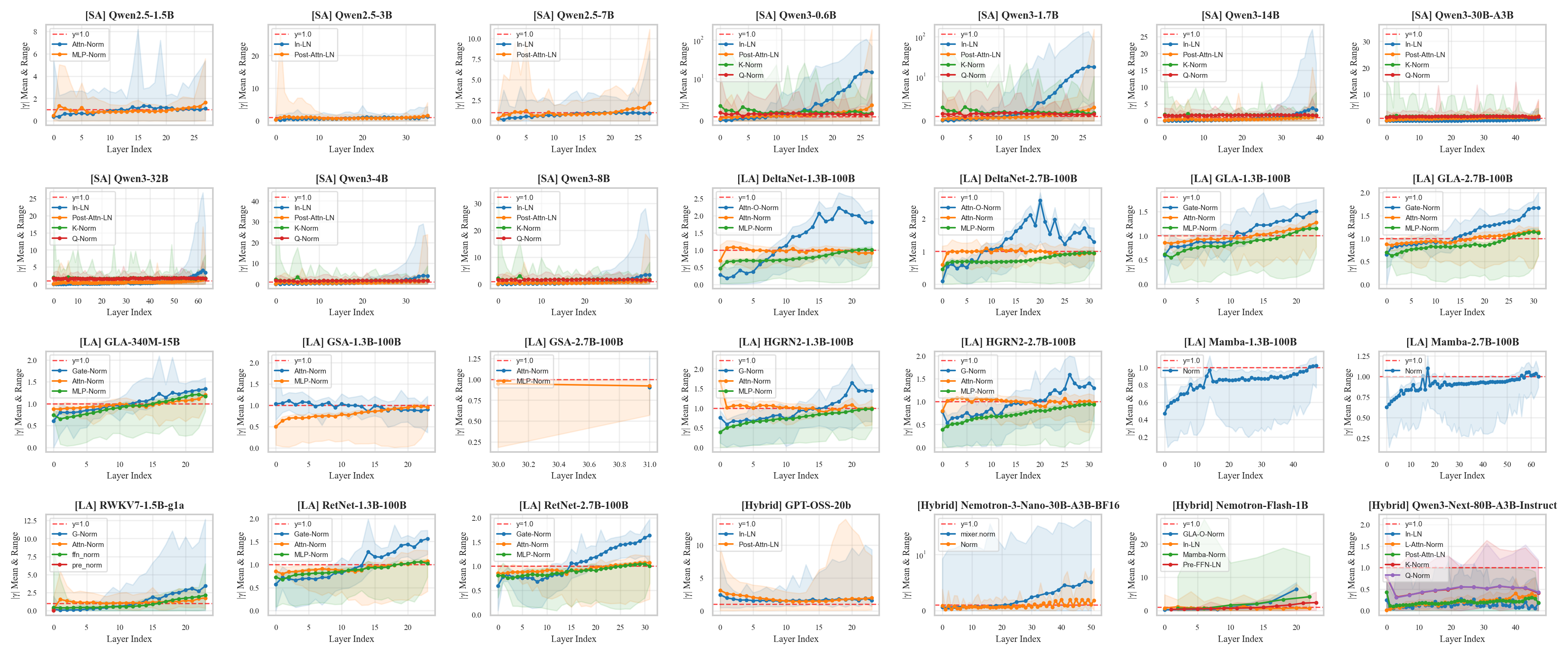}
    \caption{\textbf{Analysis of normalization distributions} across Softmax Attention (SA), Linear Attention (LA), and Hybrid Attention (HA) architectures.}
    \label{fig:rmnsnorm_range}
\end{figure}

\subsection{Empirical Analysis of Representational Overlap}
\label{app:superposition}

\begin{figure}[ht]
    \centering
    \includegraphics[width=.65\linewidth]{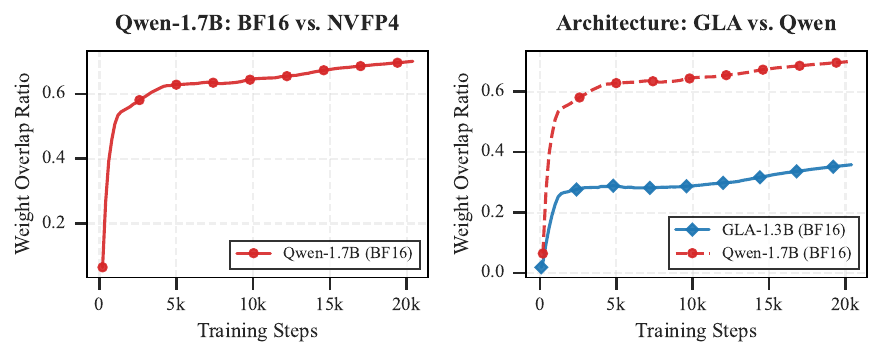}
    \vspace{-10pt}
    \caption{\textbf{Weight overlap dynamics and feature superposition analysis.} The weight overlap ratio serves as a proxy for superposition density and feature interference. (Left) Precision comparison: \textbf{NVFP4} quantization suppresses representational overlap compared to \textbf{BF16}, as the limited numerical resolution forces the model towards a quasi-orthogonal state to minimize interference noise. (Right) Architectural comparison: \textbf{GLA} (Linear Attention) maintains significantly lower overlap than \textbf{Qwen} (Softmax Attention), indicating an inductive bias toward feature decoherence.}
    \label{fig:overlap_plots}
\end{figure}

To investigate the mechanistic drivers of neural scaling, we analyze the geometric properties of the language model head (\texttt{lm\_head}) through the lens of feature superposition. Following the theoretical framework proposed by \citet{Liu2025SuperpositionYR}, we utilize the weight overlap magnitude---defined as the squared Frobenius norm of the off-diagonal correlation matrix of representation vectors---as a proxy for feature interference and superposition density.

\paragraph{Quantization-Induced Geometric Regularization.} A striking observation from our experiments is the disparity in overlap dynamics between BF16 and NVFP4 precision regimes (see Fig.~\ref{fig:overlap_plots}, Left). While BF16 training facilitates a rapid ascent into a high-overlap regime, characterizing \textit{strong superposition} where features are densely packed through complex interference-canceling biases, NVFP4 exhibits significant suppression of representational overlap.

We hypothesize that 4-bit quantization acts as a \textit{stochastic geometric bottleneck}. The limited numerical resolution of NVFP4 (spanning only 16 discrete levels) precludes the fine-grained weight adjustments necessary to manage the high-interference environment inherent in dense superposition. Consequently, the model is incentivized to converge toward a \textit{quasi-orthogonal} representational state to minimize cross-feature noise that cannot be effectively mitigated by low-precision error correction. This lower overlap suggests that while low-precision models may sacrifice total feature capacity (superposition density), they gain representational robustness by enforcing clearer geometric separation in the hidden space.

\paragraph{Architectural Influence on Superposition Regimes.}

Beyond numerical precision, the underlying architecture significantly dictates the evolution of the superposition state. Our comparison between the Qwen (Transformer-based) and GLA (Gated Linear Attention) architectures reveals distinct geometric trajectories (Fig.~\ref{fig:overlap_plots}, Right). 

The GLA architecture maintains a substantially lower weight overlap compared to the standard Transformer (Qwen-1.7B), even when trained under identical precision (BF16). This phenomenon suggests that the gating mechanisms and linear recurrence in GLA impart a structural inductive bias toward \textit{feature decoherence}. Unlike the global receptive field of \texttt{Softmax Attention}, which often leads to ``cluttered'' hidden spaces with high overlap to maximize associative capacity, the gated recurrent nature of GLA may necessitate more discrete, independent feature tracking. 

\paragraph{Implications for Scaling and Capacity.} The observed dynamics have profound implications for neural scaling laws. According to the theory that loss scales inversely with model dimension ($L \propto 1/m$) due to geometric interference, the lower overlap in NVFP4 and GLA indicates a different coefficient in the scaling function. Specifically, the lower overlap in NVFP4 suggests that quantization noise forces the model into a \textit{resolution-limited} regime earlier than its high-precision counterparts. However, the stability of the overlap magnitude in NVFP4 during late-stage training implies that the model reaches an equilibrium between capacity and precision-induced interference. For future large-scale deployments, these results suggest that low-precision formats like NVFP4 may inherently favor architectures with higher dimensions but lower superposition density, shifting the optimal balance of the width-depth trade-off in the scaling law.

\subsection{Quantization Error Dynamics}
\label{app:quant_error_dynamics}

To comprehensively understand the behavior of NVFP4 quantization across different attention mechanisms, we conduct a systematic analysis of quantization errors during training. We select GLA-1.3B as the representative model for \texttt{Linear Attention} architectures and Qwen-1.7B as the representative model for traditional \texttt{Softmax Attention} mechanisms. Throughout training, we monitor the mean squared error (MSE) between full-precision and quantized values for both weights and activations across all linear projection layers.

Fig.~\ref{fig:mse_error} presents the quantization error evolution across different components during training. Our analysis reveals two critical observations that characterize the quantization dynamics under NVFP4:

\textbf{Observation 1: Magnitude Hierarchy of Quantization Errors.}
Across all linear projection layers (\texttt{q\_proj}, \texttt{k\_proj}, \texttt{v\_proj}, \texttt{o\_proj}, \texttt{gate\_proj}, \texttt{up\_proj}, \texttt{down\_proj}) in both models, we consistently observe that activation quantization errors are approximately 1--2 orders of magnitude larger than weight quantization errors. Specifically, activation MSE ranges from $0.08$ to $0.24$, while weight MSE remains between $2.4\times10^{-3}$ and $4.4\times10^{-3}$. This disparity stems from the inherently dynamic nature of activations, which exhibit greater absolute value ranges and more pronounced distributional variations compared to the relatively static weight parameters. The limited representational capacity of FP4 (only 16 discrete values) introduces larger quantization step sizes when accommodating the wider dynamic range of activations, even with dynamic scaling mechanisms.

\textbf{Observation 2: Non-monotonic Training Dynamics.}
Both weight and activation quantization errors exhibit a characteristic non-monotonic pattern during early training phases. The errors initially increase from 0 to approximately 1,000 training steps, reach a peak around 15,000 steps, and then sharply decrease and stabilize. This phenomenon can be attributed to the interaction between gradient dynamics and quantization precision constraints. During the initial training phase, aggressive gradient updates cause rapid expansion of parameter dynamic ranges as the model transitions from random initialization toward meaningful feature representations. The scaling factors in NVFP4 dynamically adjust to cover these expanding ranges, inevitably increasing quantization step sizes and thus errors. As training progresses beyond 15,000 steps, the model enters a convergence regime where weight updates diminish, distributions stabilize, and normalization layers (\texttt{LayerNorm}/\texttt{RMSNorm}) learn appropriate scaling parameters. Furthermore, the quantization-aware training process enables the model to implicitly adapt its representations to be more amenable to low-precision quantization, effectively suppressing outliers and aligning activation distributions with the nonlinear representational characteristics of FP4.

These observations demonstrate that despite the extreme precision constraints of NVFP4, the quantization errors remain bounded and exhibit healthy convergence behavior, validating the effectiveness of our quantization-aware training approach across both Linear and Softmax Attention architectures.

\begin{figure}[ht]
    \centering
    \begin{tabular}{c}
        \includegraphics[width=0.9\linewidth]{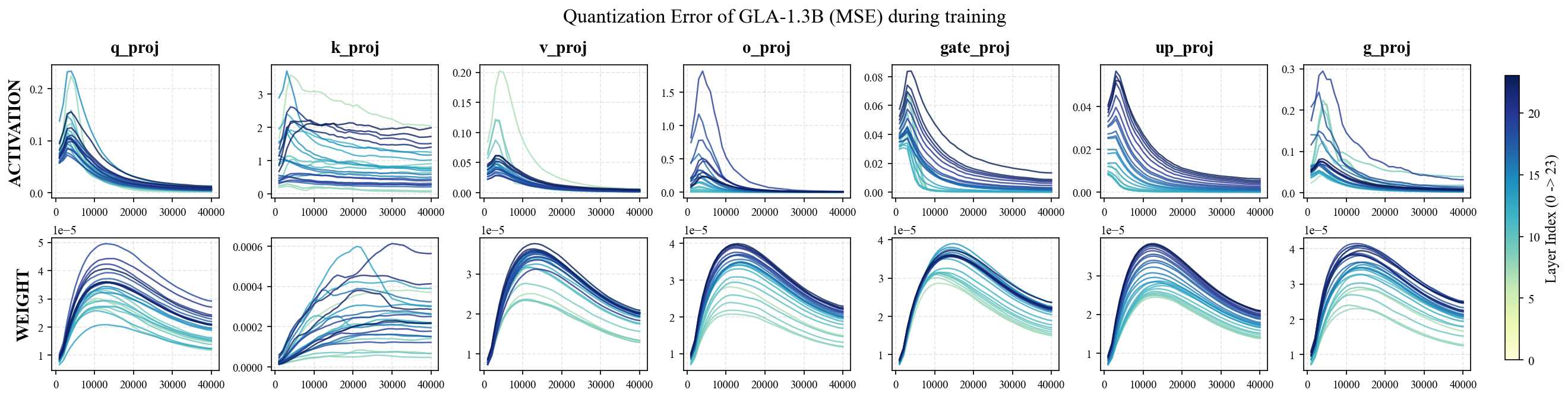}\\
        (a) GLA-1.3B\\
        \includegraphics[width=0.9\linewidth]{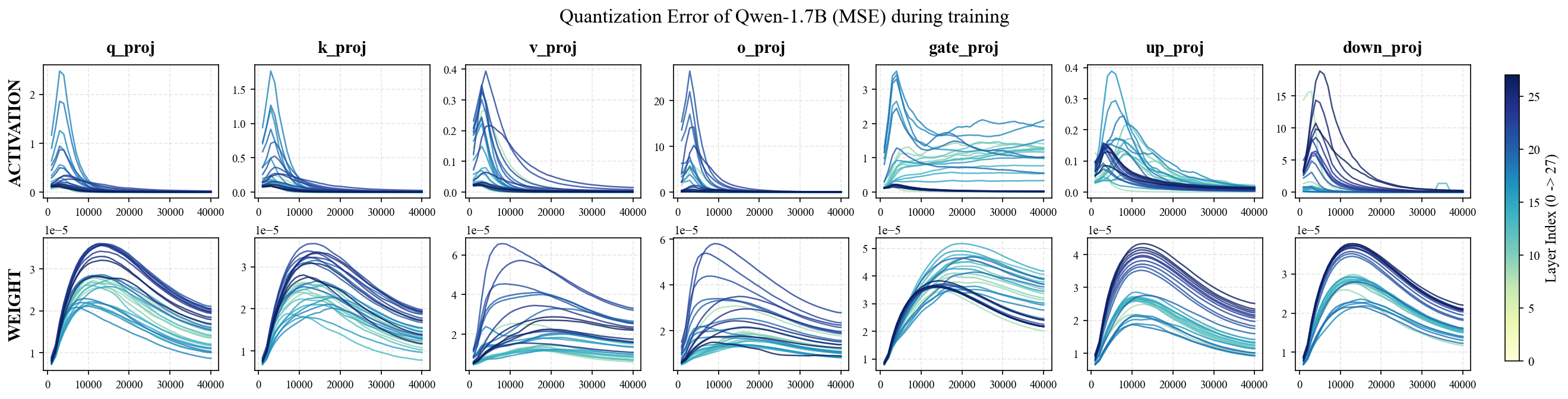}\\
        (b) Qwen-1.7B
    \end{tabular}
    \caption{Quantization error dynamics during NVFP4 training for (a) GLA-1.3B Linear Attention model and (b) Qwen-1.7B Softmax Attention model. Each subplot shows MSE between full-precision and quantized values for both activations (top row) and weights (bottom row) across different linear projection layers. Colors indicate different transformer layers (layer 0--23). Both models exhibit consistent patterns: \ding{172} activation errors are 1--2 orders of magnitude larger than weight errors; \ding{173} errors initially increase in the first 1k steps, peak around 15k steps, then sharply decrease as training stabilizes.}
    \label{fig:mse_error}
\end{figure}

\section{Discussions}
\label{app:discuss}

\paragraph{On the Quantization-Friendliness of Qwen3-Next.} 
Based on our internal analysis of the architectural design, we find that Qwen3-Next is inherently more suitable for quantization than its predecessors. The evidence for this improved quantization-friendliness is as follows:

\begin{enumerate}
    \item \textbf{QK Normalization:} QK-Norm constrains the dynamic range of queries and keys, effectively suppressing the formation of extreme outliers in the attention scores. Empirical data show that Qwen3 exhibits significantly lower kurtosis than Qwen2.5, indicating a lighter-tailed activation distribution that is easier to map to lower-bit integers.
    
    \item \textbf{Linear Attention Dynamics:} The adoption of Gated DeltaNet, a form of linear attention, alleviates numerical instability. Unlike traditional Softmax attention, which can amplify outliers exponentially, the linear transition in DeltaNet maintains a more uniform activation scale.
    
    \item \textbf{Robustness of Gated Mechanisms:} The gated-attention architecture acts as a natural regulator of feature magnitudes. By adaptively scaling information flow, the gating units mitigate the impact of high-variance features, making the model more robust to the precision loss induced by quantization.
    
    \item \textbf{Regularized RMSNorm Parameters:} We observe that the gain parameter ($\gamma$) of RMSNorm in Qwen3-Next is generally smaller than 1, achieved by applying weight decay specifically to the RMSNorm parameters during training. This constraint yields lower and more stable activation magnitudes, directly reducing quantization error in activation-sensitive layers.
\end{enumerate}

\paragraph{On the NVIDIA NVFP4 Recipe}

\begin{enumerate}
    \item \textbf{About Random Hadamard Transform.}
    Random Hadamard transforms (RHTs) mitigate outliers by redistributing activation magnitudes, reducing quantization error and training loss. A key distinction separates the NVIDIA NVFP4 recipe used here from MXFP4 approaches such as Quartet. Quartet applies the transform in both forward and backward passes to avoid overflow in MXFP4, which lacks a global scaling factor, and typically relies on a fused fast Walsh–Hadamard transform (FWHT) implementation. In contrast, our NVFP4-based CHON recipe uses a forward-free Hadamard design: NVFP4’s hierarchical microscaling accommodates dynamic-range variation without forward-pass rotation. This improves deployability by eliminating FWHT overhead and fusion complexity during inference. Restricting the transform to the backward pass therefore preserves hardware efficiency and enables standard deployment without de-rotation.
    
    \item \textbf{About the Last-4 Layer Strategy.}
    Our analysis shows that top-K activation magnitudes increase monotonically during training. We attribute this to (1) RMSNorm’s structural amplification and (2) cumulative outlier excitation across layers. Monitoring indicates that the largest activation spikes consistently occur in the final layers, making them most vulnerable to quantization-induced precision collapse. Quantization difficulty therefore increases with depth, motivating the ``Last4'' strategy (keeping the last four layers in BF16). This is not a heuristic: it reflects the observed magnitude evolution in the Transformer and protects the most volatile late-stage residual updates, preserving representational capacity during convergence.    
    
    \item \textbf{About Stochastic Rounding in the Backward Pass.} Unbiased gradients are crucial for stable low-bit optimization. Although MXFP4 training relies on stochastic rounding (SR) for unbiasedness, applying SR directly can add destabilizing variance. We find that the randomized Hadamard transform (RHT) stabilizes SR in the backward pass by scrambling gradients toward a more uniform distribution, reducing localized rounding variance. Together, RHT+SR yields unbiased gradients with controlled variance, which is key to narrowing the FP4–BF16 pretraining gap.

\end{enumerate}

\end{document}